\def\arXiv{1} 
\theoremstyle{plain}
\newtheorem{theorem}{Theorem}
\newtheorem*{theorem*}{Theorem}
\newtheorem{lemma}{Lemma}
\newtheorem{proposition}{Proposition}
\newtheorem*{lemma*}{Lemma}
\newtheorem{corollary}[theorem]{Corollary}
\newtheorem{assumption}{Assumption}
\newcommand{\notarxiv}[1]{foo}
\newcommand{\arxiv}[1]{bar}
\renewcommand{\arxiv}[1]{#1}%
\renewcommand{\notarxiv}[1]{\ignorespaces}%
\renewcommand{\arxiv}[1]{\ignorespaces}%
\renewcommand{\notarxiv}[1]{#1}%
\Crefname{algorithm}{Meta-Algorithm}{Program-codes}
\newcommand{\wt}[1]{\widetilde{#1}}  %
\newcommand{\norm}[1]{\left\|{#1}\right\|} %
\newcommand{\lone}[1]{\norm{#1}_1} %
\newcommand{\norms}[1]{\|{#1}\|} %
\newcommand{\R}{\mathbb{R}} %
\newcommand{\E}{\mathbb{E}} %
\renewcommand{\P}{\mathbb{P}}	%
\providecommand{\argmax}{\mathop{\rm argmax}} %
\providecommand{\sign}{\mathop{\rm sign}}
\newcommand{\half}{\frac{1}{2}}
\newcommand{\defeq}{\coloneqq}
\newcommand{\eqdef}{\eqqcolon}
\newcommand{\hess}{\nabla^2}
\newcommand{\eps}{\varepsilon}
\newcommand{\supind}[1]{^{\left({#1}\right)}}
\newcommand{\algo}{\mathsf{A}} \newcommand{\wun}{w} \newcommand{\Dkl}[2]{D_{\text{KL}}(#1\parallel#2)}
\newcommand{\ballp}{\mathcal{B}_\epsilon^p}
\newcommand{\balltwo}{\mathcal{B}_\epsilon^2}
\newcommand{\xadv}{x'}
\newcommand{\distrib}{P_{\mathsf{x,y}}}
\newcommand{\distribx}{P_{\mathsf{x}}}
\newcommand{\nlab}{n}
\newcommand{\nunlab}{\tilde{n}}
\newcommand{\intest}{\hat{\theta}_{\textup{intermediate}}}
\newcommand{\estim}{\hat{\theta}}
\newcommand{\finalest}{\estim_{\textup{final}}}
\newcommand{\labest}{\intest}
\newcommand{\semisupest}{\finalest}
\newcommand{\xun}{\tilde{x}} 
\newcommand{\Xun}{\tilde{X}}
\newcommand{\yun}{\tilde{y}}
\newcommand{\Yun}{\tilde{Y}}
\newcommand{\nn}{n_0}
\newcommand{\learner}{\algo}
\newcommand{\err}{\mathsf{err}_{\textup{standard}}}
\NewDocumentCommand{\errrob}{ O{\infty} O{\epsilon} 
}{\mathsf{err}_{\textup{robust}}^{#1,#2}}
\newcommand{\loss}{L_{\textup{standard}}}
\newcommand{\lossrob}{L_{\textup{robust}}}
\newcommand{\lossreg}{L_{\textup{reg}}}
\newcommand{\pgd}[3]{\texttt{PG}(#1, #2, #3)}
\newcommand{\pgdno}{\texttt{PG}}
\newcommand{\cw}{\texttt{CW}}
\newcommand{\rst}{\texttt{RST}}
\newcommand{\rstgen}[2]{\texttt{RST}_\texttt{#1}(\texttt{#2})}
\newcommand{\rststno}{\rst_\texttt{stab}}
\newcommand{\rstatno}{\rst_\texttt{adv}}
\newcommand{\rstat}[1]{\rst_\texttt{adv}(\texttt{#1})}
\newcommand{\rstst}[1]{\rst_\texttt{stab}(\texttt{#1})}
\newcommand{\sst}[1]{\texttt{SST}(\texttt{#1})}
\newcommand{\baseline}[2]{\texttt{Baseline}_\texttt{#1}(\texttt{#2})}
\newcommand{\pgdours}{\texttt{PG}_\texttt{Ours}}
\newcommand{\pgdtrades}{\texttt{PG}_\texttt{TRADES}}
\newcommand{\pgdmadry}{\texttt{PG}_\texttt{Madry}}
\newcommand{\wrn}[2]{\text{Wide ResNet #1-#2}}
\newcommand{\semisup}[2]{\texttt{rVAT}_{\texttt{#1}}(\texttt{#2})}
\newcommand{\semisupno}{\texttt{rVAT}}
\newcommand{\dev}[1]{{\footnotesize ~$\pm$ #1}}
\newcommand{\yair}[1]{{ \color{blue} Yair: #1}}
\newcommand{\aditi}[1]{{\bf \color{orange} Aditi: #1}}
\newcommand{\ludwig}[1]{\todo[color=green!40]{Ludwig: #1}}
\newcommand{\yairside}[1]{\todo[color=blue!10]{Yair: #1}}
\newcommand{\aditiside}[1]{\todo[color=orange!20]{Aditi: #1}}
\newcommand{\jcd}[1]{{\bf \color{blue} JCD: #1}}
\newcommand{\jcdcomment}[1]{\todo[color=yellow!10]{JCD: #1}}
\newcommand{\pl}[1]{\textcolor{red}{[PL: #1]}}
\newcommand{\plside}[1]{\todo[color=red!20]{Percy: #1}}
\renewcommand{\yair}[1]{}
\renewcommand{\aditi}[1]{}
\renewcommand{\ludwig}[1]{}
\renewcommand{\yairside}[1]{}
\renewcommand{\aditiside}[1]{}
\renewcommand{\jcd}[1]{}
\renewcommand{\jcdcomment}[1]{}
\renewcommand{\pl}[1]{}
\renewcommand{\plside}[1]{}
\newcommand\sN{\ensuremath{\mathcal{N}}}
\newcommand\sX{\ensuremath{\mathcal{X}}}
\newcommand\sY{\ensuremath{\mathcal{Y}}}
\newcommand\refeqn[1]{(\ref{eqn:#1})}
\titlespacing*{\paragraph}{0pt}{0.35\baselineskip}{1em}
\title{Unlabeled Data Improves Adversarial Robustness}
\newcommand{\printfnsymbol}[1]{%
	\textsuperscript{\@fnsymbol{#1}}%
}
\newcommand\blfootnote[1]{%
	\begingroup
	\renewcommand\thefootnote{}\footnote{#1}%
	\addtocounter{footnote}{-1}%
	\endgroup
}
\author
{
	  Yair Carmon\thanks{ ~Equal contribution.}\\
	  Stanford University\\
	  \texttt{yairc@stanford.edu}\\
	  \and
	  Aditi Raghunathan\printfnsymbol{1}\\
	  Stanford University\\
	  \texttt{aditir@stanford.edu}\\
	  \and
	  Ludwig Schmidt\\
	  UC Berkeley\\
	  \texttt{ludwig@berkeley.edu}\\
	  \and
	  Percy Liang\\
	  Stanford University\\
	  \texttt{pliang@cs.stanford.edu}\\
	  \and
	  John C.\ Duchi\\
	  Stanford University\\
	  \texttt{jduchi@stanford.edu}\\
}
\date{}
\begin{document}

\maketitle

\maketitle

\blfootnote{ Code and data are available on GitHub at 
\url{https://github.com/yaircarmon/semisup-adv} and on CodaLab at
	\url{https://bit.ly/349WsAC}.}

\begin{abstract}
We demonstrate, theoretically and empirically, that adversarial robustness 
can significantly benefit from semisupervised learning.  Theoretically, we 
revisit the simple Gaussian model of \citet{schmidt2018adversarially} that 
shows a sample complexity gap between standard and robust 
classification.  We prove that unlabeled data bridges this gap: a simple 
semisupervised learning procedure (self-training) achieves high robust 
accuracy using the same number of labels required for achieving high 
standard accuracy.
Empirically, we augment CIFAR-10 with 500K unlabeled images sourced 
from 80 Million Tiny Images and use robust self-training to outperform 
state-of-the-art robust accuracies by over 5 points in (i) $\ell_\infty$ 
robustness against several strong attacks via adversarial training and (ii) 
certified $\ell_2$ and $\ell_\infty$ robustness via randomized smoothing.  
On SVHN, adding the dataset's own extra training set with the labels 
removed provides gains of 4 to 10 points, within 1 point of the gain from 
using the extra labels. 
\end{abstract}
\section{Introduction}

The past few years have seen an intense research interest in making models
robust to adversarial examples \citep{szegedy2014intriguing, 
biggio2013evasion,biggio2018wild}.  
Yet despite a wide range of proposed
defenses, the state-of-the-art in adversarial robustness is far from satisfactory. Recent work points towards sample complexity as a
possible reason for the small gains in robustness: \citet{schmidt2018adversarially}
show that in a simple model, learning a classifier with non-trivial adversarially robust
accuracy requires substantially more samples than achieving good ``standard''
accuracy.
Furthermore, recent
empirical work obtains promising gains in robustness via transfer learning
of a robust classifier from a larger labeled  dataset~\citep{hendrycks2019pretraining}.  While both theory and
experiments suggest that more training data leads to greater robustness,
following this suggestion can be difficult due to the cost of gathering
additional data and especially obtaining high-quality labels.

To alleviate the need for carefully labeled
data, in this paper we study adversarial robustness through the lens of
 semisupervised learning. 
Our approach is motivated by two basic
observations.  First, adversarial robustness essentially asks that
predictors be stable around naturally occurring inputs.  Learning to satisfy
such a stability constraint should not inherently require labels.  Second, the
added requirement of robustness fundamentally alters the regime where
semi-supervision is useful.  
Prior work on semisupervised learning mostly focuses
on improving the standard accuracy by leveraging unlabeled data. 
However, in
our adversarial setting the labeled data alone already produce accurate (but
not robust) classifiers.  We can use such classifiers on the unlabeled data
and obtain useful \emph{pseudo-labels}, which directly suggests the use of
\emph{self-training}---one of the oldest frameworks for
semisupervised 
learning~\citep{scudder1965probability,chapelle2006semisupervised}, 
which applies a supervised training method on the 
pseudo-labeled data. 
We provide theoretical and experimental evidence that self-training is effective for  adversarial robustness.

The first part of our paper is theoretical and considers the simple 
$d$-dimensional Gaussian model \citep{schmidt2018adversarially} with 
$\ell_\infty$-perturbations of magnitude $\epsilon$.
We scale the model so that $\nn$ labeled examples allow for learning a classifier with nontrivial standard accuracy, and roughly $\nn\cdot  \epsilon^2\sqrt{d/\nn}$ examples are necessary for attaining any nontrivial robust accuracy. This implies a sample complexity gap in the high-dimensional regime $d \gg \nn  \epsilon^{-4}$. In this regime, we prove that self training with $O(\nn\cdot  \epsilon^2\sqrt{d/\nn})$ unlabeled data and just $\nn$ labels achieves high robust accuracy.
Our analysis provides a refined perspective on the sample complexity barrier in this model: the increased sample requirement is exclusively on unlabeled data.

Our theoretical findings motivate the second, empirical part of our paper, 
where we test the effect of unlabeled data 
and self-training on standard adversarial robustness benchmarks. We 
propose and experiment with robust self-training (RST), a natural 
extension of self-training that uses standard supervised training to obtain 
pseudo-labels and then feeds the pseudo-labeled data into a supervised 
training algorithm that targets adversarial robustness.
We use TRADES \citep{zhang2019theoretically} for \emph{heuristic} $\ell_\infty$-robustness, and stability training \citep{zheng2016improving} combined with randomized smoothing \citep{cohen2019certified} for \emph{certified} $\ell_2$-robustness.

For CIFAR-10~\cite{krizhevsky2009learningmultiple}, we obtain 500K unlabeled images by mining the 80 Million Tiny Images dataset~\citep{torralba2008million} with an image classifier.
Using RST on the CIFAR-10 training set augmented with the additional unlabeled data, we outperform state-of-the-art \emph{heuristic} $\ell_\infty$-robustness against strong iterative attacks by $7\%$.
In terms of \emph{certified} $\ell_2$-robustness, RST outperforms our fully supervised baseline by $5\%$ and beats previous state-of-the-art numbers by $10\%$.
Finally, we also match the state-of-the-art certified $\ell_\infty$-robustness, while improving on the corresponding standard accuracy by over $16\%$. We show that some natural alternatives such as virtual adversarial training~\cite{miyato2018virtual} and aggressive data augmentation do not perform as well as RST. We also study the sensitivity of RST to varying data volume and relevance.

Experiments with SVHN show similar gains in robustness with RST on semisupervised data. 
Here, we apply RST by removing the labels from the 531K extra training data and see $4$--$10\%$ increases in robust accuracies compared to the baseline that only uses the labeled 73K training set. Swapping the pseudo-labels for the true SVHN extra labels increases these accuracies by at most 1\%. This confirms that the majority of the benefit from extra data comes from the inputs and not the labels.

In independent and concurrent work,~\citet{uesato2019are, najafi2019robustness} and~\citet{zhai2019adversarially} also explore semisupervised learning for adversarial robustness. See \Cref{sec:related} for a comparison.

Before proceeding to the details of our theoretical results in Section~\ref{sec:theory}, we briefly introduce relevant background in Section~\ref{sec:setup}.
Sections~\ref{sec:approach} and~\ref{sec:experiments} then describe our 
adversarial self-training approach and provide comprehensive experiments 
on CIFAR-10 and SVHN. We survey related work in Section~\ref{sec:related} 
and conclude in Section~\ref{sec:conclusion}.

\section{Setup}
\label{sec:setup}

\paragraph{Semi-supervised classification task.} 
We consider the task of mapping input 
$x \in \sX\subseteq\R^d$ to label $y \in \sY$. Let $\distrib$ denote the underlying distribution of $(x,y)$ pairs, and let $\distribx$ denote its marginal on $\sX$.
Given training data consisting of (i) labeled examples $(X, Y) = (x_1, y_1), \hdots (x_{\nlab}, 
y_{\nlab}) \sim \distrib$ and (ii) unlabeled examples $\Xun=\xun_1, \xun_2, \hdots 
\xun_{\nunlab} \sim \distribx$,
the goal is to learn a classifier $f_\theta : \sX 
\to \sY$ in a model family parameterized by $\theta \in \Theta$. 

\paragraph{Error metrics.} 

The standard quality metric for classifier $f_\theta$ is its error probability,
\begin{align}
  \label{eqn:standarderror}
  \err(f_\theta) \defeq \P_{(x, y) \sim \distrib} \big(f_\theta(x)\ne y\big).
\end{align}
We also evaluate classifiers on their performance on \emph{adversarially perturbed inputs}. 
In this work, we consider perturbations in a $\ell_p$ norm ball of radius $\epsilon$ around the input, and define the corresponding robust error probability,
\begin{equation}
  \label{eqn:robusterror}
  \errrob[p](f_\theta) \defeq \P_{(x, y) \sim \distrib} \big(\exists \xadv \in \ballp(x), f_\theta(\xadv)\ne y \big)
  ~\mbox{for}~\ballp(x) \defeq \{ \xadv\in\sX \mid \norm{\xadv-x}_p \leq \epsilon \}. 
\end{equation}
In this paper we study $p=2$ and $p=\infty$. 
We say that a classifier $f_\theta$ has \emph{certified} $\ell_p$ accuracy 
$\xi$ when we can \emph{prove} that $\errrob[p](f_\theta) \le 1-\xi$.

\paragraph{Self-training.} 
Consider a supervised learning algorithm $\algo$ that maps a dataset $(X, Y)$ to parameter $\theta$. %
\emph{Self-training} is the straightforward extension of $\algo$ to a semisupervised setting, and consists of the following two steps. 
First, obtain an intermediate model $\intest = \algo(X, Y)$, and use it to generate \emph{pseudo-labels} $\yun_i = f_{\intest}(\xun_i)$ for $i\in[\nunlab]$. 
Second, combine the data and pseudo-labels to obtain a final model $\finalest = \algo( [X, \Xun], [Y, \Yun])$.
\section{Theoretical results}
\label{sec:theory}

In this section, we consider a simple high-dimensional model studied in~\cite{schmidt2018adversarially}, which is the only known formal example of an information-theoretic sample complexity gap between standard and robust classification. For this model, we demonstrate the value of unlabeled data---a simple self-training procedure achieves high robust accuracy, when achieving non-trivial robust accuracy using the labeled data alone is impossible.  

\paragraph{Gaussian model.} We consider a binary classification task where $\sX=\R^d$, $\sY = \{-1,1\}$, $y$ uniform on $\sY$ and $x|y\sim\sN(y \mu , \sigma^2 I)$ for a vector $\mu\in\R^d$ and coordinate noise variance $\sigma^2 > 0$.
We are interested in the standard error~\refeqn{standarderror} and robust error $\errrob$~\refeqn{robusterror} for $\ell_\infty$ perturbations of size $\epsilon$.

\paragraph{Parameter setting.}
We choose the model parameters to meet the following desiderata: (i) there 
exists a classifier that achieves very high robust and standard accuracies, 
(ii) using $\nn$ examples we can learn a classifier with non-trivial standard 
accuracy and (iii) we require much more than $\nn$ examples to learn a 
classifier with nontrivial robust accuracy. As shown 
in~\citep{schmidt2018adversarially}, the following parameter setting meets 
the desiderata,
\begin{equation}\label{eqn:scaling}
\epsilon\in (0,\tfrac{1}{2}),~~\norm{\mu}^2=d~~\text{and}~~
\frac{\norm{\mu}^2}{\sigma^2}= 
\sqrt{\frac{d}{\nn}} \gg \frac{1}{\epsilon^2}.
\end{equation}
When interpreting this setting it is useful to think of $\epsilon$ as fixed
and of $d/\nn$ as a large number, i.e.\ a highly overparameterized
regime.

\subsection{Supervised learning in the Gaussian model}

We briefly recapitulate the sample complexity gap described in~\citep{schmidt2018adversarially} for the fully supervised setting.

\paragraph{Learning a simple linear classifier.}
We consider linear classifiers of the form $f_\theta = \sign(\theta^\top x)$.
Given $\nlab$ labeled data $(x_1, y_1), \ldots, (x_{\nlab}, y_{\nlab})\stackrel{\text{iid}}{\sim}\distrib$, we form the following simple classifier
\begin{align}
  \label{eqn:labeled-estimator}
  \estim_{\nlab} \defeq \frac{1}{\nlab}\sum_{i=1}^{\nlab} y_i x_i.
\end{align}

We achieve nontrivial standard accuracy using $\nn$ examples; see
\Cref{sec:theory-sup} for proof of the following (as well as detailed rates of convergence).
\begin{restatable}{proposition}{restateSupSamples}
  \label{prop:supervised-samples}There exists a universal constant $r$ 
  such that for all $\epsilon^2 \sqrt{d/\nn} \ge r$,
  \[
  n\ge\nn \; \Rightarrow \; \E_{\estim_{n}}\err\left(f_{\estim_{n}}\right)\le\frac{1}{3}
  \;\; \text{ and }\;\;
  n\ge\nn \cdot 4 \epsilon^{2}\sqrt{\frac{d}{\nn}}
  \; \Rightarrow \; \E_{\estim_{n}}\errrob\left(f_{\estim_{n}}\right)\le 10^{-3}.
  \]
\end{restatable}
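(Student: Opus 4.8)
The plan is to reduce both error metrics to scalar ``margin'' quantities and then control those margins via Gaussian concentration. First I would record closed forms for the two errors of a linear classifier $f_\theta=\sign(\theta^\top x)$ under the model. Conditioning on $y$ and writing $x=y\mu+\sigma z$ with $z\sim\sN(0,I_d)$, the event $\sign(\theta^\top x)\ne y$ is $\{\theta^\top\mu+\sigma\,y\,\theta^\top z<0\}$, and since $y\,\theta^\top z\sim\sN(0,\ltwo{\theta}^2)$ this gives
\[
\err(f_\theta)=\Phi\!\left(-\frac{\theta^\top\mu}{\sigma\ltwo{\theta}}\right).
\]
For the $\ell_\infty$ robust error the worst-case perturbation solves $\min_{\linf{\delta}\le\epsilon}\theta^\top\delta=-\epsilon\lone{\theta}$, so a $y=1$ point is misclassified iff $\theta^\top x<\epsilon\lone{\theta}$, yielding
\[
\errrob(f_\theta)=\Phi\!\left(-\frac{\theta^\top\mu-\epsilon\lone{\theta}}{\sigma\ltwo{\theta}}\right)
\]
whenever the numerator is nonnegative (and $\errrob\ge\tfrac12$ otherwise). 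Thus it suffices to lower bound the standard margin $\theta^\top\mu/(\sigma\ltwo\theta)$ and the robust margin $(\theta^\top\mu-\epsilon\lone\theta)/(\sigma\ltwo\theta)$ at $\theta=\estim_n$.

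Next I would use the exact law of the estimator. Since $y_ix_i=\mu+\sigma\,y_iz_i$ with $y_iz_i\sim\sN(0,I_d)$ i.i.d., averaging gives $\estim_n=\mu+\tfrac{\sigma}{\sqrt n}g$ for a single $g\sim\sN(0,I_d)$. The three ingredients are $\estim_n^\top\mu=\norm{\mu}^2+\tfrac{\sigma}{\sqrt n}g^\top\mu$, the triangle bounds $\ltwo{\estim_n}\le\ltwo\mu+\tfrac{\sigma}{\sqrt n}\ltwo g$ and $\lone{\estim_n}\le\lone\mu+\tfrac{\sigma}{\sqrt n}\lone g$. On a high-probability event I would control these with the Gaussian tail for $g^\top\mu\sim\sN(0,\norm\mu^2)$ and Lipschitz concentration of $\ltwo g$ and $\lone g$ around $\sqrt d$ and $\sqrt{2/\pi}\,d$. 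The one nontrivial structural input is the Cauchy--Schwarz bound $\lone\mu\le\sqrt d\,\ltwo\mu=d$, valid for \emph{any} $\mu$ with $\norm\mu^2=d$; this is exactly what lets the argument avoid any assumption on the shape of $\mu$.

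I would then substitute the scaling $\norm\mu^2=d$, $\sigma^2=\sqrt{d\nn}$. For the standard claim, once $n\ge\nn$ the noise dominates the denominator, $\ltwo{\estim_n}\approx\tfrac{\sigma}{\sqrt n}\sqrt d$, while $\estim_n^\top\mu\approx d$, so the standard margin is $\gtrsim1$; since $\Phi(-1)<\tfrac13$ with room to spare, this delivers the first bound. For the robust claim, the key is that the threshold $n\ge\nn\cdot4\epsilon^2\sqrt{d/\nn}=4\epsilon^2\sqrt{d\nn}$ is calibrated so that $\epsilon\,\tfrac{\sigma}{\sqrt n}\le\tfrac12$. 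Then $\epsilon\lone{\estim_n}\le\epsilon d+\tfrac12\sqrt{2/\pi}\,d$, which (using $\epsilon<\tfrac12$) is strictly below $\estim_n^\top\mu\approx d$, so the robust numerator is $\gtrsim d$, while $\sigma\ltwo{\estim_n}\lesssim d^{3/4}\nn^{1/4}/\epsilon$. Hence the robust margin is $\gtrsim\epsilon(d/\nn)^{1/4}$, and the hypothesis $\epsilon^2\sqrt{d/\nn}\ge r$ upgrades this to $\gtrsim\sqrt r$.

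Finally I would pass from these pointwise margin bounds to bounds on the expected error via $\E\,\err\le\P(\text{bad event})+\Phi(-\text{margin on the good event})$, choosing the concentration event so its complement has negligible probability and taking the universal constant $r$ large enough that $\Phi(-c\sqrt r)\le10^{-3}$. I expect the robust case to be the main obstacle: the robust numerator $\estim_n^\top\mu-\epsilon\lone{\estim_n}$ is a difference of two $\Theta(d)$ quantities, so the constants must be tracked carefully, and it is precisely the factor-of-$4$ slack in the sample threshold (forcing $\epsilon\sigma/\sqrt n\le\tfrac12$) together with $\epsilon<\tfrac12$ that keeps this difference a positive constant fraction of $d$. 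The standard case is comparatively routine.
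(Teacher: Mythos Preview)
Your proposal is correct and reaches the same margin scaling, but the route differs from the paper's in one structural way. The paper does not bound $\lone{\estim_n}$ at all: it applies Cauchy--Schwarz directly to the estimator, $\lone{\estim_n}\le\sqrt{d}\,\ltwo{\estim_n}$, which collapses the robust error to
\[
\errrob(f_{\estim_n})\le Q\!\left(\frac{\mu^\top\estim_n}{\sigma\ltwo{\estim_n}}-\frac{\epsilon\sqrt d}{\sigma}\right),
\]
so that a \emph{single} quantity $\mu^\top\estim_n/(\sigma\ltwo{\estim_n})$ governs both the standard and robust claims. It then lower bounds that ratio via an algebraic decomposition of its reciprocal square,
\[
\frac{\ltwo{\estim_n}^2}{(\mu^\top\estim_n)^2}=\frac{1}{\ltwo\mu^2}+\frac{\ltwo\delta^2-(\mu^\top\delta)^2/\ltwo\mu^2}{(\ltwo\mu^2+\mu^\top\delta)^2},
\]
and concentrates $\ltwo\delta^2$ (chi-square) and $\mu^\top\delta$ (Gaussian). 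You instead keep the three pieces $\estim_n^\top\mu$, $\ltwo{\estim_n}$, $\lone{\estim_n}$ separate, bound each by the triangle inequality, and use $\lone\mu\le d$ together with concentration of $\lone g$. Both are valid; the paper's reduction is a bit slicker (one lemma serves both error types and is later reused verbatim for the self-training estimator), while yours is more elementary and avoids the reciprocal-square trick at the cost of tracking one extra concentration bound.
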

Moreover, as the following theorem states, no learning algorithm can produce a classifier with nontrivial robust error without observing  $\wt{\Omega}(\nn \cdot \epsilon^2\sqrt{d/\nn})$ examples. Thus, a sample complexity gap forms as $d$ grows.
\begin{restatable}[\citet{schmidt2018adversarially}]{theorem}{restateLowerBound}
  \label{thm:lower-bound}
  Let $\learner_\nlab$ be any learning rule  
  mapping a dataset $S\in (\sX \times \sY)^\nlab$ to classifier 
  $\learner_\nlab[S]$. Then,
  \begin{align}
    \label{eqn:robust}
    \nlab \le \nn \frac{\epsilon^2 \sqrt{d/\nn}}{8 \log d}
    \; \Rightarrow \; 
    \E \,\errrob(\learner_\nlab[S]) \ge \half (1-d^{-1}), 
  \end{align}
  where the expectation is with respect to the random draw of $S\sim 
  \distrib^{\nlab}$ as well as possible randomization in 
  $\learner_{\nlab}$. 
\end{restatable}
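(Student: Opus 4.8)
The plan is to prove a Bayesian (average-case) version of the bound, which immediately certifies the stated minimax claim. I put a prior $\mu\sim\sN(0,I_d)$ on the mean, consistent with the scaling $\norm{\mu}^2=d$ in expectation (one may condition on the sphere, losing negligible mass), and observe that the labeled data enter only through the sufficient statistic $\wb{w}\defeq\frac1\nlab\sum_{i=1}^\nlab y_i x_i$: since $y_i x_i=\mu+y_i z_i\sim\sN(\mu,\sigma^2 I)$, we have $\wb{w}\mid\mu\sim\sN(\mu,\tfrac{\sigma^2}{\nlab}I)$. Gaussian conjugacy then yields the posterior $\mu\mid\wb{w}\sim\sN(m,\tau^2 I)$ with $\tau^2=\tfrac{\sigma^2}{\sigma^2+\nlab}$ and $m=\tfrac{\nlab/\sigma^2}{1+\nlab/\sigma^2}\,\wb{w}$. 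Writing $\kappa\defeq \nlab/\sigma^2$ and using $\sigma^2=\sqrt{\nn d}$, the sample budget $\nlab\le\nn\tfrac{\epsilon^2\sqrt{d/\nn}}{8\log d}$ is exactly $\kappa\le\tfrac{\epsilon^2}{8\log d}$, so the posterior is only negligibly more concentrated than the prior.

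The crux is a margin identity. For a linear rule $f_\theta=\sign(\theta^\top x)$ the adversary minimizes $\theta^\top\xadv$ over $\ballinf(x)$, attaining $\theta^\top x-\epsilon\lone{\theta}$, so by Gaussianity of $\theta^\top x$ the robust error equals $\Phi(-t)$ with $t=\tfrac{\theta^\top\mu-\epsilon\lone{\theta}}{\sigma\ltwo{\theta}}$. Averaging over the posterior, $\theta^\top\mu\sim\sN(\theta^\top m,\tau^2\ltwo{\theta}^2)$, and the smoothing identity $\E\Phi(Z)=\Phi(a/\sqrt{1+b^2})$ for $Z\sim\sN(a,b^2)$ gives $\E_\mu\Phi(-t)=\Phi\!\big(-\bar t/\sqrt{1+\tau^2/\sigma^2}\big)$ with $\bar t=\tfrac{\theta^\top m-\epsilon\lone{\theta}}{\sigma\ltwo{\theta}}$. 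The linchpin is $\theta^\top m-\epsilon\lone{\theta}\le \lone{\theta}\big(\linf{m}-\epsilon\big)$, which is $\le 0$ as soon as $\linf{m}\le\epsilon$. Hence on the event $\{\linf{m}\le\epsilon\}$ we get $\bar t\le 0$ and therefore $\E_\mu\errrob(f_\theta)\ge\tfrac12$, and crucially this holds simultaneously for every data-dependent choice of $\theta$, since the event depends on the data alone.

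It remains to control $\linf{m}$. Marginally $m_j\sim\sN\!\big(0,\tfrac{\kappa}{1+\kappa}\big)$, so a union bound over the $d$ coordinates with the Gaussian tail gives $\P\big(\linf{m}>\sqrt{\tfrac{\kappa}{1+\kappa}}\cdot\sqrt{4\log d}\big)\le d^{-1}$; under $\kappa\le\tfrac{\epsilon^2}{8\log d}$ this threshold is strictly below $\epsilon$, so $\linf{m}\le\epsilon$ holds with probability at least $1-d^{-1}$ (the factor $8$ versus $2$ is exactly the slack needed to keep the max-of-$d$-Gaussians threshold under $\epsilon$ while the failure probability is $d^{-1}$). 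Combining with the previous paragraph, $\E\,\errrob\ge\tfrac12\,\P(\linf{m}\le\epsilon)\ge\tfrac12(1-d^{-1})$; a lower bound on this Bayes risk certifies a hard instance $\mu$ of the prescribed norm, which is the model instance in the theorem.

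The main obstacle is upgrading the margin identity from linear rules to the \emph{arbitrary} (possibly randomized) classifiers $\learner_\nlab[S]$ the theorem demands. For this I would pass to the $\epsilon$-eroded decision regions $G_{\pm}\defeq\{x:\ballinf(x)\subseteq f^{-1}(\pm1)\}$, which are disjoint and $\ell_\infty$-separated by $2\epsilon$, and write the posterior-averaged robust accuracy as $\tfrac12\big[\int_{G_+}\psi_+ +\int_{G_-}\psi_-\big]$, where $\psi_\pm$ is the $\sN(\pm m,(\sigma^2+\tau^2)I)$ density obtained by integrating the data clouds against the posterior. A Neyman--Pearson/rearrangement argument under the $\ell_\infty$-separation constraint should show this is maximized by a pair of parallel halfspaces with normal $m$ separated by $2\epsilon\lone{m}$, reducing to the linear computation and the same condition $\linf{m}\le\epsilon$. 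Verifying that the separation constraint does not allow a non-halfspace region to do strictly better is the delicate step; everything else is Gaussian bookkeeping.
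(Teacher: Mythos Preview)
Your Bayesian setup, the posterior computation, and the identification of the event $\{\linf{m}\le\epsilon\}$ are exactly the argument of \citet{schmidt2018adversarially} that the paper invokes (the paper itself simply cites their Theorem~11 and substitutes the parameter setting~\eqref{eqn:scaling}). The linear-classifier calculation is fine but ultimately a detour, and the step you flag as ``the main obstacle'' is in fact where you are overcomplicating things: no Neyman--Pearson or rearrangement argument is required.

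Here is the missing observation. You already wrote down the eroded regions $G_\pm$ and noted they are $\ell_\infty$-separated by more than $2\epsilon$: if $a\in G_+$ and $b\in G_-$ then $\ballinf(a)$ and $\ballinf(b)$ land in disjoint preimages, so $\linf{a-b}>2\epsilon$. After integrating out the posterior, the posterior-averaged robust accuracy for \emph{any} classifier $f$ is
\[
\tfrac12\Big[\P_{\sN(m,(\sigma^2+\tau^2)I)}(G_+)+\P_{\sN(-m,(\sigma^2+\tau^2)I)}(G_-)\Big]
=\tfrac12\Big[\P_{\sN(0,(\sigma^2+\tau^2)I)}(G_+ - m)+\P_{\sN(0,(\sigma^2+\tau^2)I)}(G_- + m)\Big].
\]
If the shifted sets $(G_+-m)$ and $(G_-+m)$ intersected, there would exist $a\in G_+$, $b\in G_-$ with $a-b=2m$, forcing $\linf{m}>\epsilon$. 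Hence on $\{\linf{m}\le\epsilon\}$ the two shifted sets are \emph{disjoint}, so their probabilities under a common measure sum to at most $1$, and the posterior-averaged robust accuracy is at most $\tfrac12$. This holds for every data-dependent (even randomized) classifier simultaneously because the event depends only on the data. There is nothing to optimize: you only need an upper bound, and disjointness delivers it for free. Combined with your union bound on $\linf{m}$ (use the threshold $\sqrt{8\log d}$ rather than $\sqrt{4\log d}$ to get the clean $d^{-1}$ tail, matching the factor $8$ in the hypothesis) and the Bayes-to-minimax step, this finishes the proof.
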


\subsection{Semi-supervised learning in the Gaussian model}
We now consider the semisupervised setting with $\nlab$ labeled examples and $\nunlab$ additional unlabeled examples. We apply the self-training  methodology described in \Cref{sec:setup} on the simple learning rule~\eqref{eqn:labeled-estimator}; our intermediate classifier is $\intest\defeq \estim_{\nlab} = \frac{1}{\nlab}\sum_{i=1}^{\nlab} y_i x_i$,  and we generate pseudo-labels $\yun_i \defeq f_{\labest}(\tilde{x}_i) = \sign(\xun_i^\top \labest)$ for $i=1,\ldots,\nunlab$. We then learning rule~\refeqn{labeled-estimator} to obtain our final semisupervised classifier
$\semisupest \defeq \frac{1}{\nunlab}\sum_{i=1}^{\nunlab} \tilde{y}_i \tilde{x}_i$. 
The following theorem guarantees that $\semisupest$ achieves high robust accuracy.
\begin{restatable}{theorem}{restateSemisupSamples}
  \label{thm:semisupervised-samples}There exists a universal constant 
	$\tilde{r}$ such that for $\epsilon^2 \sqrt{d/\nn}\ge\tilde{r}$, $\nlab\ge \nn$ labeled data and additional $\nunlab$ unlabeled data,
	\[
	\nunlab\ge\nn \cdot 288 \epsilon^{2}\sqrt{\frac{d}{\nn}}
	\; \Rightarrow \;
	\E_{\finalest}\errrob\left(f_{\finalest}\right)\le 10^{-3}.
	\]
\end{restatable}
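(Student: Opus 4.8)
The plan is to reduce the robust error of any linear classifier to a single Gaussian formula, and then to show that the self-trained estimator $\finalest$ lands, with overwhelming probability, in the region where this formula is tiny. For $f_\theta = \sign(\theta^\top x)$ and $\ell_\infty$ perturbations of radius $\epsilon$, the worst-case perturbation shifts $\theta^\top x$ by exactly $\epsilon\lone{\theta}$, so a point $(x,y)$ is robustly classified iff $y\,\theta^\top x > \epsilon\lone{\theta}$; since $y\,\theta^\top x \sim \sN(\theta^\top\mu,\, \sigma^2\ltwo{\theta}^2)$, this yields the closed form
\[
\errrob(f_\theta) = \Phi\!\left(\frac{\epsilon\lone{\theta} - \theta^\top\mu}{\sigma\ltwo{\theta}}\right),
\]
where $\Phi$ is the standard normal CDF. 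It therefore suffices to exhibit a high-probability event on which $\theta^\top\mu - \epsilon\lone{\theta} \ge c\,\sigma\ltwo{\theta}$ for $\theta = \finalest$ and $c = \Phi^{-1}(1-10^{-3}) \approx 3.1$, and to bound the robust error by $1$ off this event. Since $\intest = \estim_\nlab$ is built only from the labeled data while $\Xun$ is independent, I condition throughout on $\intest$: given $\intest$, the summands $\yun_i\xun_i$ defining $\finalest$ are i.i.d.

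Next I would compute the conditional mean. Writing $u = \intest/\ltwo{\intest}$ and $a = \mu^\top u/\sigma$, a one-dimensional Gaussian integral (decompose $\xun$ along $u$ and use $\yun = \sign(\xun^\top \intest)$) gives
\[
\E[\yun\xun \mid \intest] = (2\Phi(a)-1)\,\mu + 2\sigma\phi(a)\,u,
\]
with $\phi$ the standard normal density; the symmetry $y\mapsto -y$ makes both label branches contribute equally. This exposes the two effects of pseudo-labeling: the signal $\mu$ is attenuated by $\beta \defeq 2\Phi(a)-1\in(0,1)$, and a contamination $2\sigma\phi(a)u$ of norm $O(\sigma)$ appears in the (essentially $\mu$-orthogonal) direction $u$. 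I then set up a good event $E_1$ for the labeled data on which $a$ is bounded below by a universal constant, so $\beta = \Theta(1)$. Concretely $\mu^\top\intest$ concentrates near $\norm{\mu}^2 = d$ while $\ltwo{\intest}^2$ concentrates near $\sigma^2 d/\nlab$ (the noise dominates the direction since $\sigma^2/\nlab \gg 1$), giving $a \approx \sqrt{\nlab/\nn} \gtrsim 1$ for $\nlab\ge\nn$; this is exactly the estimate behind \Cref{prop:supervised-samples}, and standard $\chi^2$/Gaussian concentration makes $\P(E_1^c)$ negligible.

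Conditioned on $E_1$, I would control the three functionals of $\finalest$ by concentrating around the mean $m \defeq \beta\mu + 2\sigma\phi(a)u$, whose per-coordinate fluctuations have scale $\sigma/\sqrt{\nunlab}$ (the summand covariance is $\sigma^2 I$ plus rank-one terms along $\mu$ and $u$). On a good event $E_2$ for the unlabeled data one gets $\finalest^\top\mu \ge \beta d - o(\beta d)$, $\ltwo{\finalest}^2 \le \beta^2 d + O(\sigma^2 d/\nunlab)$, and, crucially, $\lone{\finalest} \le \beta\lone{\mu} + O(\sigma d/\sqrt{\nunlab})$, the last term being the $\ell_1$ mass of the fluctuation over all $d$ coordinates, with the contamination contributing only lower-order $O(\sigma\sqrt d)$. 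Plugging into the target inequality and dividing by $d$ (using $\lone{\mu}\le d$), the deterministic part is $\ge \beta(1-\epsilon)$, bounded below since $\epsilon<\tfrac12$; the alignment error of $\intest$ and the $\ltwo{\finalest}$ term contribute $O(\sigma/\sqrt d) = O((\nn/d)^{1/4}) = o(1)$; and the fluctuations contribute $O\!\big(\epsilon\sigma/\sqrt{\nunlab} + \sigma^2/(\sqrt d\sqrt{\nunlab})\big)$. The binding requirement is $\epsilon\sigma/\sqrt{\nunlab} \lesssim \beta$, i.e.\ $\nunlab \gtrsim \epsilon^2\sigma^2/\beta^2 = \Theta(\epsilon^2\sqrt{d\nn}) = \Theta(\nn\,\epsilon^2\sqrt{d/\nn})$, which is precisely the stated threshold; tracking constants through $\beta$, $(1-\epsilon)$ and $c$ produces the factor $288$.

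The main obstacle is the norm concentration in the third step, especially for $\lone{\finalest}$: unlike the supervised estimator, each summand $\yun_i\xun_i$ is a nonlinear, sign-dependent function of a Gaussian whose $d$ coordinates are coupled through the shared label $\yun_i$, so the coordinates cannot be treated as independent. I would handle this by using that, conditioned on $\intest$, $\xun_i$ is sub-Gaussian and hence $\yun_i\xun_i$ has sub-Gaussian coordinates, and by applying vector/Lipschitz concentration ($\ltwo{\cdot}$ and $\lone{\cdot}$ are $1$- and $\sqrt d$-Lipschitz) to the average, while checking that neither the attenuation $\beta$ nor the contamination erodes more than a constant factor. The remaining bookkeeping---combining $\P(E_1^c)$ and $\P(E_2^c)$ and bounding the off-event robust error by $1$---is routine given the exponentially small tails.
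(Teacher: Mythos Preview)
Your outline is correct and would yield a valid proof, but it diverges from the paper's argument in two places, and in both you are working harder than necessary.

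First, you control $\lone{\finalest}$ directly and identify the coordinate coupling through $\yun_i$ as ``the main obstacle.'' The paper sidesteps this entirely by using the trivial bound $\lone{\theta}\le\sqrt{d}\,\ltwo{\theta}$ inside the closed-form expression, obtaining $\errrob(f_\theta)\le Q\!\big(\tfrac{\mu^\top\theta}{\sigma\ltwo{\theta}}-\tfrac{\epsilon\sqrt d}{\sigma}\big)$. This reduces the whole task to lower-bounding the single ratio $\mu^\top\finalest/(\sigma\ltwo{\finalest})$, so no separate $\ell_1$ analysis (and no Lipschitz concentration for a coupled sum) is needed. Your route still works, but the $\ell_1$ step is superfluous and is exactly where you correctly anticipate the technical friction.

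Second, you center at the exact conditional mean $m=(2\Phi(a)-1)\mu+2\sigma\phi(a)u$ and propose generic sub-Gaussian concentration for the fluctuations. The paper instead writes $\finalest=\gamma\mu+\tilde\delta$ with $\gamma=\tfrac{1}{\nunlab}\sum_i \yun_i y_i$ \emph{random}, bounds $\gamma\ge\tfrac16$ by Hoeffding (using $\err(f_{\intest})\le\tfrac13$ from the labeled-data analysis), and then handles $\tilde\delta=\tfrac{1}{\nunlab}\sum_i\yun_i\varepsilon_i$ by rotating to a basis whose first axis is $\intest$: in those coordinates $\yun_i$ depends only on the first coordinate of $\varepsilon_i$, so the remaining $d-1$ coordinates of $\tilde\delta$ are \emph{exactly} $\mathcal N(0,\sigma^2/\nunlab)$, and the first is bounded crudely by Cauchy--Schwarz. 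This coordinate-rotation trick gives sharp $\chi^2$ concentration for $\ltwos{\tilde\delta}$ without any sub-Gaussian machinery, and is the key device that dissolves the coupling you flagged. Your exact-mean decomposition is more precise (and could in principle yield a smaller constant than $288$, which in the paper arises from $1/\gamma^2\le36$ and a factor $2$ in the $\ltwos{\tilde\delta}$ bound), but it introduces an extra contamination term $2\sigma\phi(a)u$ to track and forces you into the less elementary concentration argument.
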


Therefore, compared to the fully supervised case, the self-training classifier 
requires only a constant factor more input examples, and roughly a factor 
$\epsilon^2\sqrt{d/\nn}$ fewer labels. 
We prove~\Cref{thm:semisupervised-samples} 
in~\Cref{sec:theory-semisup}, where we also precisely characterize the 
rates of convergence of the robust error; the outline of our argument is as 
follows. We have $\finalest = (\frac{1}{\nunlab}\sum_{i=1}^{\nunlab} \yun_i 
y_i) \mu + \frac{1}{\nunlab}\sum_{i=1}^{\nunlab} \yun_i \varepsilon_i$ 
where $\varepsilon_i\sim \mathcal{N}(0,\sigma^2 I)$ is the noise in 
example $i$. We show (in~\Cref{sec:theory-semisup}) that with high 
probability $\frac{1}{\nunlab}\sum_{i=1}^{\nunlab} \yun_i y_i \ge 
\frac{1}{6}$ while the variance of $\frac{1}{\nunlab}\sum_{i=1}^{\nunlab} 
\yun_i \varepsilon_i$ goes to zero as $\nunlab$ grows, and therefore the 
angle between $\finalest$ and $\mu$ goes to zero. Substituting into a 
closed-form expression for $\errrob(f_{\finalest})$ 
(Eq.~\eqref{eq:robust-err-closed-form} 
in~\Cref{sec:app-theory-closed-from}) 
gives the desired upper bound.
We remark that other learning techniques, such as EM and PCA, can also leverage unlabeled data in this model. The self-training procedure we describe is similar to 2 steps of EM~\citep{dasgupta07em}. 

\subsection{Semisupervised learning with irrelevant unlabeled data}\label{sec:irrelevant}

In \Cref{sec:app-irrelevant} we study a setting where only $\alpha\nunlab$ 
of the unlabeled data are relevant to the task, where we model the relevant 
data as before, and the irrelevant data as having no signal component, i.e., 
with $y$ uniform on $\{-1,1\}$ and $x\sim\mathcal{N}(0,\sigma^2 I)$ 
independent of $y$.
We show that for any fixed $\alpha$, high robust accuracy is still possible, 
but the required number of \emph{relevant} examples grows by a factor of 
$1/\alpha$ compared to the amount of unlabeled examples require to 
achieve the same robust accuracy when all the data is relevant. 
This demonstrates that irrelevant data can significantly hinder self-training, 
but does not stop it completely.

\section{Semi-supervised learning of robust neural networks} 
\label{sec:approach}
Existing adversarially robust training methods are designed for the supervised setting. In this section, we use these methods to leverage additional unlabeled data by adapting the self-training framework described in \Cref{sec:setup}.

\begin{algorithm} \algrenewcommand\algorithmicprocedure{} \caption{Robust self-training} \label{alg:robustself} \hspace*{\algorithmicindent} \textbf{Input:} Labeled data $(x_1, y_1, \ldots, x_{\nlab}, y_{\nlab})$ and unlabeled data $(\xun_1, \ldots, \xun_{\nunlab})$
	
	\vspace{3pt}
	
	\hspace*{\algorithmicindent} \textbf{Parameters:} Standard loss $\loss$, robust loss $\lossrob$ and unlabeled weight $\wun$ \begin{algorithmic}[1] %
		\State Learn $\intest$ by minimizing  $\sum\limits_{i=1}^{\nlab} \loss(\theta, x_i, y_i)$
		
		\State Generate pseudo-labels $\yun_i = f_{\intest}(\xun_i)$ for $i = 1, 2, \ldots \nunlab$ %
		\State Learn $\finalest$ by minimizing $\sum\limits_{i=1}^\nlab \lossrob(\theta, x_i, y_i) + \wun \sum\limits_{i=1}^{\nunlab} \lossrob(\theta, \xun_i, \yun_i)$ %
\end{algorithmic} \end{algorithm}

  \Cref{alg:robustself} summarizes robust-self training. In contrast to 
  standard self-training, we use a different supervised learning method in 
  each stage, since the intermediate and the final classifiers have different 
  goals.
  In particular, the only goal of $\intest$ is to generate high quality 
  pseudo-labels for the (non-adversarial) unlabeled data. Therefore, we 
  perform standard training in the first stage, and robust training in the 
  second. The hyperparameter $\wun$ allows us to upweight the labeled 
  data, which in some cases may be more relevant to the task (e.g., when 
  the unlabeled data comes form a different distribution), and will usually 
  have more accurate labels.

\subsection{Instantiating robust self-training}
\label{sec:losses}
  Both stages of robust self-training perform supervised learning, allowing 
  us to borrow ideas from the literature on supervised standard and robust 
  training.
We consider neural networks of the form $f_\theta(x) = \argmax_{y\in\sY} p_\theta(y \mid x)$, where $p_\theta(\,\cdot \mid x)$ is a probability distribution over the class labels.

\paragraph{Standard loss.} As in common,
we use the multi-class logarithmic loss for standard supervised learning,
\begin{equation*} 
  \loss(\theta, x, y)
  = -\log p_\theta(y\mid x).
 \end{equation*} 

\paragraph{Robust loss.} For the supervised robust loss, we use a robustness-promoting regularization term proposed in~\citep{zhang2019theoretically} and closely related to earlier proposals in~\citep{zheng2016improving, miyato2018virtual,kannan2018adversarial}. 
The robust loss is
\begin{flalign} 
\label{eq:trades} 
&\lossrob(\theta, x, y) = 
\loss(\theta, x, y) + \beta \lossreg(\theta, x),\\
&\quad\quad~\mbox{where}~~\lossreg(\theta, x)\defeq
\max_{\xadv \in\ballp(x)} \Dkl{p_\theta(\cdot \mid x)}{p_\theta(\cdot \mid \xadv)}. \nonumber
\end{flalign} 
The regularization term\footnote{
  \citet{zhang2019theoretically} write the regularization term 
  $\Dkl{p_\theta(\cdot\mid \xadv)}{p_\theta(\cdot \mid x)}$, i.e.\ with 
  $p_\theta(\cdot\mid\xadv)$ rather than $p_\theta(\cdot\mid x)$ taking 
  role of the label, but their open source implementation follows 
  \eqref{eq:trades}. 
} 
$\lossreg$ forces predictions to remain stable within $\ballp(x)$, and the hyperparameter $\beta$ balances the robustness and accuracy objectives. We consider two approximations for the maximization in $\lossreg$.

\begin{enumerate}[leftmargin = 12pt]
\item \textbf{Adversarial training: a heuristic defense via approximate maximization.}
  
We focus on $\ell_\infty$ perturbations and use the projected gradient method to approximate the regularization term of \eqref{eq:trades},
  \begin{align}\label{eq:trades-adv}
    \lossreg^\text{adv}(\theta, x) \defeq   \Dkl{p_\theta(\cdot \mid x)}{p_\theta(\cdot \mid \xadv_\texttt{PG}[x])}, %
  \end{align}
  where $\xadv_\texttt{PG}[x]$ is obtained via projected gradient ascent on  $r(x')=\Dkl{p_\theta(\cdot \mid x)}{p_\theta(\cdot \mid \xadv)}$. Empirically, performing approximate maximization during training is effective in finding classifiers that are robust to a wide range of attacks~\citep{madry2018towards}. 
  
\item \textbf{Stability training: a certified $\ell_2$ defense via randomized smoothing.}

Alternatively, we consider stability training~\citep{zheng2016improving,li2018second}, where we replace maximization over small perturbations with much larger additive random noise drawn from $\sN(0,\sigma^2 I)$,
\begin{equation} \label{eq:trades-noise} 
\lossreg^\text{stab}(\theta, x) \defeq  \E_{\xadv \sim \sN(x, \sigma^2 I)} \Dkl{p_\theta(\cdot \mid x)}{p_\theta(\cdot \mid \xadv)}. 
\end{equation}
Let $f_\theta$ be the classifier obtained by minimizing $\loss+\beta \lossrob^\text{stab}$. At test time, we use the following \emph{smoothed} classifier. 
\begin{equation} \label{eq:smoothing}g_\theta(x) \defeq \argmax_{y\in\sY} \;\; %
q_\theta(y \mid x),~~\mbox{where}~~ q_\theta(y \mid x) \defeq \P_{\xadv \sim \sN(x, \sigma^2 I)} ( f_\theta(\xadv) = y).
\end{equation} 
Improving on previous work~\citep{lecuyer2019certified, li2018second},  \citet{cohen2019certified} prove that robustness of $f_\theta$ to large random perturbations (the goal of stability training) implies \emph{certified} $\ell_2$ adversarial robustness of the smoothed classifier $g_\theta$. 
\end{enumerate}

\section{Experiments}\label{sec:experiments}
In this section, we empirically evaluate robust self-training  (\texttt{RST}) 
and show that it leads to \emph{consistent and substantial} improvement in 
robust accuracy, on both CIFAR-10~\citep{krizhevsky2009learningmultiple} 
and SVHN~\citep{netzer2011reading} and with both adversarial ($\rstatno$) 
and stability training ($\rststno$). For CIFAR-10, we mine unlabeled data 
from 80 Million Tiny Images and study in depth the strengths and 
limitations of \texttt{RST}. For SVHN, we simulate unlabeled data by 
removing labels and show that with \texttt{RST} the harm of removing the 
labels is small. This indicates that most of the gain comes from additional  
inputs rather than additional labels. Our experiments build on open source 
code from ~\cite{zhang2019theoretically, cohen2019certified}; we release 
our data and code at 
\url{https://github.com/yaircarmon/semisup-adv} and on CodaLab at  
\url{https://bit.ly/349WsAC}. 

\paragraph{Evaluating heuristic defenses.}
We evaluate  $\rstatno$  and other heuristic defenses on their performance against the strongest known $\ell_\infty$ attacks, namely the projected gradient method \citep{madry2018towards}, denoted \texttt{PG} and the Carlini-Wagner attack~\citep{carlini2017towards} denoted \texttt{CW}.

\paragraph{Evaluating certified defenses.}
For $\rststno$ and other models trained against random noise, we evaluate \emph{certified} robust accuracy of the \emph{smoothed} classifier against $\ell_2$ attacks. We perform the certification using the randomized smoothing protocol described in~\cite{cohen2019certified}, with parameters $N_0 = 100$, $N=10^4$, $\alpha=10^{-3}$ and noise variance $\sigma=0.25$. 

\paragraph{Evaluating variability.} We repeat training 3 times and report accuracy as X\dev{Y}, with X the median across runs and Y half the difference between the minimum and maximum.

\subsection{CIFAR-10}

\subsubsection{Sourcing unlabeled data}\label{sec:sourcing}
To obtain unlabeled data distributed similarly to the CIFAR-10 images, we 
use the 80 Million Tiny Images (80M-TI) dataset~\citep{torralba2008million}, of which CIFAR-10 is a manually labeled subset. However, most images in 80M-TI do not correspond to CIFAR-10 image categories.
To select relevant images, we train an 11-way classifier to distinguish CIFAR-10 classes and an 11$^\text{th}$ ``non-CIFAR-10'' class using a $\wrn{28}{10}$ model~\citep{zagoruyko2016wide} (the same as in our experiments below). 
For each class, we select additional 50K images from 80M-TI using the trained model's predicted scores\footnote{We exclude any image close  to the CIFAR-10 test set; see \Cref{app:sourcing} for detail.}---this is our 500K images unlabeled which we add to the 50K CIFAR-10 training set when performing \texttt{RST}.
We provide a detailed description of the data sourcing process in \Cref{app:sourcing}.

\newcommand{\tocheck}[1]{{\color{red}#1}}

\newcolumntype{C}[1]{>{\Centering\arraybackslash}p{#1\linewidth}}
\newcolumntype{L}[1]{>{\RaggedRight\arraybackslash}p{#1\linewidth}}
\newcolumntype{R}[1]{>{\RaggedLeft\arraybackslash}p{#1\linewidth}}
\newcommand\mC[2]{\multicolumn{1}{C{#1}}{#2}} %

\begin{table}	
	\centering
	\begin{tabular}{lllll|l||l}
		\toprule
		Model
		& $\pgdmadry$%
		& $\pgdtrades$%
		& $\pgdours$ 
		&  $\cw$~\cite{carlini2017towards}
		& Best attack 
		& No attack\\
		\midrule
		$\rstat{50K+500K}$ & 
		{63.1}  & {63.1} & 62.5 & 
		{64.9} & \textbf{62.5} {\footnotesize $\pm$0.1}& \textbf{89.7} {\footnotesize $\pm$0.1} \\
		TRADES~\cite{zhang2019theoretically} &
		{55.8} & 56.6 & 55.4 & 
		{65.0} & 55.4 & 84.9\\
		Adv.\ pre-training~\cite{hendrycks2019pretraining} & 
		57.4 & 58.2 & 57.7 &
		-     & 57.4$^\dagger$ & {87.1} \\
		Madry et al.~\cite{madry2018towards} & 
		45.8 & - & - & 
		{47.8} & 45.8 & {87.3}\\
		Standard self-training & 
		- & 0.3 & 0 & 
		- & 0 & 96.4 \\
		\bottomrule    
	\end{tabular}
	\caption{\textbf{Heuristic defense.} CIFAR-10 test accuracy under 
	different optimization-based $\ell_\infty$ attacks of magnitude 
	$\epsilon=8/255$. Robust self-training (RST) with 500K unlabeled Tiny 
	Images outperforms the state-of-the-art robust models in terms of 
  robustness as well as standard accuracy (no attack). Standard self-training with the 
	same data does not provide robustness.
		$\dagger$: A projected gradient attack with 1K restarts reduces the 
		accuracy of this model to 52.9\%, evaluated on 10\% of the test set 
		\cite{hendrycks2019pretraining}. 
	}
	\label{table:adv_table}
\end{table}

\begin{figure}
	\centering

		\newcolumntype{C}[1]{>{\Centering\arraybackslash}p{#1\linewidth}}
		\newcolumntype{L}[1]{>{\RaggedRight\arraybackslash}p{#1\linewidth}}
		\newcolumntype{R}[1]{>{\RaggedLeft\arraybackslash}p{#1\linewidth}}
	\begin{minipage}[b]{0.4\textwidth}
		\centering
		\hspace{-0pt}
		\notarxiv{\includegraphics[height=3.4cm]{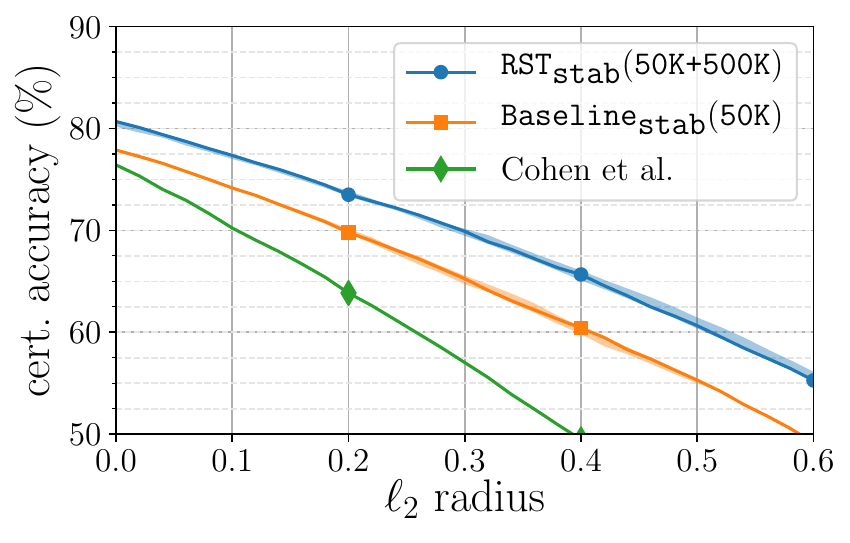}}%
		\arxiv{\includegraphics[height=4.1cm]{figures/main_stability_acc_vs_rad_adj.pdf}}
		\\[-3pt]
		\footnotesize\textbf{(a)}
	\end{minipage}
	\begin{minipage}[b]{0.59\textwidth}
	\centering
	\begin{tabular}{L{0.44}L{0.2}L{0.175}}
		\toprule
		Model & $\ell_\infty$ acc.\ at $\epsilon=\frac{2}{255}$ & Standard acc. \\ 
		\midrule
		$\rstst{50K+500K}$ & \textbf{63.8}\dev{0.5} & \textbf{80.7}\dev{0.3} \\
		$\baseline{stab}{50K}$ & 58.6\dev{0.4} & 77.9\dev{0.1} \\
		{\arxiv{\footnotesize}\notarxiv{\small}{Wong et al.  (single)}}~\cite{wong2018scaling} & 53.9 & 68.3 \\
		{\arxiv{\footnotesize}\notarxiv{\small}{Wong et al.  (ensemble)}}~\cite{wong2018scaling} & \textbf{63.6} & 64.1 \\
		IBP~\cite{gowal2018effectiveness}& 50.0 & 70.2 \\
		\bottomrule
	\end{tabular}
	\\[3pt]
	\footnotesize\textbf{(b)}
	\end{minipage}
	\caption{\textbf{Certified defense.} Guaranteed CIFAR-10 test accuracy 
	under all $\ell_2$ and $\ell_\infty$ attacks.
	Stability-based robust self-training  with 500K unlabeled Tiny 
	Images ($\rstst{50K+500K}$) outperforms stability training with only 
	labeled data ($\baseline{stab}{50K}$). 
	 \textbf{(a)} Accuracy vs.\ 
	$\ell_2$ radius, certified via randomized 
  smoothing~\cite{cohen2019certified}. Shaded regions 
   indicate variation 
	across 3 runs. Accuracy at $\ell_2$ radius 0.435 implies accuracy at 
	$\ell_\infty$ radius 2/255. 
	\textbf{(b)} The implied $\ell_\infty$ certified accuracy is comparable to 
	the state-of-the-art in methods that directly target $\ell_\infty$ 
	robustness.
  }
	\label{fig:main}
\end{figure}

\subsubsection{Benefit of unlabeled data}\label{sec:benefit}

We perform robust self-training using the unlabeled data described above. 
We use a Wide ResNet 28-10 architecture for both the intermediate pseudo-label generator  and final robust model. For adversarial training, we compute $x_\texttt{PG}$ exactly as in~\citep{zhang2019theoretically} with $\epsilon=8/255$, and denote the resulting model as $\rstat{50K+500K}$. For stability training, we set the additive noise variance to to $\sigma=0.25$ and denote the result $\rstst{50K+500K}$. We provide training details in \Cref{sec:app-hyper}. 

\paragraph{Robustness of $\rstat{50K+500K}$ against strong attacks.} 
In Table~\ref{table:adv_table}, we report the accuracy of  $\rstat{50K+500K}$ and the best models in the literature against various strong attacks at $\epsilon=8/255$ (see \Cref{sec:app-attack} for details). $\pgdtrades$ and $\pgdmadry$ correspond to the attacks used in~\citep{zhang2019theoretically} and~\citep{madry2018towards}  respectively, and we apply the Carlini-Wagner attack $\cw$~\cite{carlini2017towards} on $1{,}000$ random test examples, where we use the implementation~\cite{papernot2018cleverhans} that performs search over attack hyperparameters. We also tune a PG attack against $\rstat{50K+500K}$ (to maximally reduce its accuracy), which we denote  $\pgdours$ (see \Cref{sec:app-attack} for details). 

$\rstat{50K+500K}$ gains 7\% over TRADES~\cite{zhang2019theoretically}, which we can directly attribute to the unlabeled data (see \Cref{sec:app-hyper-comp}). In~\Cref{sec:app-radii} we also show this gain holds over different attack radii. 
The model  of~\citet{hendrycks2019pretraining} is based on ImageNet adversarial pretraining and is less directly comparable to ours due to the difference in external data and training method. Finally, we perform  standard self-training using the unlabeled data, which offers a moderate 0.4\% improvement in standard accuracy over the intermediate model but is not adversarially robust (see~\Cref{sec:app-sst}).

\paragraph{Certified robustness of $\rstst{50K+500K}$.} 
Figure~\ref{fig:main}a shows the certified robust accuracy as a function of $\ell_2$ perturbation radius for different models. We compare $\rstat{50K+500K}$ with~\citep{cohen2019certified}, which has the highest reported certified accuracy, and $\baseline{stab}{50K}$, a model that we trained using only the CIFAR-10 training set and the same training configuration as $\rstst{50K+500K}$. $\rstst{50K+500K}$ improves on our $\baseline{stab}{50K}$ by 3--5\%. The gains of $\baseline{stab}{50K}$ over the previous state-of-the-art are due to a combination of better architecture, hyperparameters, and training objective (see \Cref{sec:app-compare-stab}). 
The certified $\ell_2$ accuracy is strong enough to imply state-of-the-art 
certified $\ell_\infty$ robustness via elementary norm bounds. In 
Figure~\ref{fig:main}b we compare $\rstst{50K+500K}$ to the 
state-of-the-art in certified $\ell_\infty$ robustness, showing a a 10\% 
improvement over single models, and performance on par with the cascade 
approach of~\cite{wong2018scaling}. We also outperform the cascade 
model's standard accuracy by $16\%$. 

\subsubsection{Comparison to alternatives and ablations studies}

\paragraph{Consistency-based semisupervised learning (\Cref{sec:app-semisup}).}
Virtual adversarial training (VAT), a state-of-the-art method for (standard) semisupervised training of neural network~\citep{miyato2018virtual, oliver2018realistic}, is easily adapted to the adversarially-robust setting. 
We train models using adversarial- and stability-flavored adaptations of VAT, and compare them to their robust self-training counterparts.
 We find that the VAT approach offers only limited benefit over fully-supervised robust training, and that robust self-training offers 3--6\% higher accuracy.

\paragraph{Data augmentation (\Cref{sec:app-augment}).}
In the low-data/standard accuracy regime, strong data augmentation is competitive against and complementary to semisupervised learning~\citep{cubuk2019autoaugment, xie2019unsupervised}, as it effectively increases the sample size by generating different plausible inputs. 
It is therefore natural to compare state-of-the-art data augmentation (on the labeled data only) to robust self-training.
We consider two popular schemes: Cutout~\citep{devries2017improved} 
and AutoAugment~\citep{cubuk2019autoaugment}. While they provide 
significant benefit to standard accuracy, both augmentation schemes 
provide essentially no improvements when we add them to our fully 
supervised baselines.

\paragraph{Relevance of unlabeled data (\Cref{sec:app-relevance}).}  The
theoretical analysis in \Cref{sec:theory} suggests that self-training
performance may degrade significantly in the presence of irrelevant
unlabeled data; other semisupervised learning methods share this
sensitivity~\cite{oliver2018realistic}. 
In order to measure the effect on robust self-training, we mix out unlabeled data sets with different amounts of random images from 80M-TI and compare the performance of resulting models. We find that stability training is more
sensitive than adversarial training, and that both methods still yield noticeable
robustness gains, even with only 50\% relevant data.

\paragraph{Amount of unlabeled data (\Cref{sec:app-amount}).}
We perform robust self-training with varying amounts of unlabeled data 
and observe that 100K unlabeled data 
provide roughly half the gain provided by 500K unlabeled data, indicating 
diminishing returns as data amount grows. However, as we report in 
Appendix~\ref{sec:app-amount}, hyperparameter tuning issues make it  
difficult to assess how performance trends with data amount.

\paragraph{Amount of labeled data (\Cref{sec:app-label-amount}).}
Finally, to explore the complementary question of the effect of varying the 
amount of labels available for pseudo-label generation, we strip the labels 
of all but $\nn$ CIFAR-10 images, and combine the remainder with our 
500K unlabeled data. We observe that $\nn=8$K labels suffice to to exceed 
the robust accuracy of the (50K labels) fully-supervised baselines for 
both adversarial 
training and the $\pgdours$ attack, and certified robustness via stability 
training.
\subsection{Street View House Numbers (SVHN)}\label{sec:svhn}
The SVHN dataset~\citep{netzer2011reading} is naturally split into a core training set of about 73K images and an `extra' training set with about 531K easier images. 
In our experiments, we compare three settings:
 (i) robust training on the core training set only, denoted $\baseline{*}{73K}$, 
 (ii) robust self-training with the core training set and the extra training images, denoted $\rstgen{*}{73K+531K}$, and 
 (iii) robust training on all the SVHN training data, denoted $\baseline{*}{604K}$. 
 As in CIFAR-10, we experiment with both adversarial and stability training, so $*$ stands for either $\texttt{adv}$ or $\texttt{stab}$. 

Beyond validating the benefit of additional data, our SVHN experiments  measure the loss inherent in using pseudo-labels in lieu of true labels. 
Figure~\ref{fig:svhn} summarizes the results: the unlabeled provides 
significant gains in robust accuracy, and the accuracy drop due to using 
pseudo-labels is below 1\%. This reaffirms our intuition that in regimes of 
interest, \emph{perfect labels are not crucial} for improving robustness.
We give a detailed account of our SVHN experiments in 
\Cref{sec:app-svhn}, where we also compare our results to the literature.

\begin{figure}[t]
  \centering
	\notarxiv{  \begin{subfigure}{0.4\textwidth}
      \begin{tabular}{ccc}
        \toprule
        Model & $\pgdours$ & No attack\\ 
        \midrule
        $\baseline{at}{73K}$ & 75.3\dev{0.4} & 94.7\dev{0.2}  \\
        $\rstat{73K+531K}$ & 86.0\dev{0.1} & 97.1\dev{0.1}  \\
        $\baseline{at}{604K}$ & 86.4\dev{0.2} & 97.5\dev{0.1}  \\
        \bottomrule
      \end{tabular}
  \end{subfigure}
  ~
    \begin{subfigure}{0.58\textwidth}
      \flushright
      \includegraphics[width=0.75\columnwidth]{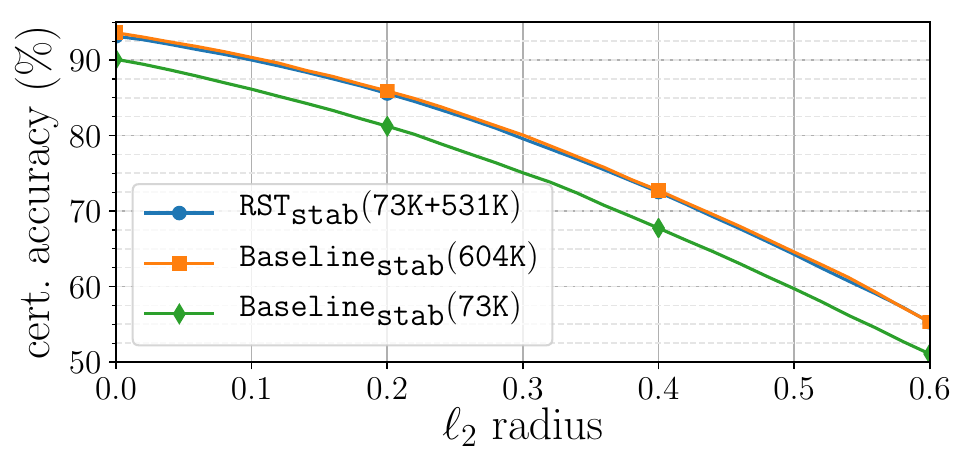}
    \end{subfigure}
	\captionlistentry[table]{A table beside a figure}}%
	\arxiv{
	~~~
	\begin{minipage}[b]{0.45\textwidth}
		\centering
		\begin{tabular}{ccc}
		\toprule
		Model & $\pgdours$ & No attack\\ 
		\midrule
		$\baseline{adv}{73K}$ & 75.3\dev{0.4} & 94.7\dev{0.2}  \\
		$\rstat{73K+531K}$ & 86.0\dev{0.1} & 97.1\dev{0.1}  \\
		$\baseline{adv}{604K}$ & 86.4\dev{0.2} & 97.5\dev{0.1}  \\
		\bottomrule
	\end{tabular}
	\vspace{20pt}
	\end{minipage}
	~~~
	\begin{minipage}[b]{0.45\textwidth}
		\centering
	\includegraphics[height=3.3cm]{figures/svhn_stability_acc_vs_rad_adj.pdf}
	\\[-3pt]
	\end{minipage}
	\vspace{-6pt}}%

      \caption{
      SVHN test accuracy for robust training without the extra data, with unlabeled extra (self-training), and with the labeled extra data. Left: Adversarial training and accuracies under $\ell_\infty$ attack with $\epsilon=4/255$. Right: Stability training and certified $\ell_2$ accuracies as a function of perturbation radius. Most of the gains from extra data comes from the unlabeled inputs.
  }
      \label{fig:svhn}
\end{figure}

\section{Related work}\label{sec:related}

\paragraph{Semisupervised learning.}
The literature on semisupervised learning dates back to beginning of 
machine 
learning~\citep{scudder1965probability,chapelle2006semisupervised}. 
A recent family of approaches operate by enforcing consistency in the 
model's predictions under various perturbations of the unlabeled 
data~\citep{miyato2018virtual, xie2019unsupervised}, or over the course 
of  training
\citep{tarvainen2017mean, sajjadi2016regularization, laine2017temporal}.
While self-training has shown some gains in standard
accuracy~\citep{lee2013pseudo}, the consistency-based approaches perform 
significantly better on popular semisupervised learning 
benchmarks~\citep{oliver2018realistic}.
In contrast, our paper considers the very different regime of adversarial 
robustness, and we observe that robust self-training offers significant 
gains in robustness over fully-supervised methods. Moreover, it seems to 
outperform consistency-based regularization 
 (VAT; see Section~\ref{sec:app-semisup}).
We note that there are many additional approaches to semisupervised 
learning, including transductive SVMs, graph-based methods, and 
generative modeling~\citep{chapelle2006semisupervised, zhu2003semi}.

\paragraph{Self-training for domain adaptation.}
Self-training is gaining prominence in the related 
setting of \emph{unsupervised domain adaptation} (UDA). There, the 
unlabeled data is from a ``target'' distribution, which is different from the 
``source'' distribution that generates labeled data. Several recent 
approaches~\citep[cf.][]{long2013transfer, inoue2018cross} are based on 
approximating class-conditional distributions of the target domain via 
self-training, and then learning feature transformations that match these 
conditional distributions across the source and target domains. Another line 
of work~\citep{zou2018unsupervised, zou2019confidence} is based on 
iterative self-training coupled with refinements such as class balance or 
confidence regularization. Adversarial robustness and UDA share the 
similar goal of 
learning models that perform well under some kind of distribution shift; in UDA we access the target distribution 
through unlabeled data while in adversarial robustness, we characterize 
target distributions via perturbations. The fact that self-training is 
effective in both cases suggests it may apply to distribution shift 
robustness more broadly.

\paragraph{Training robust classifiers.} The discovery of adversarial 
examples~\citep{szegedy2014intriguing, 
biggio2013evasion,biggio2018wild} prompted a flurry of 
``defenses'' and ``attacks.'' While several defenses were broken by 
subsequent attacks~\citep{carlini2017towards,athalye2018obfuscated, 
carlini2017adversarial}, the general approach of adversarial training 
\citep{madry2018towards, sinha2018certifiable,zhang2019theoretically} 
empirically seems to offer gains in robustness. Other lines of work attain 
\emph{certified} robustness, though often at a cost to empirical robustness 
compared to heuristics~\cite{raghunathan2018certified, 
wong2018provable, raghunathan2018sdp, wong2018scaling, 
gowal2018effectiveness}. 
Recent work by \citet{hendrycks2019pretraining} shows that even when  
pre-training has limited value for standard accuracy on benchmarks, 
adversarial pre-training is effective.
We complement this work by showing 
that a similar conclusion holds for semisupervised learning (both 
practically and theoretically in a stylized model), and extends to certified 
robustness as well.

\paragraph{Sample complexity upper bounds.} Recent works~\cite{yin2019rademacher,khim2018adversarial,attias2019improved} 
study adversarial robustness from a learning-theoretic perspective, and in a 
number of simplified settings develop generalization bounds using extensions of Rademacher complexity. In some cases these upper bounds are demonstrably larger than their standard counterparts, suggesting there may be statistical barriers to robust learning.

\paragraph{Barriers to robustness.} 
\citet{schmidt2018adversarially} show a sample complexity barrier to robustness in a stylized setting. 
We observed that in this model, unlabeled data is as useful for robustness as labeled data. 
This observation led us to experiment with robust semisupervised learning.
Recent work also suggests other barriers to robustness: 
\citet{montasser2019vc} show settings where improper learning and 
surrogate losses are crucial in addition to more samples; 
\citet{bubeck2019adversarial} and~\citet{degwekar2019computational} 
show possible computational barriers; \citet{gilmer2018adversarial} show a 
high-dimensional model where robustness is a consequence of any 
non-zero standard error, while~\citet{raghunathan2019hurt, 
tsipras2019robustness, fawzi2018analysis} show settings where robust and 
standard errors are at odds.
Studying ways to overcome these additional theoretical barriers may translate to more progress in practice.

\paragraph{Semisupervised learning for adversarial robustness.}
Independently and concurrently with our work,~\citet{zhai2019adversarially, najafi2019robustness} and~\citet{uesato2019are} also study the use of unlabeled data in the adversarial setting. We briefly describe each work in turn, and then contrast all three with ours.

\citet{zhai2019adversarially} study the Gaussian model of~\citep{schmidt2018adversarially} and show a PCA-based procedure that successfully leverages unlabeled data to obtain adversarial robustness. They propose a training procedure that at every step treats the current model's predictions as true labels, and experiment on CIFAR-10. Their experiments include the standard semisupervised setting where some labels are removed, as well as the transductive setting where the test set is added to the training set without labels.

\citet{najafi2019robustness} extend the distributionally robust optimization perspective of~\citep{sinha2018certifiable} to a semisupervised setting. They propose a training objective that replaces pseudo-labels with soft labels weighted according to an adversarial loss, and report results on MNIST, CIFAR-10, and SVHN with some training labels removed. The experiments in \citep{zhai2019adversarially, najafi2019robustness} do not augment CIFAR-10 with new unlabeled data and do not improve the state-of-the-art in adversarial robustness.

The work of~\citet{uesato2019are} is the closest to ours---they also study self-training in the Gaussian model and propose a version of robust self-training which they apply on CIFAR-10 augmented with Tiny Images. Using the additional data they obtain new state-of-the-art results in heuristic defenses, comparable to ours. As our papers are very similar, we provide a detailed comparison in~\Cref{sec:app-comparison}.

Our paper offers a number of perspectives that complement~\citep{uesato2019are, zhai2019adversarially, najafi2019robustness}. First, in addition to heuristic defenses, we show gains in certified robustness where we have a guarantee on robustness against \emph{all} possible attacks. Second, we study the impact of irrelevant unlabeled data theoretically (\Cref{sec:irrelevant}) and empirically (\Cref{sec:app-relevance}). Finally, we provide additional experimental studies of data augmentation and of the impact of unlabeled data amount when using all labels from CIFAR-10.

\section{Conclusion}\label{sec:conclusion}
We show that unlabeled data closes a sample complexity gap in a stylized 
model and that robust self-training (RST) is consistently beneficial on two 
image classification benchmarks. Our findings open up a number of 
avenues for further research. Theoretically, is sufficient unlabeled data a 
universal cure for sample complexity gaps between standard and 
adversarially robust learning?
Practically, what is the best way to leverage unlabeled data for 
robustness, and can semisupervised learning similarly benefit alternative 
(non-adversarial) notions of robustness? As the scale of data grows, 
computational capacities increase, and machine learning moves beyond 
minimizing average error, we expect unlabeled data to provide continued 
benefit.

\newpage

\paragraph{Reproducibility.} Code, data, and experiments are available 
on GitHub at \url{https://github.com/yaircarmon/semisup-adv} and on 
CodaLab at \url{https://bit.ly/349WsAC}.

\arxiv{\section*{Acknowledgments}}
\notarxiv{\subsubsection*{Acknowledgments}} %
The authors would like to thank an anonymous reviewer for proposing the 
label amount experiment in~\Cref{sec:app-label-amount}. 
YC was supported by the Stanford Graduate Fellowship. 
AR was supported by the Google Fellowship and Open Philanthropy AI 
Fellowship. 
PL was supported by the Open Philanthropy Project Award. 
JCD was supported by the NSF CAREER award 1553086, the Sloan Foundation and ONR-YIP N00014-19-1-2288.
 \setlength{\bibsep}{5pt plus 1pt minus 1pt}

\bibliographystyle{abbrvnat}

\newpage

\part*{Supplementary Material}

\appendix

\section{Theoretical results}\label{sec:app-theory}

This appendix contains the full proofs for the results in~\Cref{sec:theory}, 
as well as explicit bounds for the robust error of the self-training estimator.

We remark that the results of this section easily extend
to the case where there is class imbalance:
The upper bounds in~\Cref{prop:supervised-samples} 
and~\Cref{thm:semisupervised-samples} hold regardless of the label 
distribution, while the lower bound in~\Cref{thm:lower-bound} changes 
from $\half(1-d^{-1})$ to $p(1-d^{-1})$ where $p$
is the proportion of the smaller class; the only change to the proof 
in~\cite{schmidt2018adversarially} is 
a modification of the lower bound on $\Psi$ in page 29 of the arxiv version.

\subsection{Error probabilities in closed 
form}\label{sec:app-theory-closed-from}

We recall our model $x\sim\mathcal{N}\left(y\mu,\sigma^{2}I\right)$
with $y$ uniform on $\left\{ -1,1\right\} $ and $\mu\in\R^{n}$.
Consider a linear classifier $f_{\theta}\left(x\right)=\mathrm{sign}\left(x^{\top}\theta\right)$.
Then the standard error probability is
\begin{equation}
\err\left(f_{\theta}\right)=\P\left(y\cdot x^{\top}\theta<0\right)=\P\left(\mathcal{N}\left(\frac{\mu^{\top}\theta}{\sigma\left\Vert \theta\right\Vert },1\right)<0\right)\eqdef Q\left(\frac{\mu^{\top}\theta}{\sigma\left\Vert \theta\right\Vert }\right)\label{eq:err-closed-form}
\end{equation}
where 
\[
Q\left(x\right)=\frac{1}{\sqrt{2\pi}}\int_{x}^{\infty}e^{-t^{2}/2}dt
\]
is the Gaussian error function. For linear classifier $f_{\theta}$,
input $x$ and label $y$, the strongest adversarial perturbation
of $x$ with $\ell_{\infty}$ norm $\epsilon$ moves each coordinate
of $x$ by $-\epsilon\mathrm{sign}\left(y\theta\right)$. The robust
error probability is therefore
\begin{align}
\errrob\left(f_{\theta}\right) & =\P\left(\inf_{\left\Vert \nu\right\Vert _{\infty}\le\epsilon}\left\{ y\cdot\left(x+\nu\right)^{\top}\theta\right\} <0\right)\nonumber \\
 & =\P\left(y\cdot x^{\top}\theta-\epsilon\left\Vert \theta\right\Vert _{1}<0\right)=\P\left(\mathcal{N}\left(\mu^{\top}\theta,\left(\sigma\left\Vert \theta\right\Vert \right){}^{2}\right)<\epsilon\left\Vert \theta\right\Vert _{1}\right)\nonumber \\
 & =Q\left(\frac{\mu^{\top}\theta}{\sigma\left\Vert \theta\right\Vert }-\frac{\epsilon\left\Vert \theta\right\Vert _{1}}{\sigma\left\Vert \theta\right\Vert }\right)\le Q\left(\frac{\mu^{\top}\theta}{\sigma\left\Vert \theta\right\Vert }-\frac{\epsilon\sqrt{d}}{\sigma}\right).\label{eq:robust-err-closed-form}
\end{align}
In this model, standard and robust accuracies {align} in the
sense that any highly accurate standard classifier, with $\frac{\mu^{\top}\theta}{\left\Vert \theta\right\Vert }>\epsilon\sqrt{d}$,
will necessarily also be robust. Moreover, for dense $\mu$ (with
$\lone{\mu}/\norm{\mu} =\Omega(\sqrt{d})$),
good linear estimators will typically be dense as well, in which case
$\frac{\mu^{\top}\theta}{\sigma\left\Vert \theta\right\Vert }$ determines
both standard and robust accuracies. Our analysis will consequently
focus on understanding the quantity $\frac{\mu^{\top}\theta}{\sigma\left\Vert \theta\right\Vert }$. 

\subsubsection{Optimal standard accuracy and parameter setting}

We note that for a given problem instance, the classifier that minimizes
the standard error is simply $\theta^{\star}=\mu$. Its standard error
is
\[
\err\left(f_{\theta^{\star}}\right)=Q\left(\frac{\left\Vert \mu\right\Vert }{\sigma}\right)\le e^{-\left\Vert \mu\right\Vert ^{2}/2\sigma^{2}}.
\]
Recall our parameter setting,
\begin{equation}
\epsilon\le\frac{1}{2},\ \sigma=\left(n_{0}d\right)^{1/4},\ \text{and}\ \left\Vert \mu\right\Vert ^{2}=d.\label{eq:param-set}
\end{equation}
Under this setting, $\frac{\left\Vert \mu\right\Vert }{\sigma}=\left(\frac{d}{\nn}\right)^{1/4}$
and we have
\[
\err\left(f_{\theta^{\star}}\right)=Q\left(\left(\frac{d}{\nn}\right)^{1/4}\right)\le e^{-\frac{1}{2}\sqrt{d/\nn}}\ \text{and}\ \errrob\left(f_{\theta^{\star}}\right)\le Q\left(\left(1-\epsilon\right)\left(\frac{d}{\nn}\right)^{1/4}\right)\le e^{-\frac{1}{8}\sqrt{d/\nn}}.
\]
Therefore, in the regime $d/\nn\gg1$, the classifier $\theta^{\star}$
achieves essentially perfect accuracies, both standard and robust.
We will show that estimating $\theta$ from $\nn$ labeled data and
a large number ($\approx\sqrt{d/\nn}$) of unlabeled data allows us to
approach the performance of $\theta^{\star}$, without prior knowledge
of $\mu$. 

\subsection{Performance of supervised estimator}\label{sec:theory-sup}

Given labeled data set $\left(x_{1},y_{1}\right),\ldots,\left(x_{\nlab},y_{\nlab}\right)$
we consider the linear classifier given by
\[
\estim_{\nlab}=\frac{1}{\nlab}\sum_{i=1}^{\nlab}y_{i}x_{i}.
\]
In the following lemma we give a tight concentration bound for $\mu^{\top}\estim_{\nlab}/\left(\sigma\left\Vert \estim_{\nlab}\right\Vert \right)$,
which determines the standard and robust error probabilities of $f_{\estim_{\nlab}}$
via equations (\ref{eq:err-closed-form}) and (\ref{eq:robust-err-closed-form})
respectively
\begin{lemma}
\label{lem:supervised-perf}There exist numerical constants $c_{0},c_{1},c_{2}$
such that under parameter setting (\ref{eq:param-set}) and $d/n_{0}>c_{0}$,
\[
\frac{\mu^{\top}\estim_{\nlab}}{\sigma\left\Vert \estim_{\nlab}\right\Vert }\ge\left(\sqrt{\frac{n_{0}}{d}}+\frac{\nn}{n}\left(1+c_{1}\left(\frac{\nn}{d}\right)^{1/8}\right)\right)^{-1/2}\ \text{with probability }\ge1-e^{-c_{2}\left(d/\nn\right)^{1/4}\min\left\{ n,\left(d/\nn\right)^{1/4}\right\} }.
\]
\end{lemma}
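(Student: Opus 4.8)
The plan is to reduce everything to the single scalar $\rho \defeq \mu^\top\estim_\nlab/(\sigma\norm{\estim_\nlab})$, which by \eqref{eq:err-closed-form} and \eqref{eq:robust-err-closed-form} controls both the standard and robust error probabilities, and then to exploit the fact that $\estim_\nlab$ is an explicit Gaussian average. Writing $x_i = y_i\mu + \sigma g_i$ with $g_i \stackrel{\text{iid}}{\sim}\mathcal N(0,I)$ and using $y_i^2 = 1$ together with the symmetry of $g_i$ gives $y_i x_i = \mu + \sigma(y_i g_i)$ with $y_i g_i \sim \mathcal N(0,I)$, so that
\begin{equation*}
\estim_\nlab = \mu + \frac{\sigma}{\sqrt{\nlab}}\,Z, \qquad Z \sim \mathcal N(0,I).
\end{equation*}
Hence $\mu^\top\estim_\nlab = \norm{\mu}^2 + \frac{\sigma}{\sqrt\nlab}W$ and $\norm{\estim_\nlab}^2 = \norm{\mu}^2 + \frac{2\sigma}{\sqrt\nlab}W + \frac{\sigma^2}{\nlab}B$, where $W\defeq \mu^\top Z \sim \mathcal N(0,\norm{\mu}^2)$ and $B \defeq \norm{Z}^2 \sim \chi^2_d$. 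The problem is thereby reduced to controlling one Gaussian variable $W$ and one chi-squared variable $B$.

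First I would isolate the leading order. Setting $W=0$ and $B=d$ and using the parameter setting $\norm{\mu}^2=d$, $\sigma^2=\sqrt{\nn d}$, a direct computation gives
\begin{equation*}
\frac{\sigma^2\norm{\estim_\nlab}^2}{(\mu^\top\estim_\nlab)^2} \approx \frac{\sigma^2(d + \frac{\sigma^2}{\nlab}d)}{d^2} = \frac{\sigma^2}{d} + \frac{\sigma^4}{\nlab d} = \sqrt{\frac{\nn}{d}} + \frac{\nn}{\nlab},
\end{equation*}
which is exactly the reciprocal of the square of the target bound with $c_1=0$. The content of the lemma is therefore that the fluctuations of $W$ and $B$ perturb this identity by at most the multiplicative factor $(1 + c_1(\nn/d)^{1/8})$ on the $\nn/\nlab$ term.

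To make this precise I would set $u \defeq \frac{\sigma W}{\sqrt\nlab\, d}$ and $\Delta \defeq B - d$, so that the exact identity
\begin{equation*}
\frac{\sigma^2\norm{\estim_\nlab}^2}{(\mu^\top\estim_\nlab)^2} = \frac{\sigma^2}{d}\left[\frac{1 + 2u}{(1+u)^2} + \frac{\sigma^2}{\nlab}\cdot\frac{1 + \Delta/d}{(1+u)^2}\right] \le \sqrt{\frac{\nn}{d}} + \frac{\nn}{\nlab}\cdot\frac{1 + \Delta/d}{(1+u)^2}
\end{equation*}
holds, using $\frac{1+2u}{(1+u)^2}\le 1$ (valid since $(1+u)^2-(1+2u)=u^2\ge 0$). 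It then suffices to show that on a high-probability event both $|u|$ and $|\Delta/d|$ are $O((\nn/d)^{1/8})=O(\gamma^{-1/2})$, where $\gamma\defeq\norm{\mu}/\sigma=(d/\nn)^{1/4}$ is the signal-to-noise ratio; this forces $\frac{1+\Delta/d}{(1+u)^2}\le 1+c_1\gamma^{-1/2}$, matching the claim. For $u$, the Gaussian tail $\P(|W|\ge t\sqrt d)\le 2e^{-t^2/2}$ together with $\frac{\sigma}{\sqrt\nlab}\sqrt d/d = \gamma^{-1}/\sqrt\nlab$ gives $|u|\le\gamma^{-1/2}$ once $t\lesssim\sqrt\nlab\,\gamma^{1/2}$, i.e.\ with failure probability $e^{-c\nlab\gamma}$; for $\Delta$, the Laurent--Massart $\chi^2_d$ bound gives $|\Delta/d|\le\gamma^{-1/2}$ with failure probability $e^{-cd\gamma^{-1}}$. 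Intersecting the two events and rewriting the exponents in the form $(d/\nn)^{1/4}\min\{\nlab,(d/\nn)^{1/4}\}$ yields the stated probability; the hypothesis $d/\nn>c_0$ is what makes $\gamma$ large enough for the linearizations ($1+2u>0$, $(1+u)^2\asymp 1$) to be valid.

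The main obstacle I anticipate is the final calibration: verifying that the natural deviation scale is exactly $\gamma^{-1/2}=(\nn/d)^{1/8}$ (rather than $\gamma^{-1}$), so that it lands precisely in the $(1+c_1(\nn/d)^{1/8})$ slack, and checking that the Gaussian exponent $\nlab\gamma$ and the chi-squared exponent $d\gamma^{-1}$ genuinely combine into a lower bound of the stated form $c_2(d/\nn)^{1/4}\min\{\nlab,(d/\nn)^{1/4}\}$ in every regime of $\nlab$ (the Gaussian term binding for moderate $\nlab$ and the chi-squared term only for very large $\nlab$, with both dominating $\gamma\min\{\nlab,\gamma\}$). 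Everything upstream of the concentration step is an exact algebraic identity, so the difficulty is purely in tuning the concentration radius against the multiplicative slack the lemma permits.
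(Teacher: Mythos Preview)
Your proposal is correct and follows essentially the same route as the paper: both write $\estim_\nlab=\mu+\delta$ with $\delta$ Gaussian, both bound the reciprocal $\sigma^2\norm{\estim_\nlab}^2/(\mu^\top\estim_\nlab)^2$ by separating the deterministic term $\sigma^2/\norm{\mu}^2=\sqrt{\nn/d}$ from a fluctuation term governed by $\norm{\delta}^2$ (chi-squared) and $\mu^\top\delta$ (Gaussian), and both conclude via the same two concentration events. Your inequality $\tfrac{1+2u}{(1+u)^2}\le 1$ is algebraically equivalent to the paper's step of dropping the nonnegative $(\mu^\top\delta)^2/\norm{\mu}^2$ term, and your chi-squared threshold $|\Delta/d|\le\gamma^{-1/2}$ yields an exponent $d\gamma^{-1}$ that is comfortably larger than the $\gamma^2$ the lemma requires, so the calibration you flag as the main obstacle indeed works out.
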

\begin{proof}
We have
\[
\estim_{\nlab}\sim\mathcal{N}\left(\mu,\frac{\sigma^{2}}{n}I\right)\text{ so that \ensuremath{\delta\defeq\estim_{\nlab}-\mu\sim\mathcal{N}\left(0,\frac{\sigma^{2}}{n}I\right)}}.
\]
To lower bound the random variable $\frac{\mu^{\top}\estim_{\nlab}}{\left\Vert \estim_{\nlab}\right\Vert }$
we consider its squared inverse, and decompose it as follows
\begin{align*}
\frac{\left\Vert \estim_{\nlab}\right\Vert ^{2}}{\left(\mu^{\top}\estim_{\nlab}\right)^{2}} & =\frac{\left\Vert \delta+\mu\right\Vert ^{2}}{\left(\left\Vert \mu\right\Vert ^{2}+\mu^{\top}\delta\right)^{2}}=\frac{1}{\left\Vert \mu\right\Vert ^{2}}+\frac{\left\Vert \delta\right\Vert ^{2}-\frac{1}{\norm{\mu}^2}\left(\mu^{\top}\delta\right)^{2}}{\left(\left\Vert \mu\right\Vert ^{2}+\mu^{\top}\delta\right)^{2}}\\
 & \le\frac{1}{\left\Vert \mu\right\Vert ^{2}}+\frac{\left\Vert \delta\right\Vert ^{2}}{\left(\left\Vert \mu\right\Vert ^{2}+\mu^{\top}\delta\right)^{2}}
\end{align*}
To obtain concentration bounds, we note that
\[
\left\Vert \delta\right\Vert ^{2}\sim\frac{\sigma^{2}}{\nlab}\chi_{d}^{2}\ \text{ and }\ \frac{\mu^{\top}\delta}{\left\Vert \mu\right\Vert }\sim\mathcal{N}\left(0,\frac{\sigma^{2}}{n}\right).
\]
Therefore, standard concentration results give
\begin{equation}
\P\left(\left\Vert \delta\right\Vert ^{2}\ge\frac{\sigma^{2}d}{n}\left(1+\frac{1}{\sigma}\right)\right)\le e^{-d/8\sigma^{2}}\ \text{ and }\ \P\left(\frac{\mu^{\top}\delta}{\left\Vert \mu\right\Vert }\ge\left(\sigma\left\Vert \mu\right\Vert \right)^{1/2}\right)\le2e^{-\frac{1}{2}n\left\Vert \mu\right\Vert /\sigma}.\label{eq:supervised-concentration}
\end{equation}
	Assuming that the two events $\left\Vert \delta\right\Vert ^{2}\le\frac{\sigma^{2}d}{n}\left(1+\frac{1}{\sigma}\right)$
and $\left|\mu^{\top}\delta\right|\le  \sigma^{1/2}\left\Vert \mu\right\Vert^{3/2} $
hold, we have
\[
\frac{\left\Vert \estim_{\nlab}\right\Vert ^{2}}{\left(\mu^{\top}\estim_{\nlab}\right)^{2}}
\le
\frac{1}{\left\Vert \mu\right\Vert ^{2}}+\frac{\frac{\sigma^{2}d}{n}\left(1+\frac{1}{\sigma}\right)}{\left\Vert \mu\right\Vert ^{4}\left(1-\left(\sigma/\left\Vert \mu\right\Vert \right)^{1/2}\right)^{2}}.
\]
Substituting the parameter setting setting (\ref{eq:param-set}),
we have that for $d/n_{0}$ sufficiently large,
\begin{align*}
\frac{\sigma^{2}\left\Vert \estim_{\nlab}\right\Vert ^{2}}{\left(\mu^{\top}\estim_{\nlab}\right)^{2}} & \le
\sqrt{\frac{\nn}{d}} + \frac{\frac{\nn d^2}{n}\left(1+\left(\nn d\right)^{-1/4}\right)}{d^{2}\left(1-\left(\nn/d\right)^{1/8}\right)^{2}}\le\sqrt{\frac{\nn}{d}}+\frac{n_{0}}{n}\left(1+c_{1}\left(\nn/d\right)^{1/8}\right)
\end{align*}
for some numerical constant $c_{1}.$ For this to imply the bound
stated in the lemma we also need $\mu^{\top}\estim_{\nlab}\ge0$ to
hold, but this is already implied by
\[
\mu^{\top}\estim_{\nlab}=\left\Vert \mu\right\Vert ^{2}+\mu^{\top}\delta\ge\left\Vert \mu\right\Vert ^{2}\left(1-\left(\sigma/\left\Vert \mu\right\Vert \right)^{-1/2}\right)\ge d\left(1-\left(\frac{\nn}{d}\right)^{1/8}\right)>0.
\]
 Substituting the parameters settings into the concentration bounds
(\ref{eq:supervised-concentration}), we have by the union bound that
the desired upper bound fails to hold with probability at most
\[
e^{-d/8\sqrt{\nn d}}+2e^{-n\sqrt{d}/2\left(\nn d\right)^{1/4}}\le e^{-c_{2}\left(d/\nn\right)^{1/4}\min\left\{ n,\left(d/\nn\right)^{1/4}\right\} }
\]
for another numerical constant $c_{2}$ and $d/\nn>1$. 
\end{proof}
As an immediate corollary to Lemma \ref{lem:supervised-perf}, we
obtain the sample complexity upper bounds cited in the main text.
\restateSupSamples*
\begin{proof}
For the case $n\ge\nn$ we take $r$ sufficiently large such that
by Lemma \ref{lem:supervised-perf} we have
\[
\frac{\mu^{\top}\estim_{\nlab}}{\sigma\left\Vert \estim_{\nlab}\right\Vert }\ge\frac{1}{\sqrt{2\left(\frac{\nn}{n}+\sqrt{\frac{n_{0}}{d}}\right)}}\ge\frac{1}{2}\ \text{with probability}\ge1-e^{-c_{2}\sqrt{d/\nn}}
\]
for an appropriate $c_{2}$. Therefore by the expression (\ref{eq:err-closed-form})
for the standard error probability (and the fact that it is never
more than 1), we have
\[
\E_{\estim_{n}}\err\left(f_{\estim_{n}}\right)\le Q\left(\frac{1}{2}\right)+e^{-c_{2}\left(d/\nn\right)^{1/8}}\le\frac{1}{3}
\]
for appropriate $r$. Similarly, for the case $n\ge\nn\cdot 4 \epsilon^2 \sqrt{\frac{d}{\nn}}$ we apply
Lemma \ref{lem:supervised-perf} combined with $\epsilon<\frac{1}{2}$
to write
\[
\frac{\mu^{\top}\estim_{\nlab}}{\sigma\left\Vert \estim_{\nlab}\right\Vert }\ge\frac{1}{\sqrt{2\left(\frac{\nn}{n}+\sqrt{\frac{n_{0}}{d}}\right)}}\ge\frac{1}{\sqrt{2\left(\frac{\nn}{4\epsilon^{2}\sqrt{\nn d}}+\frac{1}{4\epsilon^{2}}\sqrt{\frac{n_{0}}{d}}\right)}}=\sqrt{2}\epsilon\left(\frac{d}{\nn}\right)^{1/4}
\]
with probability $\ge1-e^{-c_{2}\left(d/\nn\right)^{1/4}\min\left\{ n,\left(d/\nn\right)^{1/4}\right\} }$.
Therefore, using the expression (\ref{eq:robust-err-closed-form})
and $\sigma=\left(\nn d\right)^{1/4}$, we have (using $n\ge\epsilon^{2}\left(d/\nn\right)^{1/4}$)
\[
\E_{\estim_{n}}\errrob\left(f_{\estim_{n}}\right)\le Q\left(\left[\sqrt{2}-1\right]\epsilon(d/\nn)^{1/4}\right)+e^{-\epsilon^{2}c_{2}\sqrt{d/\nn}}\le 10^{-3},
\]
for sufficiently large $r$.
\end{proof}

\subsection{Lower bound}

We now briefly explain how to translate the sample complexity lower
bound of \citet{schmidt2018adversarially} into our parameter setting.
\restateLowerBound*
\begin{proof}
The setting of our theorem is identical to that of Theorem 11 in \citet{schmidt2018adversarially},
which shows that
\[
\E\,\errrob(\learner_{\nlab}[S])\ge\frac{1}{2}\P\left(\left\Vert \mathcal{N}\left(0,I\right)\right\Vert _{\infty}\le\epsilon\sqrt{1+\frac{\sigma^{2}}{n}}\right).
\]
Using $\sigma^{2}=\sqrt{\nn d}$, $\nlab\le\frac{\epsilon^{2}\sqrt{\nn d}}{8\log d}$
implies $\epsilon\sqrt{1+\frac{\sigma^{2}}{n}}\ge\sqrt{8\log d}$
and therefore
\[
\E\,\errrob(\learner_{\nlab}[S])\ge\frac{1}{2}\P\left(\left\Vert \mathcal{N}\left(0,I\right)\right\Vert _{\infty}\le\sqrt{8\log d}\right).
\]
Moreover
\[
\P\left(\left\Vert \mathcal{N}\left(0,I\right)\right\Vert _{\infty}\le\sqrt{8\log d}\right)=\left(1-Q\left(\sqrt{8\log d}\right)\right)^{d}\ge\left(1-e^{-4\log d}\right)^{d}\ge1-\frac{1}{d}.
\]
\end{proof}

\subsection{Performance of semisupervised estimator}\label{sec:theory-semisup}

We now consider the semisupervised setting\textemdash our primary
object of study in this paper. We consider the self-training estimator
that in the first stage uses $\nlab\ge \nn$ labeled examples to construct
\[
\labest\defeq\estim_{\nlab},
\]
and then uses it to produce pseudo-labels 
\[
\yun_{i}=\mathrm{sign}\left(\xun_{i}^{\top}\labest\right)
\]
 for the $\nunlab$ unlabeled data points $\xun_{1},\ldots,\xun_{\nunlab}$.
In the second and final stage of self-training, we employ the same
simple learning rule on the pseudo-labeled data and construct
\[
\finalest\defeq\frac{1}{\nunlab}\sum_{i=1}^{\nunlab}\yun_{i}\xun_{i}.
\]
The following result shows a high-probability
bound on $\frac{\mu^{\top}\semisupest}{\sigma\left\Vert \semisupest\right\Vert }$,
analogous to the one obtained for the fully supervised estimator in
Lemma \ref{lem:supervised-perf} (with different constant factors).
\begin{lemma}
\label{lem:semisupervised-perf}
There exist numerical constants $\tilde{c}_{0},\tilde{c}_{1},\tilde{c}_{2}>0$
such that under parameter setting (\ref{eq:param-set}) and $d/n_{0}>\tilde{c}_{0}$,
\[
\frac{\mu^{\top}\finalest}{\sigma\left\Vert \finalest\right\Vert }\ge\left(\sqrt{\frac{\nn}{d}}+\frac{72\nn}{\nunlab}\left(1+\tilde{c}_{1}\left(\frac{\nn}{d}\right)^{-1/4}\right)\right)^{-1/2}
\]
with probability $\ge1-e^{-\tilde{c}_{2}\min\left\{ \nunlab,\nn\left(d/\nn\right)^{1/4},\sqrt{d/\nn}\right\}}$.
\end{lemma}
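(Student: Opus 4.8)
The plan is to track the single scalar $\frac{\mu^\top\finalest}{\sigma\norm{\finalest}}$ exactly as in the supervised case, using the decomposition $\finalest = \beta\mu + v$ with $\beta \defeq \frac{1}{\nunlab}\sum_{i=1}^{\nunlab} \yun_i y_i$ (the alignment of the pseudo-labels with the true, unobserved labels $y_i$ of the unlabeled points) and $v \defeq \frac{1}{\nunlab}\sum_{i=1}^{\nunlab}\yun_i\varepsilon_i$, where $\xun_i = y_i\mu+\varepsilon_i$. Splitting $v=v_\parallel+v_\perp$ into its components along and orthogonal to $\mu$, the same algebraic manipulation used in \reflem{supervised-perf} yields the exact identity
\[
\frac{\sigma^2\norm{\finalest}^2}{(\mu^\top\finalest)^2} = \frac{\sigma^2}{\norm{\mu}^2} + \frac{\sigma^2\norm{v_\perp}^2}{(\beta\norm{\mu}^2 + \mu^\top v)^2} = \sqrt{\tfrac{\nn}{d}} + \frac{\sigma^2\norm{v_\perp}^2}{(\beta d + \mu^\top v)^2},
\]
using $\norm{\mu}^2=d$ and $\sigma^2=\sqrt{\nn d}$. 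It therefore suffices to (i) lower bound the denominator via $\beta\ge\tfrac16$ together with a small bound on $|\mu^\top v|$, and (ii) upper bound $\norm{v_\perp}^2$ by roughly $d\sigma^2/\nunlab$. Substituting then produces the $\frac{72\nn}{\nunlab}$ term, since $\sigma^2\cdot(d\sigma^2/\nunlab)\big/(d/6)^2 = 36\nn/\nunlab$ up to constants.

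First I would condition on the intermediate estimator $\labest$ and invoke \reflem{supervised-perf}, applied with $n=\nlab\ge\nn$ (lower-bounding its exponent by its value at $n=\nn$), to guarantee $s\defeq\frac{\mu^\top\labest}{\sigma\norm{\labest}}\ge\tfrac12$ on an event of probability $\ge 1-e^{-c\min\{\nn(d/\nn)^{1/4},\,\sqrt{d/\nn}\}}$; this is the source of the last two terms in the claimed probability. Conditionally on $\labest$, the summands $\yun_i y_i = \sign\big((\mu+y_i\varepsilon_i)^\top\labest\big)$ are i.i.d.\ and take values in $\{-1,1\}$ with mean $1-2Q(s)\ge 1-2Q(\tfrac12)>\tfrac13$, so Hoeffding's inequality gives $\beta\ge\tfrac16$ with conditional probability $\ge 1-e^{-c'\nunlab}$, contributing the $\nunlab$ term.

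The crux is controlling the noise vector $v$ despite the dependence of each pseudo-label $\yun_i$ on its own noise $\varepsilon_i$ through $\varepsilon_i^\top\labest$. The key device is to split each $\varepsilon_i$ into its projection onto $\labest$ and the orthogonal complement: the orthogonal part is independent of $\yun_i$, so conditionally on the signs and the along-$\labest$ coordinates, $\sum_i\yun_i\varepsilon_i^{\perp}$ is an isotropic Gaussian in the $(d-1)$-dimensional orthogonal subspace with $\chi^2_{d-1}$-type concentration, yielding $\norm{\tfrac{1}{\nunlab}\sum_i\yun_i\varepsilon_i^{\perp}}^2\approx\frac{d\sigma^2}{\nunlab}$, while the one-dimensional along-$\labest$ contribution is a sum of sub-exponential variables $\yun_i(\varepsilon_i^\top\labest)$ that concentrates separately. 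A short computation gives $\E[\yun_i\varepsilon_i\mid\labest]=\frac{\labest}{\norm{\labest}}\cdot\frac{2\sigma}{\sqrt{2\pi}}e^{-s^2/2}$, so the mean of $v$ points along $\labest\approx\mu$ and is of order at most $\sigma$; since $d\gg\sigma$ this is negligible against $\beta d$ in the denominator and, being essentially parallel to $\mu$, contributes little to $\norm{v_\perp}^2$. The same decomposition controls $\mu^\top v$, whose fluctuations are of order $\sigma\sqrt{d/\nunlab}=o(d)$, so $\beta d+\mu^\top v$ stays above a constant multiple of $d$ on the good event.

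The main obstacle is precisely this third step: the sign nonlinearity couples $\yun_i$ and $\varepsilon_i$, so $v$ is neither a sum of independent symmetric vectors nor a clean Gaussian quadratic form, and it is the projection/orthogonal-complement decomposition relative to $\labest$ that decouples them and reduces $\norm{v_\perp}^2$ to a $\chi^2$ concentration plus a lower-order one-dimensional term. Assembling the three events by a union bound gives the failure probability $e^{-\tilde c_2\min\{\nunlab,\,\nn(d/\nn)^{1/4},\,\sqrt{d/\nn}\}}$, and carrying the multiplicative deviation terms through the identity above yields the lower-order correction $\tilde c_1(\nn/d)^{-1/4}$ inside the final bound.
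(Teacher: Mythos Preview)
Your proposal is correct and follows essentially the same route as the paper: the decomposition $\finalest=\gamma\mu+\tilde\delta$ (your $\beta,v$), the same algebraic identity for $\sigma^2\|\finalest\|^2/(\mu^\top\finalest)^2$, the bound $\gamma\ge\tfrac16$ via \reflem{supervised-perf} plus Hoeffding, and---crucially---the projection onto $\labest$ to decouple $\yun_i$ from the orthogonal noise component. The only difference is that the paper handles the one-dimensional along-$\labest$ contribution and $\mu^\top\tilde\delta$ by the blunt Cauchy--Schwarz bounds $(\tfrac{1}{\nunlab}\sum_i\yun_i\eps_i^{(1)})^2\le\tfrac{1}{\nunlab}\sum_i[\eps_i^{(1)}]^2$ and $(\mu^\top\tilde\delta)^2\le\tfrac{1}{\nunlab}\sum_i(\mu^\top\eps_i)^2$, each reducing to a $\chi^2_{\nunlab}$ tail, rather than your explicit mean computation and sub-exponential concentration; this is a bit cruder but entirely sufficient and saves you from tracking $\E[\yun_i\varepsilon_i\mid\labest]$.
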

\begin{proof}
The proof follows a similar argument to the one used to prove Lemma
\ref{lem:supervised-perf}, except now we have to to take care of
the fact that the noise component in $\finalest$ is not entirely
Gaussian. Let $b_{i}$ be the indicator that the $i$th pseudo-label
is incorrect, so that $\xun_{i}\sim\mathcal{N}\left(\left(1-2b_{i}\right)\yun_{i}\mu,\sigma^{2}I\right)$,
and let 
\[
\gamma\defeq\frac{1}{\nunlab}\sum_{i=1}^{\nunlab}\left(1-2b_{i}\right)\in[-1,1].
\]
We may write the final estimator as
\[
\semisupest=\frac{1}{\nunlab}\sum_{i=1}^{\nunlab}\yun_{i}\xun_{i}=\gamma\mu+\frac{1}{\nunlab}\sum_{i=1}^{\nunlab}\yun_{i}\eps_{i}
\]
where $\eps_{i}\sim\mathcal{N}\left(0,\sigma^{2}I\right)$ independent
of each other. Defining
\[
\tilde{\delta}\defeq\semisupest-\gamma\mu
\]
we have the decomposition and bound
\begin{align}
\frac{\left\Vert \semisupest\right\Vert ^{2}}{\left(\mu^{\top}\semisupest\right)^{2}} & =\frac{\left\Vert \tilde{\delta}+\gamma\mu\right\Vert ^{2}}{\left(\gamma\left\Vert \mu\right\Vert ^{2}+\mu^{\top}\tilde{\delta}\right)^{2}}=\frac{1}{\left\Vert \mu\right\Vert ^{2}}+\frac{\left\Vert \tilde{\delta}+\gamma\mu\right\Vert ^{2}-\frac{1}{\left\Vert \mu\right\Vert ^{2}}\left(\gamma\left\Vert \mu\right\Vert ^{2}+\mu^{\top}\tilde{\delta}\right)^{2}}{\left(\gamma\left\Vert \mu\right\Vert ^{2}+\mu^{\top}\tilde{\delta}\right)^{2}}\nonumber \\
 & =\frac{1}{\left\Vert \mu\right\Vert ^{2}}+\frac{\|\tilde{\delta}\|^{2}-\frac{1}{\left\Vert \mu\right\Vert ^{2}}\left(\mu^{\top}\tilde{\delta}\right)^{2}}{\left(\gamma\left\Vert \mu\right\Vert ^{2}+\mu^{\top}\tilde{\delta}\right)}\le\frac{1}{\left\Vert \mu\right\Vert ^{2}}+\frac{\|\tilde{\delta}\|^{2}}{\left\Vert \mu\right\Vert ^{4}\left(\gamma+\frac{1}{\left\Vert \mu\right\Vert ^{2}}\mu^{\top}\tilde{\delta}\right)^{2}}.\label{eq:semisup-deomp}
\end{align}

To write down concentration bounds for $\|\tilde{\delta}\|^{2}$ and
$\mu^{\top}\tilde{\delta}$ we must address their non-Gaussianity.
To do so, choose a coordinate system such that the first coordinate
is in the direction of $\labest$, and let $v\supind i$ denote the
$i$th entry of vector $v$ in this coordinate system. Then
\[
\yun_{i}=\mathrm{sign}\left(\xun_{i}\supind 1\right)=\mathrm{sign}\left(\mu\supind 1+\eps_{i}\supind 1\right).
\]
Consequently, $\eps_{i}\supind j$ is independent of $\yun_{i}$ for
all $i$ and $j\ge2$, so that $\yun_{i}\eps_{i}\supind j\sim\mathcal{N}\left(0,\sigma^{2}\right)$
and $\frac{1}{\nunlab}\sum_{i=1}^{\nunlab}\yun_{i}\eps_{i}\supind j\sim\mathcal{N}\left(0,\sigma^{2}/\nunlab\right)$
and
\[
\sum_{j=2}^{d}\left(\frac{1}{\nunlab}\sum_{i=1}^{\nunlab}\yun_{i}\eps_{i}\supind j\right)^{2}\sim\frac{\sigma^{2}}{\nunlab}\chi_{d-1}^{2}.
\]
 Moreover, we have by Cauchy\textendash Schwarz
\[
\left(\frac{1}{\nunlab}\sum_{i=1}^{\nunlab}\yun_{i}\eps_{i}\supind 1\right)^{2}\le\frac{1}{\nunlab^{2}}\left(\sum_{i=1}^{\nunlab}\yun_{i}^{2}\right)\left(\sum_{i=1}^{\nunlab}\left[\eps_{i}\supind 1\right]^{2}\right)=\frac{1}{\nunlab}\sum_{i=1}^{\nunlab}\left[\eps_{i}\supind 1\right]^{2}\sim\frac{\sigma^{2}}{\nunlab}\chi_{\nunlab}^{2}.
\]
Therefore, since $\|\tilde{\delta}\|^{2}=\sum_{j=1}^{d}\left(\frac{1}{\nunlab}\sum_{i=1}^{\nunlab}\yun_{i}\eps_{i}\supind j\right)^{2}$,
we have by the union bound
\begin{equation}
\P\left(\|\tilde{\delta}\|^{2}\ge2\frac{\sigma^{2}}{\nunlab}\left(d-1+\nunlab\right)\right)\le\P\left(\chi_{\nunlab}^{2}\ge2\nunlab\right)+\P\left(\chi_{d-1}^{2}\ge2\left(d-1\right)\right)\le e^{-\nunlab/8}+e^{-\left(d-1\right)/8}.\label{eq:delta-til-norm-bound}
\end{equation}
The same technique also yields a crude bound on $\mu^{\top}\tilde{\delta}=\frac{1}{\nunlab}\sum_{i=1}^{\nunlab}\yun_{i}\mu^{\top}\eps_{i}$.
Namely, we have
\[
\left(\mu^{\top}\tilde{\delta}\right)^{2}\le\frac{1}{\nunlab^{2}}\left(\sum_{i=1}^{\nunlab}\yun_{i}^{2}\right)\left(\sum_{i=1}^{\nunlab}\left(\mu^{\top}\eps_{i}\right)^{2}\right)=\frac{1}{\nunlab}\sum_{i=1}^{\nunlab}\left(\mu^{\top}\eps_{i}\right)^{2}\sim\frac{\sigma^{2}\left\Vert \mu\right\Vert ^{2}}{\nunlab}\chi_{\nunlab}^{2}
\]
and therefore 
\[
\P\left(\left|\mu^{\top}\tilde{\delta}\right|\ge\sqrt{2}\sigma\left\Vert \mu\right\Vert \right)=\P\left(\left|\mu^{\top}\tilde{\delta}\right|^{2}\ge2\sigma^{2}\left\Vert \mu\right\Vert ^{2}\right)\le e^{-\nunlab/8}.
\]
Finally, we need to argue that $\gamma$ is not too small. Recall
that $\gamma=\frac{1}{\nunlab}\sum_{i=1}^{\nunlab}\left(1-2b_{i}\right)$ where
$b_{i}$ is the indicator that $\yun_{i}$ is incorrect and therefore
\[
\E\left[\gamma\mid\labest\right]=1-2\err(f_{\labest}),
\]
so we expect $\gamma$ to be reasonably large as long as $\err(f_{\labest})<\frac{1}{2}$.
Indeed,
\begin{align*}
\P\left(\gamma<\frac{1}{6}\right) & =\P\left(\frac{1}{\nunlab}\sum_{i=1}^{\nunlab}\left(1-2b_{i}\right)<\frac{1}{6}\right)\\
 & \le\P\left(\err(f_{\labest})>\frac{1}{3}\right)+\P\left(\frac{1}{\nunlab}\sum_{i=1}^{\nunlab}b_{i}<\frac{5}{12}\mid\err(f_{\labest})\le\frac{1}{3}\right).
\end{align*}
Note that
\[
\frac{1}{3}\ge Q\left(\frac{1}{2}\right)\ge Q\left(\left[2\left(1+\sqrt{\nn/d}\right)\right]^{-1/2}\right)
\]
Therefore, by Lemma \ref{lem:supervised-perf}, for sufficiently large
$d/\nn$,
\[
\P\left(\err(f_{\labest})>\frac{1}{3}\right)\le e^{-c\cdot\min\left\{ \sqrt{d/\nn},\nn\left(d/\nn\right)^{1/4}\right\} }
\]
for some constant $c$. Moreover, by Bernoulli concentration (Hoeffding's
inequality) we have that
\[
\P\left(\frac{1}{\nunlab}\sum_{i=1}^{\nunlab}b_{i}<\frac{5}{12}\mid\err(f_{\labest})\le\frac{1}{3}\right)\le e^{-2\nunlab\left(\frac{5}{12}-\frac{1}{3}\right)^{2}}=e^{-\nunlab/72}.
\]
Define the event,
\[
\mathcal{E}=\left\{ \|\tilde{\delta}\|^{2}\ge2\frac{\sigma^{2}}{\nunlab}\left(d+\nunlab\right),\ \left|\mu^{\top}\tilde{\delta}\right|\le\sqrt{2}\sigma\left\Vert \mu\right\Vert \ \text{ and }\gamma\ge\frac{1}{6}\right\} ;
\]
by the preceding discussion,
\[
\P\left(\mathcal{E}^{C}\right)\le2e^{-\nunlab/8}+e^{-\left(d-1\right)/8}+e^{-c\cdot\min\left\{ \sqrt{d/\nn},\nn\left(d/\nn\right)^{1/4}\right\} }+e^{-\nunlab/72}\le e^{-\tilde{c}_{2}\min\left\{ \nunlab,\sqrt{d/\nn},\nn\left(d/\nn\right)^{1/4}\right\} }.
\]
Moreover, by the bound (\ref{eq:semisup-deomp}), $\mathcal{E}$ implies
\begin{align*}
\frac{\left\Vert \semisupest\right\Vert ^{2}}{\left(\mu^{\top}\semisupest\right)^{2}} & \le\frac{1}{\left\Vert \mu\right\Vert ^{2}}+\frac{2\sigma^{2}\left(d+\nunlab\right)}{\nunlab\left\Vert \mu\right\Vert ^{4}\left(\frac{1}{6}-\frac{\sqrt{2}\sigma}{\left\Vert \mu\right\Vert }\right)^{2}}.
\end{align*}
Substituting $\sigma=\left(\nn d\right)^{1/4}$ and $\left\Vert \mu\right\Vert =\sqrt{d}$
 and multiplying by $\sigma^{2}$ gives
\begin{align*}
\frac{\sigma^{2}\left\Vert \semisupest\right\Vert ^{2}}{\left(\mu^{\top}\semisupest\right)^{2}} & \le\sqrt{\frac{\nn}{d}}+\frac{2\left(\nn d\right)\left(d+\nunlab\right)}{\nunlab d^{2}\left(\frac{1}{6}-\sqrt{2}\left(\frac{\nn}{d}\right)^{1/4}\right)^{2}}\\
 & \le\sqrt{\frac{\nn}{d}}+\frac{72\nn}{\nunlab}\left(1+\tilde{c}_{1}\left(\frac{\nn}{d}\right)^{-1/4}\right)
\end{align*}
for appropriate $\tilde{c}_{1}$ and sufficiently large $d/\nn$.
As argued in Lemma \ref{lem:supervised-perf}, the event $\mathcal{E}$
already implies $\mu^{\top}\semisupest\ge0$, and therefore the result
follows.
\end{proof}
Lemma \ref{lem:semisupervised-perf} immediately gives a sample
complexity upper bound for the self-training classifier $\finalest$
trained with $\nlab$ labeled data and $\nunlab$ unlabeled data.
\restateSemisupSamples*
\begin{proof}
We take $\tilde{r}$ sufficiently large so that by Lemma \ref{lem:semisupervised-perf}
we have, using $\sigma=\left(\nn d\right)^{1/4}$ and $\epsilon<\frac{1}{2},$
\[
\frac{\mu^{\top}\estim_{\nlab}}{\sigma\left\Vert \estim_{\nlab}\right\Vert }\ge\frac{1}{\sqrt{2\left(\frac{72\nn}{n}+\sqrt{\frac{n_{0}}{d}}\right)}}\ge\frac{1}{\sqrt{2\left(\frac{\nn}{4\epsilon^{2}\sqrt{\nn d}}+\frac{1}{4\epsilon^{2}}\sqrt{\frac{n_{0}}{d}}\right)}}=\sqrt{2}\epsilon\left(\frac{d}{\nn}\right)^{1/4}
\]
with probability $\ge1-e^{-\tilde{c}_{2}\min\left\{ \nunlab,\nn\left(d/\nn\right)^{1/4},\sqrt{d/\nn}\right\} }\ge1-e^{-\epsilon^{2}\tilde{c}_{2}\left(d/\nn\right)^{1/4}}$.
Therefore, using the expression (\ref{eq:robust-err-closed-form})
and $\sigma=\left(\nn d\right)^{1/4}$, we have (using $n\ge\epsilon^{2}\left(d/\nn\right)^{1/4}$)
\[
\E_{\estim_{n}}\errrob\left(f_{\estim_{n}}\right)\le Q\left(\left[\sqrt{2}-1\right]\epsilon(d/\nn)^{1/4}\right)+e^{-\epsilon^{2}\tilde{c}_{2}\left(d/\nn\right)^{1/4}}\le 10^{-3},
\]
for sufficiently large $\tilde{r}$.
\end{proof}

\subsection{Performance in the presence of irrelevant data}\label{sec:app-irrelevant}

To model the presence of irrelevant data, we consider a slightly different
model where, for $\alpha\in\left(0,1\right)$, $\alpha\nunlab$ of
the unlabeled data are distributed as $\mathcal{N}\left(y_{i}\mu,\sigma^{2}I\right)$
as before, while the other $\left(1-\alpha\right)\nunlab$ unlabeled
data are drawn from $\mathcal{N}\left(0,\sigma^{2}I\right)$ (with no signal
component). We note that similar conclusions would hold if we let the irrelevant unlabeled data be drawn from $\mathcal{N}\left(\mu_{2},\sigma^{2}I\right)$
for some $\mu_{2}$ such that $\left|\mu^{\top}\mu_{2}\right|$ is
sufficiently small, for example $\mu_{2}\sim\mathcal{N}\left(0,I\right)$
independent of $\mu$. We take $\mu_{2}=0$ to simplify the presentation.

To understand the impact of irrelevant data we need to establish two
statements. First, we would like to show that adversarial robustness
is still possible given sufficiently large $\nunlab$, namely $\Omega\left(\epsilon^{2}\sqrt{\nn d}/\alpha^{2}\right);$
a factor $1/\alpha$ more relevant data then what our previous result
required. Second, we wish to show that this upper bound is tight.
That is, we would like to show that self-training with $\nn$ labeled
data and $O\left(\epsilon^{2}\sqrt{\nn d}/\alpha^{2}\right)$ $\alpha$-relevant
unlabeled data fails to achieve robustness. We make these statements
rigorous in the following.
\begin{theorem}
\label{thm:irrelevant}There exist numerical constants $c$ and $r$
such the following holds under parameter setting (\ref{eq:param-set})
, $\alpha$-fraction of relevant unlabeled data and $\min\{\epsilon^2/\log d, \alpha^2\}\sqrt{d/\nn}>r$.
First,
\[
\nunlab\ge\nn \cdot \frac{288 \epsilon^{2}}{\alpha^2}\sqrt{\frac{d}{\nn}}
\Rightarrow
\E_{\finalest}\errrob\left(f_{\finalest}\right)\le 10^{-3}.
\]
Second, there exists
$\mu\in\R^{d}$ for which
\[
\nunlab\le \nn \frac{c\cdot \epsilon^{2}}{\alpha^2}\sqrt{\frac{d}{\nn}}
\Rightarrow\E_{\finalest}\errrob\left(f_{\finalest}\right)\ge\frac{1}{2}\left(1-\frac{1}{d}\right).
\]
\end{theorem}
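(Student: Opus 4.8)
The plan is to handle the two halves with the single decomposition $\finalest = \gamma\mu + \tilde\delta$ already used in the proof of \reflem{semisupervised-perf}. Writing $b_i$ for the indicator that the $i$th pseudo-label is wrong, only the relevant points carry signal, so $\gamma \defeq \frac{1}{\nunlab}\sum_{\text{relevant }i}(1-2b_i)\in[-\alpha,\alpha]$, whereas \emph{every} point (relevant or not) contributes a pure-noise term, giving $\tilde\delta = \frac{1}{\nunlab}\sum_{i=1}^{\nunlab}\yun_i\eps_i$ with $\eps_i\sim\mathcal{N}(0,\sigma^2 I)$. The two facts driving the result are that irrelevant data scale $\gamma$ down by a factor of $\alpha$, and that they nevertheless inflate both $\ltwo{\tilde\delta}$ and, crucially, $\lone{\tilde\delta}$.

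For the first (achievability) claim I would re-run the proof of \reflem{semisupervised-perf} almost verbatim. The noise bound $\ltwo{\tilde\delta}^2\lesssim\frac{\sigma^2}{\nunlab}(d+\nunlab)$ is unaffected, since the per-coordinate argument (genuine Gaussians in coordinates $j\ge 2$, Cauchy--Schwarz in coordinate $1$) applies to irrelevant points too. The only change is that $\E[\gamma\mid\labest]=\alpha(1-2\err(f_\labest))$, so a Hoeffding bound over the $\alpha\nunlab$ relevant points together with $\err(f_\labest)\le\frac13$ from \reflem{supervised-perf} gives $\gamma\ge\alpha/6$ with high probability. Propagating this through \eqref{eq:semisup-deomp} replaces the constant $72$ by $72/\alpha^2$, so requiring $\nunlab\ge\nn\cdot\frac{288\epsilon^2}{\alpha^2}\sqrt{d/\nn}$ yields $\frac{\mu^\top\finalest}{\sigma\|\finalest\|}\ge\sqrt2\,\epsilon(d/\nn)^{1/4}$, and \eqref{eq:robust-err-closed-form} gives $\errrob\le 10^{-3}$.

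For the second (failure) claim I would evaluate the exact robust error $\errrob(f_\finalest)=Q\big(\frac{\mu^\top\finalest-\epsilon\lone{\finalest}}{\sigma\ltwo{\finalest}}\big)$ from \eqref{eq:robust-err-closed-form} and show its argument is nonpositive with high probability for any fixed dense $\mu$ (e.g.\ proportional to the all-ones vector). Upper-bounding the signal, $\mu^\top\finalest=\gamma\norm{\mu}^2+\mu^\top\tilde\delta\le\alpha d+|\mu^\top\tilde\delta|$, where the same Cauchy--Schwarz bound as in \reflem{semisupervised-perf} gives $|\mu^\top\tilde\delta|=O(\sigma\sqrt d)$, which is $o(\alpha d)$ exactly because the hypothesis forces $\alpha^2\sqrt{d/\nn}>r$. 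Lower-bounding the $\ell_1$ mass, $\lone{\finalest}\ge\lone{\tilde\delta}-\alpha d$ with $\lone{\tilde\delta}\gtrsim d\sigma/\sqrt\nunlab$. When $\nunlab\le\nn\frac{c\epsilon^2}{\alpha^2}\sqrt{d/\nn}=\frac{c\epsilon^2\sigma^2}{\alpha^2}$ we have $\sigma/\sqrt\nunlab\ge\alpha/(\epsilon\sqrt c)$, so $\epsilon\lone{\tilde\delta}\gtrsim\alpha d/\sqrt c$ dominates $\alpha d$ for small $c$; the numerator turns negative and $\errrob(f_\finalest)\ge Q(0)=\half$.

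To reach $\E_\finalest\errrob(f_\finalest)\ge\half(1-d^{-1})$ it then suffices that the event on which the three estimates above hold has probability $\ge 1-d^{-1}$, since on it the robust error is at least $\half$. Each failure probability is exponentially small ($\chi^2$ and Gaussian tails, plus the pseudo-label error bound from \reflem{supervised-perf}), and the hypothesis $\min\{\epsilon^2/\log d,\alpha^2\}\sqrt{d/\nn}>r$ is precisely what makes them $\le d^{-1}$ (e.g.\ $e^{-\epsilon^2\sqrt{d/\nn}}\le d^{-r}$). I expect the main obstacle to be the $\ell_1$ lower bound on $\tilde\delta$: unlike \reflem{semisupervised-perf}, which only needed $\ltwo{\tilde\delta}$, here I must control $\lone{\tilde\delta}$ despite the sign--noise correlation and the positive bias that irrelevant points create in the $\labest$-direction. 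The clean route is to discard coordinate $1$, observe that coordinates $2,\dots,d$ of $\tilde\delta$ are genuinely i.i.d.\ $\mathcal{N}(0,\sigma^2/\nunlab)$ (conditioning on the signs leaves them Gaussian and isotropic), and apply Gaussian Lipschitz concentration to their $\ell_1$ norm; matching constants so that the failure threshold sits just below the achievability threshold is then bookkeeping.
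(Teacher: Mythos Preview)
Your achievability argument matches the paper's: \reflem{ir-semisup-perf} re-runs the proof of \reflem{semisupervised-perf} with the signal scaled by $\alpha$, yielding the $72/\alpha^2$ factor you anticipate.

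For the failure direction your overall strategy---force $\mu^\top\finalest \le \epsilon\lone{\finalest}$ so the $Q$-argument is nonpositive---is right, and the paper does the equivalent thing by bounding the two ratios $\mu^\top\finalest/(\sigma\ltwo{\finalest})$ and $\lone{\finalest}/\ltwo{\finalest}$ separately (\reflem{ir-semisup-perf} and \reflem{sparsity-lb}). But your proposed $\ell_1$ lower bound has a gap. The claim that ``coordinates $2,\dots,d$ of $\tilde\delta$ are genuinely i.i.d.\ $\mathcal{N}(0,\sigma^2/\nunlab)$'' holds only in the basis aligned with $\labest$, and the $\ell_1$ norm is not rotation-invariant: the $\ell_1$ norm you compute in that basis is not $\lone{\tilde\delta}$. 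The paper's \reflem{sparsity-lb} fixes this by staying in the standard basis and splitting $\eps_i = \eps_i^\perp + \eps_i^\parallel$ relative to $\labest$. Since $\pi=\labest/\ltwo{\labest}$ is a unit vector, at least $d/2$ standard-basis coordinates $j$ satisfy $[\pi]_j^2 \le 2/d$; for those $j$ the perpendicular contribution $\frac{1}{\nunlab}\sum_i \yun_i[\eps_i^\perp]_j$ is nearly $\mathcal{N}(0,\sigma^2/\nunlab)$ while the parallel contribution is $O(\sigma/\sqrt{\nunlab d})$, and a coordinate-wise argument gives $\lone{\finalest}/\ltwo{\finalest}\ge k_1\sqrt d$. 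Your Gaussian-Lipschitz route can be salvaged by applying it to $\tilde\delta^\perp$ in the \emph{standard} basis (Gaussian with covariance $\frac{\sigma^2}{\nunlab}(I-\pi\pi^\top)$, not i.i.d.) and separately controlling $\lone{\tilde\delta^\parallel}\le |\pi^\top\tilde\delta|\,\lone{\pi}\le |\pi^\top\tilde\delta|\sqrt d$, but this also needs $\nunlab\le d$.

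You are also missing one structural step. All of your concentration bounds carry failure probabilities like $e^{-c\,\nunlab}$ or $e^{-c\,\alpha\nunlab}$, which are vacuous when $\nunlab$ is tiny---and the theorem only assumes $\nunlab\le \nn\frac{c\epsilon^2}{\alpha^2}\sqrt{d/\nn}$. The paper closes this by reducing to \refthm{lower-bound}: if $\alpha\nunlab < \frac{\epsilon^2\sqrt{\nn d}}{8\log d}-\nn$, then the total data $\nn+\nunlab$ already falls below the information-theoretic barrier, so the conclusion holds for \emph{any} learner. This lets one assume $\alpha\nunlab\gtrsim \epsilon^2\sqrt{\nn d}/\log d$, and the hypothesis $(\epsilon^2/\log d)\sqrt{d/\nn}>r$ then drives every failure probability below $1/d$.
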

Examining the robust error probability (\ref{eq:robust-err-closed-form}),
establishing these results requires upper and lower bounds on the
quantity $\frac{\mu^{\top}\finalest}{\|\finalest\|}$ as well as a
lower bound on $\frac{\|\finalest\|_{1}}{\|\finalest\|}$. We begin
with the former, which is a two-sided version of Lemma \ref{lem:semisupervised-perf}.
\begin{lemma}
\label{lem:ir-semisup-perf}There exist numerical constants $\bar{c}_{0},\underline{c}_{1},\bar{c}_{1},\bar{c}_{2}$
such that under parameter setting (\ref{eq:param-set}), $\alpha$-fraction
of relevant unlabeled data and $d/n_{0}>\bar{c}_{0}/\alpha^{4}$,
\[
\left(\sqrt{\frac{\nn}{d}}+\frac{72\nn}{\alpha^{2}\nunlab}\left(1+\frac{\bar{c}_{1}}{\alpha}\left(\frac{\nn}{d}\right)^{-1/4}\right)\right)^{-1/2}\le\frac{\mu^{\top}\finalest}{\sigma\left\Vert \finalest\right\Vert }\le\left(\sqrt{\frac{\nn}{d}}+\frac{\nn}{2\alpha^{2}\nunlab}\left(1-\frac{\underline{c}_{1}}{\alpha}\left(\frac{\nn}{d}\right)^{-1/4}\right)\right)^{-1/2},
\]
with probability $\ge1-e^{-\bar{c}_{2}\min\left\{ \alpha \nunlab,\nn\left(d/\nn\right)^{1/4},\sqrt{d/\nn}\right\} }$.
\end{lemma}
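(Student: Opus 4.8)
The plan is to mirror the proof of \reflem{semisupervised-perf}, tracking the two changes that the $\alpha$-relevant model introduces: the signal coefficient shrinks from roughly $1$ to roughly $\alpha$, and the noise average now runs over both relevant and irrelevant points. First I would write $\finalest = \gamma\mu + \tilde{\delta}$, where for each of the $\alpha\nunlab$ relevant points $\xun_i = y_i\mu + \eps_i$ contributes $\yun_i\xun_i = (1-2b_i)\mu + \yun_i\eps_i$ (with $b_i$ the pseudo-label error indicator), while each of the $(1-\alpha)\nunlab$ irrelevant points has $\eps_i = \xun_i \sim \mathcal{N}(0,\sigma^2 I)$ and contributes only $\yun_i\eps_i$, with no signal term. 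Thus $\gamma \defeq \frac{1}{\nunlab}\sum_{\text{relevant}}(1-2b_i) \in [-\alpha,\alpha]$ and $\tilde{\delta} \defeq \frac{1}{\nunlab}\sum_{i=1}^\nunlab \yun_i\eps_i$. Exactly as in \eqref{eq:semisup-deomp} I would then expand
\[
\frac{\|\finalest\|^2}{(\mu^\top\finalest)^2} = \frac{1}{\|\mu\|^2} + \frac{\|\tilde{\delta}\|^2 - \|\mu\|^{-2}(\mu^\top\tilde{\delta})^2}{(\gamma\|\mu\|^2 + \mu^\top\tilde{\delta})^2},
\]
so the problem reduces to two-sided control of $\gamma$, of $\|\tilde{\delta}\|^2$, and of $\mu^\top\tilde{\delta}$.

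The conceptual crux is that the irrelevant data fold into the existing Gaussian noise analysis at no cost. Choosing the coordinate system whose first axis is the $\labest$ direction, I would observe that for a relevant point $\yun_i = \sign(y_i\mu\supind{1} + \eps_i\supind{1})$ and for an irrelevant point $\yun_i = \sign(\eps_i\supind{1})$; in both cases $\yun_i$ is a function of $\eps_i\supind{1}$ alone, so $\eps_i\supind{j}$ is independent of $\yun_i$ for every $j\ge2$ and $\yun_i\eps_i\supind{j} \sim \mathcal{N}(0,\sigma^2)$. Hence the off-axis part $\tilde{\delta}_\perp$ (coordinates $2,\dots,d$) is exactly $\frac{\sigma^2}{\nunlab}\chi^2_{d-1}$-distributed regardless of the relevant/irrelevant split, and Cauchy--Schwarz bounds the first coordinate by $\frac{\sigma^2}{\nunlab}\chi^2_\nunlab$ and $\mu^\top\tilde{\delta}$ by $\frac{\sigma^2\|\mu\|^2}{\nunlab}\chi^2_\nunlab$, exactly as in \reflem{semisupervised-perf}. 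Standard $\chi^2$ tails then give both an upper bound $\|\tilde{\delta}\|^2 \le 2\frac{\sigma^2}{\nunlab}(d+\nunlab)$ and a lower bound $\|\tilde{\delta}_\perp\|^2 \ge \frac{1}{2}\frac{\sigma^2}{\nunlab}(d-1)$, together with $|\mu^\top\tilde{\delta}| \le \sqrt2\,\sigma\|\mu\|$, each failing only with probability exponentially small in $\min\{d,\nunlab\}$.

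For $\gamma$ I would prove a two-sided bound. The upper bound $\gamma\le\alpha$ is deterministic. For the lower bound, conditioning on the event $\err(f_\labest)\le\frac{1}{3}$ (which holds off an event of probability $e^{-c\min\{\sqrt{d/\nn},\,\nn(d/\nn)^{1/4}\}}$ by \reflem{supervised-perf}) makes the $\alpha\nunlab$ error indicators $b_i$ conditionally Bernoulli with mean $\le\frac{1}{3}$, and Hoeffding over these $\alpha\nunlab$ terms yields $\gamma\ge\alpha/6$ off an event of probability $e^{-c\alpha\nunlab}$. Plugging these into the displayed decomposition produces the two sides: for the upper estimate on the squared inverse I use $\gamma\ge\alpha/6$ and the upper bound on $\|\tilde{\delta}\|^2$, obtaining (after multiplying by $\sigma^2$ and substituting $\sigma^2=\sqrt{\nn d}$, $\|\mu\|^2=d$) the term $\frac{72\nn}{\alpha^2\nunlab}$; for the lower estimate I use $\gamma\le\alpha$ together with $\|\tilde{\delta}\|^2-\|\mu\|^{-2}(\mu^\top\tilde{\delta})^2 \ge \|\tilde{\delta}_\perp\|^2 - 2\sigma^2 \ge \frac{1}{2}\frac{\sigma^2}{\nunlab}(d-1) - 2\sigma^2$, obtaining the term $\frac{\nn}{2\alpha^2\nunlab}$. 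The multiplicative corrections $(1\pm\frac{\tilde c_1}{\alpha}(\nn/d)^{1/4})$ arise from the cross term $\frac{\mu^\top\tilde{\delta}}{\|\mu\|^2}$, whose magnitude is at most $\sqrt2(\nn/d)^{1/4}$; requiring this to be small relative to $\gamma\approx\alpha$ is exactly what forces the hypothesis $d/\nn > \bar c_0/\alpha^4$. A union bound over the four failure events gives the stated probability $1-e^{-\bar c_2\min\{\alpha\nunlab,\,\nn(d/\nn)^{1/4},\,\sqrt{d/\nn}\}}$.

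The main obstacle I anticipate is the lower bound on the noise, which is what the two-sided statement adds over \reflem{semisupervised-perf}: the clean $\chi^2_{d-1}$ structure lives in the $\labest$-orthogonal subspace, whereas the quantity I must lower-bound is the $\mu$-orthogonal component $\|\tilde{\delta}\|^2-\|\mu\|^{-2}(\mu^\top\tilde{\delta})^2$. Reconciling the two requires the crude bound $|\mu^\top\tilde{\delta}| \le \sqrt2\,\sigma\|\mu\|$ and checking that the resulting $2\sigma^2$ deficit is negligible next to $\frac{\sigma^2}{\nunlab}(d-1)$ in the relevant regime $d\gg\nunlab$; keeping all the $\alpha$-dependent correction terms simultaneously controlled under $d/\nn>\bar c_0/\alpha^4$ is the only delicate bookkeeping.
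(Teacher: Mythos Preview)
Your proposal is correct and follows essentially the same approach as the paper's proof: the same signal--noise decomposition (your $\gamma$ is the paper's $\alpha\gamma$, a harmless normalization difference), the same observation that irrelevant points leave the $\labest$-orthogonal Gaussian structure intact, the same $\chi^2$ tail bounds on $\|\tilde\delta\|^2$ (both upper and lower) and Cauchy--Schwarz bound on $\mu^\top\tilde\delta$, and the same Hoeffding argument for $\gamma\ge\alpha/6$ over the $\alpha\nunlab$ relevant indices. Your handling of the lower bound on the noise via $\|\tilde\delta_\perp\|^2-2\sigma^2$ is in fact slightly cleaner than the paper's version of that step.
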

The proof of Lemma \ref{lem:ir-semisup-perf} is technical and very
similar to the proof of Lemma \ref{lem:semisupervised-perf}, so we
defer it Section \ref{subsec:app-irrelevant-proofs-perf}. We remark that
in the regime $\nunlab\ge\alpha^{-2}$, a more careful concentration
argument would allow us to remove $\alpha$ from the condition $d/n_{0}>\bar{c}_{0}/\alpha^{4}$
and the high order terms of the form $\frac{c_{1}}{\alpha}\left(\frac{\nn}{d}\right)^{-1/4}$
in Lemma \ref{lem:ir-semisup-perf}.

Next, argue that\textemdash at least for certain values of $\mu$\textemdash the
self-training estimator $\finalest$ is dense in the sense that $\|\finalest\|_{1}/\|\finalest\|$
is within a constant of $\sqrt{d}$.
\begin{lemma}
\label{lem:sparsity-lb}Let $\mu$ be the all-ones vector. There exist
constants $k_{1},k_{2}$ such that under parameter setting (\ref{eq:param-set}),
$\alpha$-fraction of relevant unlabeled data and $d\ge\nunlab\ge30$,
\[
\frac{\|\finalest\|_{1}}{\|\finalest\|}\ge k_{1}\sqrt{d}\ \text{with probability }\ge1-e^{-k_{0}\min\left\{ \nunlab,d\right\} }.
\]
\end{lemma}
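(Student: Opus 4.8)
The plan is to bound $\|\finalest\|_1$ from below and $\|\finalest\|$ from above and show the ratio is $\Omega(\sqrt d)$. As in the proof of Lemma~\ref{lem:semisupervised-perf}, write $\finalest = \gamma\mu + \tilde{\delta}$, where $\gamma = \frac{1}{\nunlab}\sum_{\text{relevant }i}(1-2b_i)$ and $\tilde{\delta} = \frac{1}{\nunlab}\sum_{i=1}^{\nunlab}\yun_i\eps_i$ with $\eps_i\sim\mathcal{N}(0,\sigma^2 I)$; the point of taking $\mu=\bone$ is that the signal direction is maximally dense ($\|\mu\|_1/\|\mu\| = \sqrt d$), so the only danger is that the noise makes $\finalest$ sparse. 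The key difficulty is that the $\ell_1$ norm is basis-dependent, so I cannot rotate into the $\labest$-aligned coordinate system used in Lemma~\ref{lem:semisupervised-perf}; I must work in the fixed coordinate system where $\mu = \bone$, and there $\tilde{\delta}$ is not cleanly Gaussian because $\yun_i$ depends on $\eps_i$.

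First I would resolve this by an orthogonal decomposition plus conditioning. Split each $\eps_i = \eps_i^{\parallel} + \eps_i^{\perp}$ into components parallel and orthogonal to $\labest$, and correspondingly $\tilde{\delta} = \tilde{\delta}^{\parallel} + \tilde{\delta}^{\perp}$. Since $\yun_i = \sign(\xun_i^\top\labest)$ is a function of $\eps_i$ only through $\eps_i^{\parallel}$ (together with $y_i$ and $\labest$), the orthogonal parts $\eps_i^{\perp}$ are independent of the pseudo-labels. Conditioning on the $\sigma$-field $\mathcal{F}$ generated by $\labest$, $\{y_i\}$, $\{\yun_i\}$, and $\{\eps_i^{\parallel}\}$, the vector $a \defeq \gamma\mu + \tilde{\delta}^{\parallel}$ becomes fixed while $g\defeq\tilde{\delta}^{\perp}\sim\mathcal{N}(0, \frac{\sigma^2}{\nunlab}P^{\perp})$ is an exactly Gaussian vector supported on the hyperplane orthogonal to $\labest$, with $P^{\perp} = I - \labest\labest^\top/\|\labest\|^2$.

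Next I would lower-bound $\E[\|\finalest\|_1\mid\mathcal{F}]$. Coordinatewise $\finalest\supind{j} = a\supind{j} + g\supind{j}$ with $g\supind{j}\sim\mathcal{N}(0, s_j^2)$, $s_j^2 = \frac{\sigma^2}{\nunlab}(P^{\perp})_{jj}$, and since $c\mapsto\E|c + \mathcal{N}(0,s^2)|$ is minimized at $c=0$, we get $\E[|\finalest\supind{j}|\mid\mathcal{F}]\ge\max(|a\supind{j}|, \sqrt{2/\pi}\,s_j)$. Because $\labest = \mu + \delta$ has no dominant coordinate with high probability over the labeled data, $(P^{\perp})_{jj}\ge\tfrac12$ for all $j$, so summing gives $\E[\|\finalest\|_1\mid\mathcal{F}]\ge\max(\|a\|_1, c_0\,\sigma d/\sqrt{\nunlab})$. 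Then Gaussian Lipschitz concentration applied to the $\sqrt d$-Lipschitz map $v\mapsto\|a+v\|_1$, using $\|\tfrac{\sigma^2}{\nunlab}P^{\perp}\|_{\mathrm{op}} = \sigma^2/\nunlab$ and deviation $t\asymp\sigma d/\sqrt{\nunlab}$, yields $\|\finalest\|_1\ge\tfrac12\E[\|\finalest\|_1\mid\mathcal{F}]$ except with probability $e^{-\Omega(d)}$ (the exponent is $\asymp t^2\nunlab/(d\sigma^2)\asymp d$). For the denominator, $\|\finalest\|\le|\gamma|\sqrt d + \|\tilde{\delta}^{\parallel}\| + \|\tilde{\delta}^{\perp}\|\lesssim|\gamma|\sqrt d + \sigma\sqrt{d/\nunlab}$, reusing the Cauchy--Schwarz/$\chi^2$ bounds $\|\tilde{\delta}^{\parallel}\|\lesssim\sigma$ and $\|\tilde{\delta}^{\perp}\|\lesssim\sigma\sqrt{d/\nunlab}$ from Lemma~\ref{lem:semisupervised-perf} (valid up to $e^{-\Omega(\nunlab)}$, and $\sigma\le\sigma\sqrt{d/\nunlab}$ since $\nunlab\le d$). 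Finally, using $|a\supind{j}|\ge|\gamma| - |\tilde{\delta}^{\parallel}\supind{j}|$ together with $\sum_j|\tilde{\delta}^{\parallel}\supind{j}| \le \|\labest\|_1\|\tilde{\delta}^{\parallel}\|/\|\labest\|\lesssim\sigma\sqrt d$, one gets $\|a\|_1\ge|\gamma|d - C\sigma\sqrt d$; since $\nunlab\le d$ forces $\sigma d/\sqrt{\nunlab}\ge\sigma\sqrt d$, a short case split (on whether $|\gamma|\sqrt d$ or $\sigma\sqrt{d/\nunlab}$ dominates) gives $\max(\|a\|_1, c_0\sigma d/\sqrt{\nunlab})\gtrsim|\gamma|d + \sigma d/\sqrt{\nunlab}$, and dividing by the denominator produces $\|\finalest\|_1/\|\finalest\|\ge k_1\sqrt d$. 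Collecting the failure events (non-dense $\labest$, the two $\chi^2$ tail bounds, and the $\ell_1$ concentration) gives the claimed probability $1 - e^{-k_0\min\{\nunlab, d\}}$.

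The hard part will be the conditioning step: making precise that $\tilde{\delta}^{\perp}$ is exactly $\mathcal{N}(0,\frac{\sigma^2}{\nunlab}P^{\perp})$ conditional on $\mathcal{F}$ — this is what lets an $\ell_1$ lower bound survive in the unrotated basis. The remaining effort is the (routine but somewhat fiddly) accounting that merges the signal contribution $\|a\|_1\approx|\gamma|d$ and the Gaussian-noise contribution $\approx\sigma d/\sqrt{\nunlab}$ into a single clean $\Theta(\sqrt d)\cdot\|\finalest\|$ bound uniformly over all values of $\gamma$ and $\nunlab\le d$, crucially without incurring the $\sqrt{\log d}$ loss that the cruder inequality $\|v\|_1/\|v\|\ge\|v\|/\|v\|_\infty$ would introduce.
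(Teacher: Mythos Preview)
Your proposal is correct and uses the same orthogonal decomposition $\eps_i = \eps_i^{\parallel}+\eps_i^{\perp}$ relative to $\labest$ that the paper uses; the key observation in both arguments is that $\tilde\delta^{\perp}$ is conditionally Gaussian given the pseudo-labels. The difference lies in how the $\ell_1$ lower bound is extracted from this Gaussian.

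The paper works coordinate-by-coordinate: for each $j$ in a deterministic set $J=\{j:\pi_j^2\le 2/d\}$ (so $|J|\ge d/2$), it shows that the $j$th coordinate of $\finalest$ exceeds $\alpha\gamma+\tfrac{\sigma}{2\sqrt{\nunlab}}$ with probability at least $1/10$, and then applies a Hoeffding-type bound to the count $|J'\cap J|$ of coordinates where this happens, concluding $\|\finalest\|_1\ge\tfrac{d}{40}(\alpha\gamma+\tfrac{\sigma}{2\sqrt{\nunlab}})$ with probability $1-e^{-\Omega(d)}$. Your route instead bounds $\E[\|\finalest\|_1\mid\mathcal{F}]$ via the folded-normal mean and then applies Gaussian Lipschitz concentration to the $\sqrt d$-Lipschitz map $v\mapsto\|a+v\|_1$. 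This is arguably cleaner: the paper's assertion that $|J'\cap J|$ is ``binomial'' glosses over the fact that the coordinates of $\tilde\delta^{\perp}$ are (weakly) correlated through $P^{\perp}$, whereas Lipschitz concentration is indifferent to the covariance structure beyond $\|\Sigma\|_{\mathrm{op}}$. The paper's method, on the other hand, is more elementary and gives an explicit constant.

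Two small simplifications you can adopt from the paper. First, you do not need the high-probability event ``$\labest$ has no dominant coordinate'': since $\sum_j\pi_j^2=1$, the set $J=\{j:\pi_j^2\le 2/d\}$ has $|J|\ge d/2$ \emph{deterministically}, and summing your per-coordinate bound over $J$ already gives $\E[\|\finalest\|_1\mid\mathcal{F}]\ge c_0\sigma d/\sqrt{\nunlab}$. This lets you drop ``non-dense $\labest$'' from your failure-event union. Second, your case split on $|\gamma|$ versus $\sigma/\sqrt{\nunlab}$ can be collapsed: the paper just bounds the ratio $\frac{|\gamma|+\sigma/(2\sqrt{\nunlab})}{|\gamma|+C\sigma/\sqrt{\nunlab}}$ by a constant directly, since numerator and denominator contain the same two terms up to constants.
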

We prove Lemma \ref{lem:sparsity-lb} in Section \ref{subsec:app-irrelevant-proofs-sparsity}.
Armed with the necessary bounds, we prove Theorem \ref{thm:irrelevant}. 
\begin{proof}[Proof of Theorem \ref{thm:irrelevant}]
The case $\nunlab\ge\frac{288}{\alpha^{2}}\epsilon^{2}\sqrt{\nn d}$
follows from Lemma \ref{lem:ir-semisup-perf} using an argument identical
to the one used in the proof of Theorem \ref{prop:supervised-samples}.
To show the case $\nunlab\le\frac{c}{\alpha^{2}}\epsilon^{2}\sqrt{\nn d}$,
we take $r$ such that $\frac{\underline{c}_{1}}{\alpha}\left(\frac{\nn}{d}\right)^{-1/4}<\frac{1}{2}$
and apply the upper bound in Lemma \ref{lem:ir-semisup-perf} to obtain
\[
\frac{\mu^{\top}\finalest}{\sigma\left\Vert \finalest\right\Vert }\le2\alpha\sqrt{\nunlab/\nn}\le\sqrt{c}\epsilon\left(\frac{d}{\nn}\right)^{1/4}
\]
with probability $1-e^{-\bar{c}_{2}\min\left\{ \alpha\nunlab,\nn\left(d/\nn\right)^{1/4},\sqrt{d/\nn}\right\} }$.
Next by Lemma \ref{lem:sparsity-lb}, we have
\[
\frac{\|\finalest\|_{1}}{\|\finalest\|}\ge\frac{\sqrt{d}}{k_{1}}
\]
 with probability at least $1-e^{-k_{0}\min\left\{ \nunlab,d\right\} }.$
Therefore, taking $c\le\frac{1}{k_{1}^2}$, we have
and using $\sigma=\left(\nn d\right)^{1/4}$ and the expression (\ref{eq:robust-err-closed-form})
for the robust error probability, we have
\begin{align*}
\E_{\finalest}\errrob\left(f_{\finalest}\right) & \ge\frac{1}{2}\P\left(\frac{\mu^{\top}\finalest}{\sigma\left\Vert \finalest\right\Vert }-\frac{\|\finalest\|_{1}}{\sigma\|\finalest\|}\le0\right)\\
 & \ge\frac{1}{2}\left(1-e^{-\bar{c}_{2}\min\left\{ \alpha\nunlab,\nn\left(d/\nn\right)^{1/4},\sqrt{d/\nn}\right\} }-e^{-k_{0}\min\left\{ \nunlab,d\right\} }\right).
\end{align*}
Finally, we may assume without loss of generality $\alpha\nunlab\ge\frac{\epsilon^{2}\sqrt{\nn d}}{8\log d}-\nn$
because otherwise the result holds by Theorem \ref{thm:lower-bound}.
Using $\sqrt{d/\nn}\ge\epsilon^{-2}r\log d$ and taking $r$
sufficiently large, we have that $\alpha\nunlab\ge\frac{\epsilon^{2}\sqrt{\nn d}}{16\log d}$.
Therefore, $e^{-\bar{c}_{2}\min\left\{ \alpha\nunlab,\nn\left(d/\nn\right)^{1/4},\sqrt{d/\nn}\right\} }+e^{-k_{0}\min\left\{ \nunlab,d\right\} }\le\frac{1}{d}$
for sufficiently large $r$. 
\end{proof}

\subsubsection{\label{subsec:app-irrelevant-proofs-perf}Proof of Lemma \ref{lem:ir-semisup-perf}}

The proof is largely the same as the proof of Lemma \ref{lem:semisupervised-perf}.
We redefine $b_{i}$ to be the indicator of $\yun_{i}$ being incorrect
when $i$ is relevant, and $1/2$ when it is irrelevant. Then, $\xun_{i}\sim\mathcal{N}\left(\left(1-2b_{i}\right)\yun_{i}\mu,\sigma^{2}I\right)$,
and with 
\[
\gamma\defeq\frac{1}{\alpha\nunlab}\sum_{i=1}^{\nunlab}\left(1-2b_{i}\right)\in[-1,1]
\]
we may write the final classifier as
\[
\semisupest=\frac{1}{\nunlab}\sum_{i=1}^{\nunlab}\yun_{i}\xun_{i}=\alpha\gamma\mu+\frac{1}{\nunlab}\sum_{i=1}^{\nunlab}\yun_{i}\eps_{i}
\]
where $\eps_{i}\sim\mathcal{N}\left(0,\sigma^{2}I\right)$ independent
of each other. Defining
\[
\tilde{\delta}\defeq\semisupest-\alpha\gamma\mu
\]
we have the decomposition and bound
\begin{align}
\frac{\left\Vert \semisupest\right\Vert ^{2}}{\left(\mu^{\top}\semisupest\right)^{2}} & =\frac{1}{\left\Vert \mu\right\Vert ^{2}}+\frac{\|\tilde{\delta}\|^{2}-\frac{1}{\left\Vert \mu\right\Vert ^{2}}\left(\mu^{\top}\tilde{\delta}\right)^{2}}{\left(\alpha\gamma\left\Vert \mu\right\Vert ^{2}+\mu^{\top}\tilde{\delta}\right)^{2}}\le\frac{1}{\left\Vert \mu\right\Vert ^{2}}+\frac{\|\tilde{\delta}\|^{2}}{\left\Vert \mu\right\Vert ^{4}\left(\alpha\gamma+\frac{1}{\left\Vert \mu\right\Vert ^{2}}\mu^{\top}\tilde{\delta}\right)^{2}}.\label{eq:ir-semisup-deomp}
\end{align}
As argued in the proof of Lemma (\ref{lem:semisupervised-perf}),
\[
\P\left(\|\tilde{\delta}\|^{2}\ge2\frac{\sigma^{2}}{\nunlab}\left(d-1+\nunlab\right)\right)\le e^{-\nunlab/8}+e^{-\left(d-1\right)/8}.
\]
and
\[
\P\left(\left|\mu^{\top}\tilde{\delta}\right|\ge\sqrt{2}\sigma\left\Vert \mu\right\Vert \right)\le e^{-\nunlab/8}.
\]
Moreover, $\gamma$ is exactly the average of $1-2b_{i}$ over the
relevant data, and therefore, as argued in Lemma (\ref{lem:semisupervised-perf}),
\[
\P\left(\gamma<\frac{1}{6}\right)\le e^{-\alpha\nunlab/72}+e^{-c\cdot\min\left\{ \sqrt{d/\nn},\nn\left(d/\nn\right)^{1/4}\right\} }.
\]
Under the event $\mathcal{E}=\left\{ \|\tilde{\delta}\|^{2}\le2\frac{\sigma^{2}}{\nunlab}\left(d+\nunlab\right),\ \left|\mu^{\top}\tilde{\delta}\right|\le\sqrt{2}\sigma\left\Vert \mu\right\Vert \ \text{ and }\gamma\ge\frac{1}{6}\right\} $,
\[
\frac{\left\Vert \semisupest\right\Vert ^{2}}{\left(\mu^{\top}\semisupest\right)^{2}}\le\frac{1}{\left\Vert \mu\right\Vert ^{2}}+\frac{2\sigma^{2}\left(d+\nunlab\right)}{\alpha^{2}\nunlab\left\Vert \mu\right\Vert ^{4}\left(\frac{1}{6}-\frac{\sqrt{2}\sigma}{\alpha\left\Vert \mu\right\Vert }\right)^{2}}.
\]
Substituting $\left\Vert \mu\right\Vert ^{2}=d$ and $\sigma^{2}=\sqrt{\nn d}$
and multiplying by $\sigma^{2}$, we have
\[
\frac{\sigma^{2}\left\Vert \semisupest\right\Vert ^{2}}{\left(\mu^{\top}\semisupest\right)^{2}}\le\sqrt{\frac{\nn}{d}}+\frac{2\left(\nn d\right)\left(d+\nunlab\right)}{\alpha^{2}\nunlab d^{2}\left(\frac{1}{6}-\frac{\sqrt{2}}{\alpha}\left(\frac{\nn}{d}\right)^{1/4}\right)^{2}}\le\sqrt{\frac{\nn}{d}}+\frac{72\nn}{\alpha^{2}\nunlab}\left(1+\frac{\bar{c}_{1}}{\alpha}\left(\frac{\nn}{d}\right)^{1/4}\right)
\]
for appropriate $\bar{c}_{1}$ and $\alpha^{4}\left(d/\nn\right)$
sufficiently large, which also implies $\mu^{\top}\semisupest\ge0$.
To obtain the other direction of the bound, we note that in the coordinate
system where the first coordinate is in the direction of $\intest$,
\[
\|\tilde{\delta}\|^{2}\ge\sum_{j=2}^{d}\left(\frac{1}{\nunlab}\sum_{i=1}^{\nunlab}\yun_{i}\eps_{i}\supind j\right)^{2}\sim\frac{\sigma^{2}}{\nunlab}\chi_{d-1}^{2}
\]
and therefore
\[
\P\left(\|\tilde{\delta}\|^{2}\le\frac{1}{2}\frac{\sigma^{2}}{\nunlab}\left(d-1\right)\right)\le e^{-\nunlab/32}.
\]
Therefore, under $\mathcal{E}'=\left\{ \|\tilde{\delta}\|^{2}\ge\frac{1}{2}\frac{\sigma^{2}}{\nunlab}\left(d-1\right),\ \left|\mu^{\top}\tilde{\delta}\right|\le\sqrt{2}\sigma\left\Vert \mu\right\Vert \ \text{ and }\gamma\ge\frac{1}{6}\right\} $, substituting
into (\ref{eq:ir-semisup-deomp}) we have
\[
\frac{\left\Vert \semisupest\right\Vert ^{2}}{\left(\mu^{\top}\semisupest\right)^{2}}\ge\frac{1}{\left\Vert \mu\right\Vert ^{2}}+\frac{\sigma^{2}\left(d-1\right)}{2\alpha^{2}\nunlab\left\Vert \mu\right\Vert ^{4}\left(1+\frac{\sqrt{2}\sigma}{\alpha\left\Vert \mu\right\Vert }\right)^{2}}-\frac{2\sigma^{2}}{\alpha^{2}\nunlab\left\Vert \mu\right\Vert ^{4}\left(\frac{1}{6}-\frac{\sqrt{2}\sigma}{\alpha\left\Vert \mu\right\Vert }\right)^{2}}.
\]
Substituting $\left\Vert \mu\right\Vert ^{2}=d$ and $\sigma^{2}=\sqrt{\nn d}$
and multiplying by $\sigma^{2}$, we have
\begin{align*}
\frac{\sigma^{2}\left\Vert \semisupest\right\Vert ^{2}}{\left(\mu^{\top}\semisupest\right)^{2}} & \ge\sqrt{\frac{\nn}{d}}+\frac{\left(\nn d\right)\left(d-1\right)}{2\alpha^{2}\nunlab d^{2}\left(1+\frac{\sqrt{2}}{\alpha}\left(\frac{\nn}{d}\right)^{1/4}\right)^{2}}-\frac{2\nn d}{\alpha^{2}\nunlab d^2\left(\frac{1}{6}-\frac{\sqrt{2}}{\alpha}\left(\frac{\nn}{d}\right)^{1/4}\right)^{2}}\\
 & \ge\sqrt{\frac{\nn}{d}}+\frac{\nn}{2\alpha^{2}n}\left(1-\frac{\underline{c}_{1}}{\alpha}\left(\frac{\nn}{d}\right)^{1/4}\right)
\end{align*}
for appropriate $\underline{c}_{1}$ and $\alpha^{4}\left(d/\nn\right)$
sufficiently large. Both inequalities hold under $\mathcal{E}\cup\mathcal{E}'$,
which by the preceding discussion fails with probability at most
\[
e^{-\nunlab/32}+e^{-\alpha\nunlab/72}+e^{-c\cdot\min\left\{ \sqrt{d/\nn},\nn\left(d/\nn\right)^{1/4}\right\} }+2e^{-\nunlab/8}+e^{-\left(d-1\right)/8}\le e^{-\bar{c}_{2}\min\left\{ \alpha\nunlab,\nn\left(d/\nn\right)^{1/4},\sqrt{d/\nn}\right\} }
\]
for appropriate $\bar{c}_{2}$.

\subsubsection{\label{subsec:app-irrelevant-proofs-sparsity}Proof of Lemma \ref{lem:sparsity-lb}}

As in the proof of Lemma \ref{lem:ir-semisup-perf} we define $b_{i}$
to be the indicator of $\yun_{i}$ being incorrect when $i$ is relevant,
and $1/2$ otherwise. So that we may write
\[
\semisupest=\alpha\gamma\mu+\frac{1}{\nunlab}\sum_{i=1}^{\nunlab}\yun_{i}\eps_{i},\ \text{where }\gamma\defeq\frac{1}{\alpha\nunlab}\sum_{i=1}^{\nunlab}\left(1-2b_{i}\right)\ \text{and }\eps_{i}\sim\mathcal{N}\left(0,\sigma^{2}I\right).
\]
We further decompose $\eps_{i}$ to components orthogonal and parallel
to $\intest$, $\eps_{i}=\eps_{i}^{\perp}+\eps_{i}^{\parallel}$ ,
so that $\eps_{i}^{\perp}\sim\mathcal{N}\left(0,\sigma^{2}\left(1-\pi\pi^{T}\right)\right)$
and $\eps_{i}^{\parallel}\sim\mathcal{N}\left(0,\sigma^{2}\pi\pi^{T}\right)$,
where $\pi$ is a unit vector parallel to $\intest$. We note that
$\eps_{i}^{\perp}$ is independent of $\yun_{i}$ and of $\eps_{i}^{\parallel}$.
Let $\left[v\right]_{j}$ denote the $j$th coordinate of vector $v$
in the standard basis, such that
\[
\|\finalest\|_{1}=\sum_{j=1}^{d}|[\finalest]_{j}|=\sum_{j=1}^{d}\left|\alpha\gamma+\frac{1}{\nunlab}\sum_{i=1}^{\nunlab}\yun_{i}\left[\eps_{i}^{\perp}\right]_{j}+\frac{1}{\nunlab}\sum_{i=1}^{\nunlab}\yun_{i}\left[\eps_{i}^{\parallel}\right]_{j}\right|.
\]
We define
\[
J=\left\{ j\mid\left[\pi\right]_{j}^{2}\le\frac{2}{d}\right\} ,\text{ so that }\left|J\right|\ge\frac{d}{2}
\]
since $\pi$ is a unit vector. For every $j$, $\frac{1}{\nunlab}\sum_{i=1}^{\nunlab}\yun_{i}\left[\eps_{i}^{\perp}\right]_{j}\sim\mathcal{N}\left(0,\frac{\sigma^{2}}{\nunlab}\left(1-\left[\pi\right]_{j}^{2}\right)\right)$
and therefore for every $j\in J$
\[
\P\left(\frac{1}{\nunlab}\sum_{i=1}^{\nunlab}\yun_{i}\left[\eps_{i}^{\perp}\right]_{j}>\frac{\sigma}{\sqrt{\nunlab}}\left(1-\frac{2}{d}\right)\right)\ge Q\left(1\right)\ge\frac{1}{8}.
\]
Moreover, by Cauchy\textendash Schwarz
\[
\left(\frac{1}{\nunlab}\sum_{i=1}^{\nunlab}\yun_{i}\left[\eps_{i}^{\parallel}\right]_{j}\right)^{2}\le\frac{1}{\nunlab}\sum_{i=1}^{\nunlab}\left[\eps_{i}^{\parallel}\right]_{j}^{2}\sim\frac{\sigma^{2}\left[\pi\right]_{j}^{2}}{\nunlab}\chi_{\nunlab}
\]
and therefore for every $j\in J$ 
\[
\P\left(\left|\frac{1}{\nunlab}\sum_{i=1}^{\nunlab}\yun_{i}\left[\eps_{i}^{\parallel}\right]_{j}\right|>\frac{2\sigma}{\sqrt{\nunlab d}}\right)\le e^{-\nunlab/8}.
\]
Therefore, with probability at at least $Q\left(1\right)-e^{-\nunlab/8}\ge\frac{1}{10}$
for $\nunlab\ge30$,
\begin{equation}\label{eq:l1-lb}
\left|\alpha\gamma+\frac{1}{\nunlab}\sum_{i=1}^{\nunlab}\yun_{i}\left[\eps_{i}^{\perp}\right]_{j}+\frac{1}{\nunlab}\sum_{i=1}^{\nunlab}\yun_{i}\left[\eps_{i}^{\parallel}\right]_{j}\right|\ge\alpha\gamma+\frac{\sigma}{\sqrt{\nunlab}}\left(\sqrt{1-\frac{2}{d}}-\frac{2}{\sqrt{d}}\right)\ge\alpha\gamma+\frac{\sigma}{2\sqrt{\nunlab}}
\end{equation}
for $d\ge20.$ We implicitly assumed here $\gamma>0$, which we have previously argued to hold with high probability. However, an analogous bound holds if $\gamma\le0$ and so we don't need to take the this into account here.

Let $J'$ be the random set of coordinates for which
the inequality~\eqref{eq:l1-lb} holds; we have $\|\finalest\|_{1}\ge d\left(\alpha\gamma+\frac{\sigma^{2}}{2\nunlab}\right)\left|J'\right|\ge d\left(\alpha\gamma+\frac{\sigma^{2}}{2\nunlab}\right)\left|J'\cap J\right|$.
Moreover, by the above discussion $\left|J'\cap J\right|$ is binomial
with at least $d/2$ trials and success probability at least $1/10$.
Therefore
\[
\P\left(\left|J'\cap J\right|\le\frac{d}{20}-\frac{d}{40}\right)\le e^{-2\left(d/2\right)/20^{2}}=e^{-d/400}.
\]
And consequently we have
\[
\|\finalest\|_{1}\ge\frac{d}{40}\left(\alpha\gamma+\frac{\sigma^{2}}{2\nunlab}\right)\text{ with probability}\ge 1- e^{-d/400}.
\]
Next we need to argue about the Euclidean norm
\[
\|\finalest\|=\norms{\alpha\gamma\mu+\tilde{\delta}} \le\left\Vert \alpha\gamma\mu\right\Vert +\norms{\tilde{\delta}} =\alpha\gamma\sqrt{d}+\norms{\tilde{\delta}}
\]
with $\tilde{\delta}=\frac{1}{\nunlab}\sum_{i=1}^{\nunlab}\yun_{i}\eps_{i}$.
As argued in Eq. (\ref{eq:delta-til-norm-bound}) in the proof of
Lemma \ref{lem:semisupervised-perf},
\[
\P\left(\|\tilde{\delta}\|^{2}\ge2\frac{\sigma^{2}}{\nunlab}\left(d-1+\nunlab\right)\right)\le e^{-\nunlab/8}+e^{-\left(d-1\right)/8}.
\]

Under $\|\finalest\|_{1}\ge\frac{d}{40}\left(\alpha\gamma+\frac{\sigma^{2}}{2\nunlab}\right)$
and $\|\tilde{\delta}\|^{2}\le2\frac{\sigma^{2}}{\nunlab}\left(d-1+\nunlab\right)\le2\frac{\sigma^{2}}{\nunlab}\left(\sqrt{d}+\sqrt{\nunlab}\right)^{2}$
we have
\[
\frac{\|\finalest\|_{1}}{\|\finalest\|}\ge\frac{d}{40\sqrt{d}}\frac{\alpha\gamma+\frac{1}{2}\sigma\nunlab^{-1/2}}{\alpha\gamma+\sqrt{2}\sigma\nunlab^{-1/2}+\sqrt{2}\sigma d^{-1/2}}\ge\frac{\sqrt{d}}{k_{1}}
\]
for $d\ge\nunlab$ and $k_{1}=160\sqrt{2}$. By the preceding discussion,
this happens with probability at least $1-e^{-\nunlab/8}-e^{-\left(d-1\right)/8}-e^{-d/400}\ge1-e^{k_{1}\min\left\{ \nunlab,d\right\} }$.

\section{CIFAR-10 experimental setup}
\subsection{Training hyperparameters}\label{sec:app-hyper}

Here we describe the training hyperparameters used in our main CIFAR-10 experiments. Our additional experiments in \Cref{sec:app-cifar} use the same hyperparameters unless otherwise mentioned.

\paragraph{Architecture.} 
We use a Wide ResNet 28-10 architecture, as in~\citep{madry2018towards} and similarly to~\citep{zhang2019theoretically}, who use a 34-10 variant.

\paragraph{Robust self-training.} 
We set the regularization weight $\beta=6$ as in~\citep{zhang2019theoretically}. We implicitly set the unlabeled data weight to $\wun=50\text{K}/500\text{K} = 0.1$ by composing every batch from equal parts labeled and unlabeled data.

\paragraph{Adversarial self-training.} 
We compute $x_\texttt{PG}$ exactly as in~\citep{zhang2019theoretically}, with step size $0.007$, $10$ iterations and $\epsilon=8/255$. 

\paragraph{Stability training.} We set the additive noise variance to $\sigma=0.25$. We perform the certification using the randomized smoothing protocol described in~\cite{cohen2019certified}, with parameters $N_0 = 100$, $N=10^4$, $\alpha=10^{-3}$ and noise variance $\sigma=0.25$.

\paragraph{Input normalization.}  We scale each pixel in the input image by $1/255$, to be in $[0,1]$. 

\paragraph{Data augmentation.}  We perform the standard CIFAR-10 data augmentation: a random 4-pixel crop followed by a random horizontal flip.

\paragraph{Optimizer configuration.} We use the hyperparameters~\citep{cubuk2019autoaugment} prescribe for \wrn{28}{10} and CIFAR-10, except for batch size and number of epochs: initial learning rate $0.1$, cosine learning rate annealing~\cite{loshchilov2017sgdr} (without restarts), weight decay $5\cdot10^{-4}$ and SGD with Nesterov momentum $0.9$. To speed up robust training, we doubled the batch size from 128 and 256. (This way, every batch has 128 original CIFAR-10 images and 128 pseudo-labeled imaged). 

\paragraph{Number of gradient steps.}
Since we increase the dataset size by a factor of 10, we expect to require more steps for training to converge. However, due to the high cost of adversarial training (caused by the inner optimization loop), we restricted the training of $\rstat{50K+500K}$ to 39K  gradient steps. This corresponds to 100 CIFAR-10 epochs at batch size 128, which is standard. Stability training is much cheaper (involving only random sampling at each step), and we train $\rstst{50K+500K}$ for 156K gradient steps. Training for longer will likely enhance performance, but probably not dramatically. 

\paragraph{Pseudo-label generation.} 
We used the same model to generate the pseudo-labels in all of our CIFAR-10 robust-training experiments. They were generated by a $\wrn{28}{10}$ which we trained on the CIFAR-10 training set only. The training parameters were exactly like those of the baseline in~\citep{cubuk2019autoaugment} (with standard augmentation). That is to say, training parameters were as above, except we used batch size 128 and ran 200 epochs. The resulting model had 96.0\% accuracy on the CIFAR-10 test set.

\paragraph{Baselines.}

For fully supervised adversarial training the above hyperparameter configuration fails due to overfitting, as we report in detail in~\Cref{sec:app-hyper-comp}. Stability training did not exhibit overfitting, but longer training did not improve results: we trained  $\baseline{st}{50K}$  for 19.5K gradient steps, i.e.\ 100 epochs at batch size 256. We also tried training for 200 and 400 epochs, but saw no improvement when comparing certification results over 10\% of the data.

\subsection{Implementation and running times}

We implement our experiments in PyTorch~\citep{paszke2017automatic}, using open source code from~\citep{zhang2019theoretically,
  cohen2019certified}. We ran all our experiments on Titan Xp GPU's. 
  Training $\rstat{50K+500K}$ took 28 hours on 4 GPU's. Running 
  $\pgdours$ on a $\wrn{28}{10}$ took 30 minutes on 4 GPU's. Training 
  $\rstst{50K+500K}$ took 40 hours on 2 GPU's. Running randomized 
  smoothing certification on a $\wrn{28}{10}$ took 19 hours on 2 GPU's.
\subsection{Tuning attacks}\label{sec:app-attack}
In this section, we provide details on how we tuned the parameters for the projected gradient attack to evaluate adversarially trained models. Let $\pgd{\eta}{\tau}{\rho}$ to denote an attack with step-size $\eta$ performing $\tau$ steps, with $\rho$ restarts.
We tune the attacks to maximally reduce the accuracy of our model $\rstat{50K+500K}$.

For every restart, we start with a different random initialization within the $\ell_\infty$ ball of radius $\epsilon$. At every step of every restart, we check if an adversarial example (i.e.\ an input that causes misclassification in the model) was found. We report the final accuracy as the percentage of examples where no successful adversarial example was found across all steps and restarts. 

We first focus on $\epsilon=8/255$ which is the main size of perturbation of interest in this paper. We report all the numbers with $1$ significant figure and observe around $0.05\%$ variation across multiple runs of the same attack on the same model. 

\paragraph{Step-size.} We use $\rho=5$ restarts and experiment tune the number of steps and the steps size. We tried $20, 40, 60$ steps and step sizes $0.005$, $0.01$ and $0.02$. Table~\ref{table:ss} summarizes the results. 
\begin{table}
  \centering
  \begin{tabular}{cccc}
  	\toprule
    Number of steps $\tau$ & $\eta=0.005$ & $\eta=0.01$ & $\eta=0.02$ \\
    \midrule
    $\tau=20$ & 63.4 & \textbf{62.9} & 62.9 \\
    $\tau=40$ & 62.8 & \textbf{62.5} & 63 \\
    $\tau=60$ & 62.6 & \textbf{62.5} & 62.8 \\
    \bottomrule
  \end{tabular}
  ~
  \caption{Tuning the step-size $\eta$ for the PG attack. Over the range of step numbers considered, $\eta=0.01$ was the most effective against our model.}
  \label{table:ss}
\end{table}
We see that the mid step-size $\eta=0.01$ provided the best accuracy, across the range of steps. We also chose $\tau=40$ for computational benefit, since larger $\tau$ did not seem to provide much gain. We thus obtained the $\pgdours$ configuration with $\eta=0.01, \tau=40, \rho=5$, that we used to test multiple reruns of our model and other models from the literature. 

To compare to previous attacks typically used in the literature: $\pgdmadry$~\citep{madry2018towards} corresponds to $\eta=0.007$,  $\rho=1$ and $\tau=20$, and $\pgdtrades$~\citep{zhang2019theoretically} corresponds to $\eta=0.003$, $\rho=1$ and $\tau=20$, \emph{without random initializations} (attack is always initialized at the test input $x$). We use both more steps and more restarts than used typically, and also tune the step-size to observe that $\eta=0.01$ was worse for our model than the smaller step-size of $0.007$.

\paragraph{Number of restarts.}
PG attack which performs projected gradient method on a non-convex objective is typically sensitive to the exact initialization point. Therefore, multiple restarts are generally effective in bringing down robust accuracy.
We now experiment with the effect of number of restarts. We perform upto $20$ restarts and see a very gradual decrease in robust accuracy, with around $0.2\%$ drop from just using $5$ restarts. See Table~\ref{table:restarts} for the robust accuracies at different number of restarts. 
\begin{table}
  \centering
  \begin{tabular}{cc}
  	\toprule
    Number of restarts & Robust accuracy \\
    \midrule
    $\rho=2$ &62.7 \\
    $\rho=4$ &62.5 \\
    $\rho=6$ &62.5 \\
    $\rho=8$ &62.4 \\
    $\rho=10$ &62.4 \\
    $\rho=12$ &62.4 \\
    $\rho=14$ &62.3 \\
    $\rho=16$ &62.3 \\
    $\rho=18$ &62.3 \\
    $\rho=20$ &62.3 \\
    \bottomrule
  \end{tabular}
  ~
  \caption{Effect of number of restarts on robust accuracy with fixed step-size $\eta=0.01$ and number of steps $\tau=40$.}
  \label{table:restarts}
\end{table}
We remark that using a much larger number of steps or restarts could cause additional degradation (as in~\cite{hendrycks2019pretraining}), but we stick to the order of steps and restarts that are typically reported in the literature.

\paragraph{Fine-tuning for different $\epsilon$.}
Figure~\ref{fig:main-at} reports the accuracies for different $\epsilon$.
A smaller $\epsilon$ would typically require a smaller step-size. We fine-tune the step-size for each $\epsilon$ separately, by fixing the number of restarts to $5$ and number of steps to $40$.
\begin{itemize}
\item For $\epsilon = 0.008$, we span $\eta \in \{ 0.001, 0.002, 0.005 \}$.
\item For $\epsilon = 0.016$, we span $\eta \in \{ 0.002, 0.005, 0.01 \}$.
\item For $\epsilon = 0.024$, we span $\eta \in \{ 0.005, 0.01, 0.02 \}$.
\item For $\epsilon = 0.039$, we span $\eta \in \{ 0.005, 0.01, 0.02 \}$.
\end{itemize}
\subsection{Comparison with hyperparameters in~\citep{zhang2019theoretically}}\label{sec:app-hyper-comp}

As a baseline for adversarial robust self-training, we attempted to reproduce the results of~\citep{zhang2019theoretically}, whose publicly-released model has 56.6\% robust accuracy against $\pgdtrades$, 55.3\% robust accuracy against $\pgdours$, and 84.9\% standard accuracy. However, performing adversarial training with the hyper-parameters described in \Cref{sec:app-hyper} produces a poor result: the resulting model has only 50.8\% robust accuracy against $\pgdours$, and slightly better 85.8\% standard accuracy. We then changed all the hyper-parameters to be the same as in~\citep{zhang2019theoretically}, with the exception of the model architecture, which we kept at $\wrn{28}{10}$. The resulting model performed somewhat better, but still not on par with the numbers reported in~\citep{zhang2019theoretically}. 

Examining the training traces (\Cref{fig:trades-train}) reveals that without unlabeled data, both hyper-parameter configurations suffer from overfitting. More precisely, the robust accuracy degrades towards the end of the training, while standard accuracy gradually improves. In contrast, we see no such overfitting with $\rstat{50K+500K}$, directly showing how unlabeled data aids generalization. 

Finally, we perform ``early-stopping'' with the model trained according to~\citep{zhang2019theoretically}, selecting the model with highest validation robust accuracy. This model has 55.5\% robust accuracy against $\pgdtrades$, 54.1\% robust accuracy against $\pgdours$, and 84.5\% standard accuracy. This is reasonably close to the result of~\citep{zhang2019theoretically}, considering we used a slightly lower-capacity model.

\begin{figure}
	\centering
	\includegraphics[height=6cm]{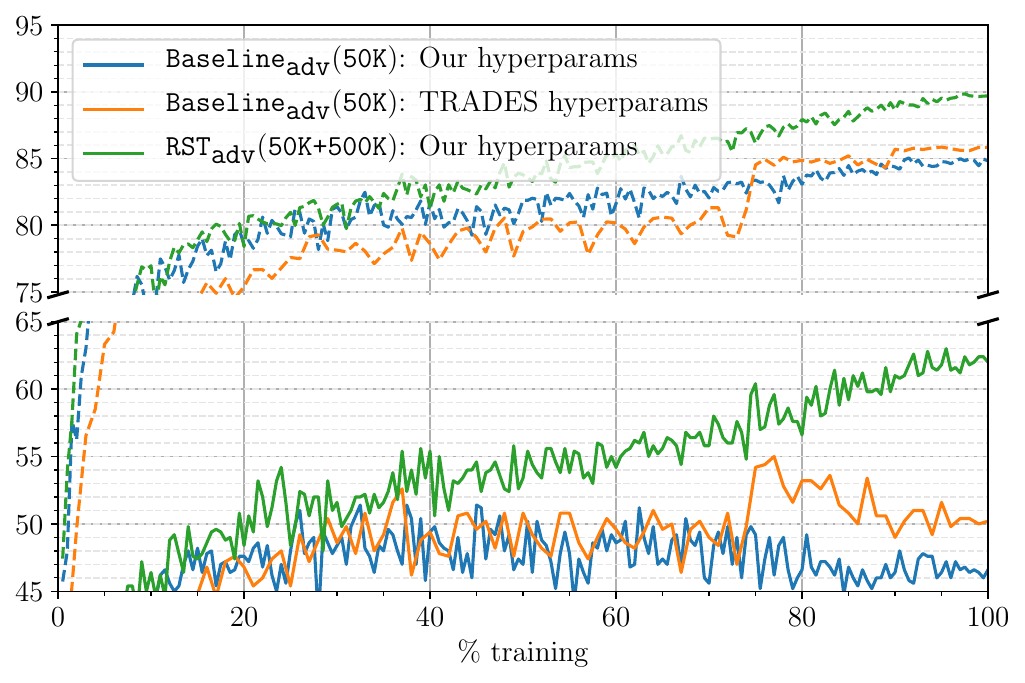}
	\caption{Comparison of training traces for different hyperpameters of adversarial training. Dashed lines show standard accuracy on the entire CIFAR-10 test set, and whole lines show robust accuracy against $\pgdtrades$ evaluated on the first 500 images in the CIFAR-10 test set.}
	\label{fig:trades-train} 
\end{figure}
\subsection{Comparison between stability and noise training}\label{sec:app-compare-stab}
As we report in \Cref{sec:benefit}, our fully supervised baseline for stability  training, i.e.\ $\baseline{stab}{50K}$, 
significantly outperforms the model trained in~\citep{cohen2019certified} and 
available online. There are three a-priori reasons for that: (i) our \wrn{28}{10} has higher capacity than the ResNet-110 used in~\citep{cohen2019certified}, (ii) we employ a different training configuration (e.g. a cosine instead of step learning rate schedule) and (iii) we use stability training while~\citet{cohen2019certified} use a different training objective. Namely, ~\citet{cohen2019certified} add $\sN(0,\sigma^2 I)$ to the input during training, but treat the noise as data augmentation, minimizing the loss
\[
\E_{\xadv\sim\sN(x, \sigma^2 I)} \loss(\theta, \xadv, y).
\]
We refer to the training method of ~\citep{cohen2019certified} as \emph{noise training}.

To test which of these three differences causes the gap in performance we train the following additional models. First, we perform noise training with ResNet-110, but otherwise use the same configuration used to train $\baseline{stab}{50K}$. Second, we keep the ResNet-110 architecture and our training configuration, and use stability training instead. Finally, we perform noise training on $\wrn{28}{10}$ with all other parameters the same as $\baseline{stab}{50K}$. As our goal in this section is to compare supervised training techniques for randomized smoothing, these experiments only use the CIFAR-10 training set (and no unlabeled data).

We plot the performance of all of these models, as well as the model of~\citep{cohen2019certified} and $\baseline{stab}{50K}$, in \Cref{fig:compare-stab}. Starting with the model of~\citep{cohen2019certified} and using our training configuration increases accuracy by 2--3\% across most perturbation radii. Switching to stability training reduces clean accuracy by roughly 3\%, but dramatically increases robustness at large radii, with a 13\% improvement at radius 0.5. Using the larger $\wrn{28}{10}$ model further improves performance by roughly 2\%. We also see that noise training on $\wrn{28}{10}$  performs better at radii below 0.25, and worse on larger radii. With stability training it is possible to further explore the tradeoff between accuracy at low and high radii by tuning the parameter $\beta$ in \eqref{eq:trades}, but we did not pursue this (all of our experiments are with $\beta=6$).

\begin{figure}
	\centering
	\begin{minipage}[t]{0.49\textwidth}
		\centering
		\includegraphics[height=4.0cm]{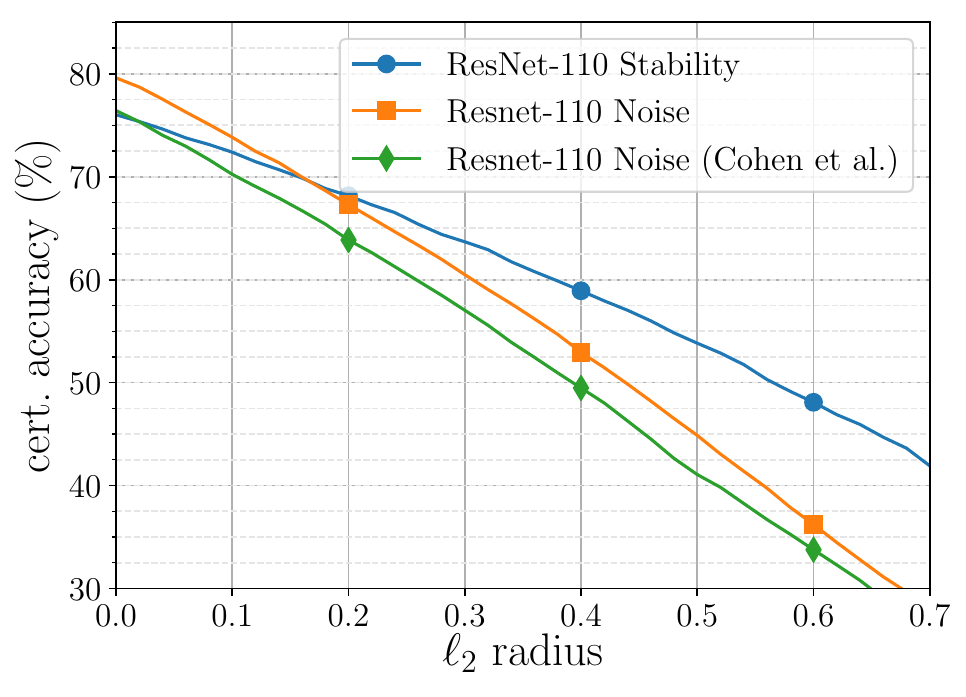}
		\\[0pt]
		\footnotesize\textbf{(a)}
	\end{minipage}
	\begin{minipage}[t]{0.49\textwidth}
		\centering
		\includegraphics[height=4.0cm]{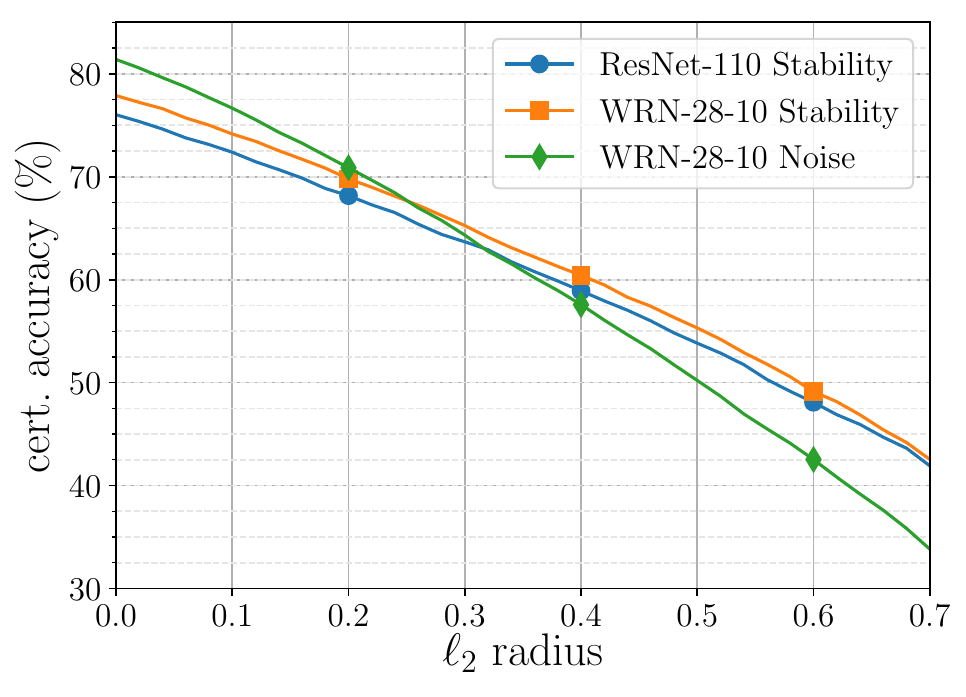}
		\\[0pt]
		\footnotesize\textbf{(b)}
	\end{minipage}
	\caption{Certified accuracy as a function of $\ell_2$ perturbation radius, comparing two architectures, two hyperparameter sets and two training objectives. \textbf{(a)} Both stability training and our hyperparameter choice improve performance. \textbf{(b)} Increasing model capacity improves performance further, and stability training remains beneficial.}
	\label{fig:compare-stab} 
\end{figure}
\subsection{Sourcing unlabeled data for CIFAR-10}\label{app:sourcing} Here we provide a detailed description of the sourcing process we describe in \Cref{sec:sourcing}. To obtain unlabeled data distributed similarly to the CIFAR-10 images, we use the 80 Million Tiny Images (80M-TI) dataset~\citep{torralba2008million}. This dataset contains 79,302,017 color images that were obtained via querying various keywords in a number of image search engines, and resizing the results to a 32x32 resolution. CIFAR-10 is a manually-labeled subset of 80M-TI. However, most of the 80M-TI do not fall into any of the CIFAR-10 categories (see \Cref{fig:80mti-display}a) and the query keywords constitute very weak labels (\Cref{fig:80mti-display}b).  To select relevant images, we train a classifier to classify TI data as relevant or not, in the following steps.

\paragraph{Training data for selection model.}
We create an 11-class training set consisting of the CIFAR-10 training set and 1M images sampled at random from the 78,712,306 images in 80M-TI with keywords that did not appear in CIFAR-10. We similarly sample an additional 10K images for validation.

\paragraph{Training the selection model.}
We train an 11-way classifier on this dataset, with the same $\wrn{28}{10}$ architecture employed in the rest of our experiments. We use the hyperparmeters described in~\Cref{sec:app-hyper}, except we run for 117K gadient steps. We use batch size 256 and comprise each batch from 128 CIFAR-10 images and 128 80M-TI images, and we also weight the loss of the ``80M-TI'' class by 0.1 to balance its higher number of examples. During training we evaluate the model on its accuracy of discriminating between CIFAR-10 and 80M-TI on a combination of the CIFAR-10 test and the 10K 80M-TI validation images, and show the training trace in \Cref{fig:cifar-vs-80mti}. Towards the end of the training, the CIFAR-10 vs. 80M-TI accuracy started to degrade, and we therefore chose and earlier checkpoint (marked in the figure) to use as the data selection model. This model achieves 93.8\%  CIFAR-10 vs. 80M-TI test accuracy.

\paragraph{Removing CIFAR-10 test set.}
To ensure that there is no leakage of the CIFAR-10 test set to the unlabeled data we source, we remove from 80M-TI all near-duplicates of the CIFAR-10 test set. Following~\citep{recht2018cifar}, we define a near-duplicate as an image with $\ell_2$ distance below $2000/255$ to any of the CIFAR-10 test images. We visually confirm that images with distance greater than $2000/255$ are substantially different. Our near-duplicate removal procedure leaves 65,807,640 valid candidates.

\paragraph{Selecting the unlabeled data.}
We apply our classifier on 80M-TI, with all images close to the CIFAR-10 test set excluded as described above. For each CIFAR-10 class, we select the 50,000 images which our classifier predicts with the highest confidence as belonging to that class. This is our unlabeled dataset, depicted in \Cref{fig:80mti-pred-display}, which is 10x the original CIFAR-10 training set and approximately class balanced.

Examining \Cref{fig:80mti-pred-display}, it is clear that our unlabeled dataset is not entirely relevant to the CIFAR-10 classification task. In particular, many of the ``frog'' and ``deer'' images are not actually frogs and deer. It is likely possible to obtain higher quality data by tuning the selection model (and particularly its training) more carefully. We chose not to do so for two reasons. First, allowing some amount of irrelevant unlabeled data more realistically simulates robust self-training in other contexts. Second, for a totally fair comparison against~\citep{zhang2019theoretically}, we chose not to use  state-of-the-art architectures or training techniques for the data selection model, and instead make it as close as possible to the final robust model.

\begin{figure}
	\centering
	\begin{minipage}[t]{0.49\textwidth}
		\centering
		\includegraphics[width=0.8\columnwidth]{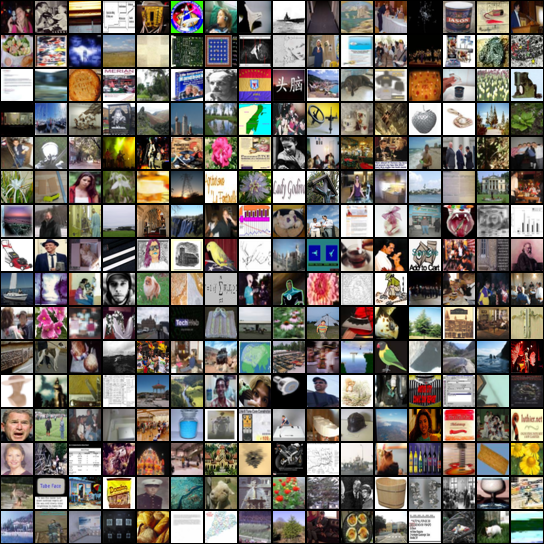}
		\\[0pt]
		\footnotesize\textbf{(a)}
	\end{minipage}
	\begin{minipage}[t]{0.49\textwidth}
		\centering
		\includegraphics[width=0.8\columnwidth]{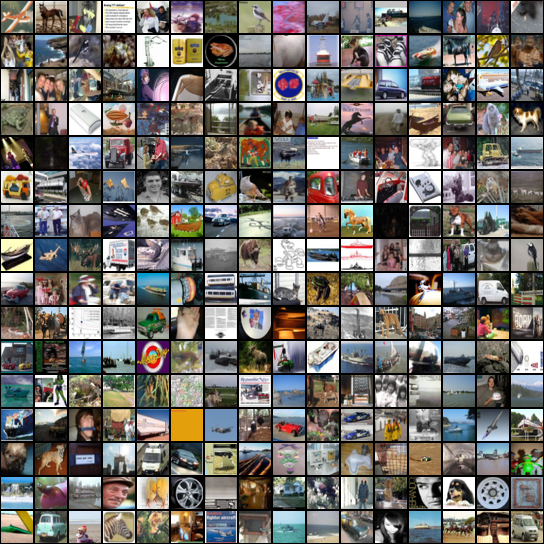}
		\\[0pt]
		\footnotesize\textbf{(b)}
	\end{minipage}
	\caption{Random images from the 80 Million Tiny Images data. 
	\textbf{(a)} Images drawn from the entire dataset. \textbf{(b)} Images 
	drawn for the subset with keywords that appeared in CIFAR-10; matching 
	keywords correlate only weakly with membership in one of the CIFAR-10 
	classes.}
	\label{fig:80mti-display} 
\end{figure}

\begin{figure}
	\centering
	\includegraphics[width=0.9\textwidth]{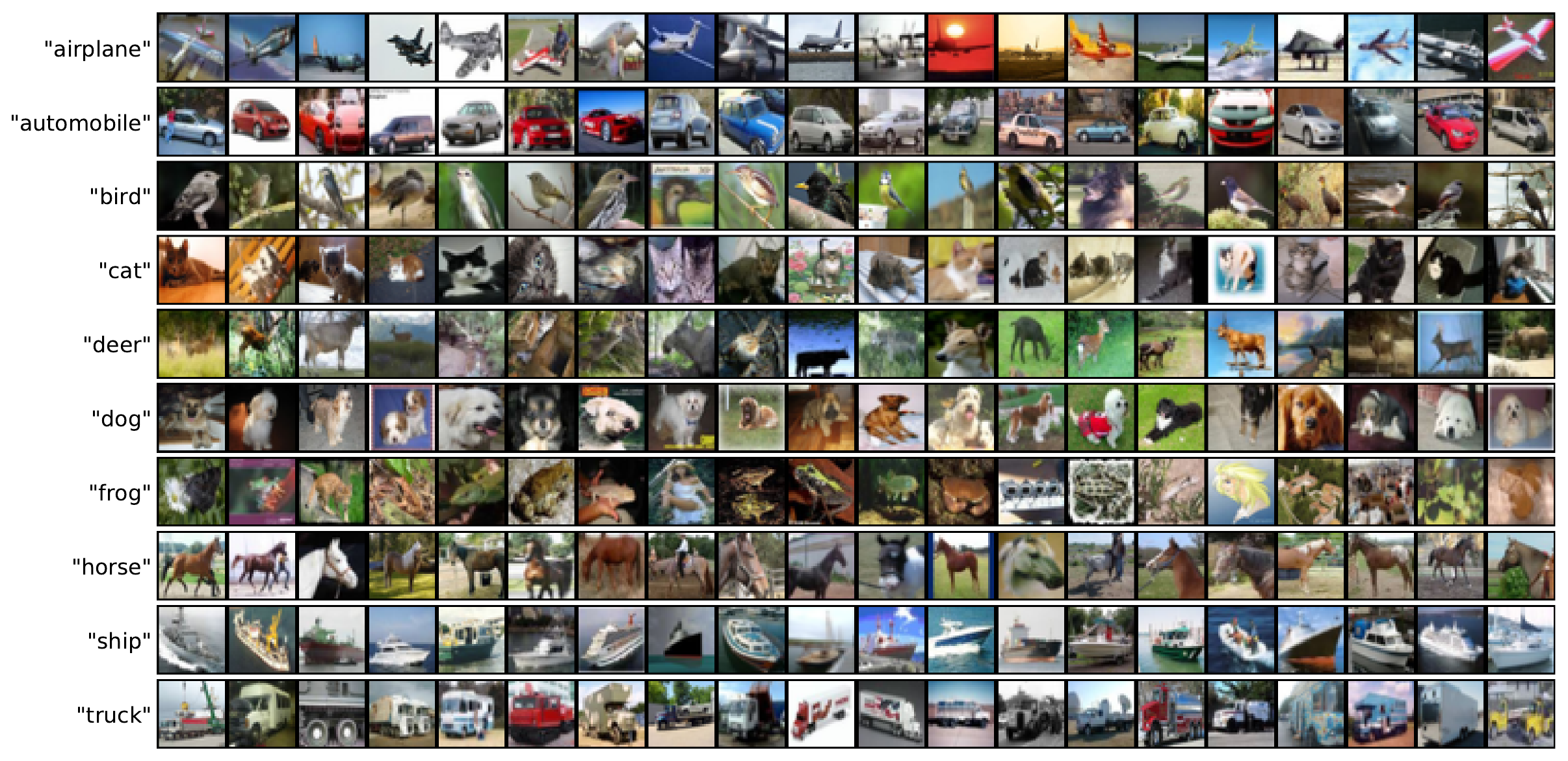}
	\caption{A random sample of our approximately class-balanced 500K 
	auxiliary images, rows correspond to class predictions made by our data 
	selection model. Note the multiple errors on ``frog'' and ``deer.''}
	\label{fig:80mti-pred-display} 
\end{figure}

\begin{figure}
	\centering
	\includegraphics[height=5cm]{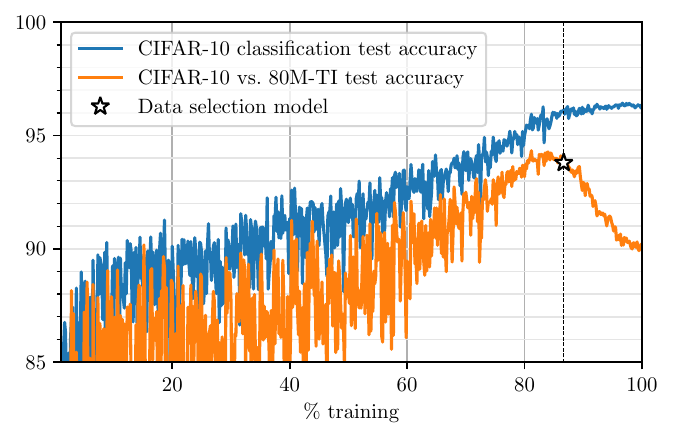}
	\caption{Test accuracy through training for the unlabeled data selection model.}
	\label{fig:cifar-vs-80mti} 
\end{figure}

 \section{Additional CIFAR-10 experiments}\label{sec:app-cifar}
\subsection{Alternative semisupervised training method}\label{sec:app-semisup}
In this section, we consider the straightforward adaptation
of \emph{virtual adversarial training} (VAT) to the (real) adversarial robustness setting of interest in this paper.

\paragraph{Training objective.}
Recall that we consider robust losses on the following form.
\begin{align*}
&\lossrob(\theta, x, y) = 
\loss(\theta, x, y) + \beta \lossreg(\theta, x),\\
&\quad\quad~\mbox{where}~~\lossreg(\theta, x)\defeq
\max_{\xadv \in\ballp(x)} \Dkl{p_\theta(\cdot \mid x)}{p_\theta(\cdot \mid \xadv)}. \nonumber
\end{align*}

The $\lossreg$ term does not require labels, and hence could be evaluated on the unlabeled data directly---exactly as done in Virtual Adversarial Training. As is commonly done in standard semisupervised learning, we also consider an additional entropy regularization term, to discourage the model from reaching the degenerate solution of mapping unlabeled inputs to uniform distributions. The total training objective is
\begin{align}
  \label{eq:alt-semisup}
  \sum\limits_{i=1}^{\nlab} \loss(\theta, x_i, y_i) + \beta \Big(\sum\limits_{i=1}^{\nlab} \lossreg(\theta, x_i) + \wun \sum\limits_{i=1}^{\nunlab} \lossreg(\theta, \xun_i)\Big)  + \lambda_\text{ent} \sum\limits_{i=1}^{\nunlab} h(p_\theta(\cdot \mid \xun_i)),
\end{align} 
where
$h(p_\theta(\cdot \mid \xun_i)) = -\sum\limits_{y \in \sY} p_\theta(y \mid \xun_i) \log p_\theta(y \mid \xun_i)$ is the entropy of the probability distribution over the class labels. We denote models trained according to this objective by $\semisupno$.

There are two differences between VAT and~\eqref{eq:alt-semisup}. First, to minimize the loss, VAT takes gradients of $\Dkl{p_\theta(\cdot \mid x)}{p_\theta(\cdot \mid \xadv)}$ w.r.t. $\theta$ only through $p_\theta(\cdot \mid \xadv)$, treating $p_\theta(\cdot \mid x)$ as a constant. Second, VAT computes perturbations $\xadv\in\balltwo(x)$ that are somewhere between random and adversarial, using a small number of (approximate) power iterations on $\hess_{\xadv}\Dkl{p_\theta(\cdot \mid x)}{p_\theta(\cdot \mid \xadv)}$.

We experiment with both the adversarial training based regularization~\eqref{eq:trades-adv}
and stability training based regularizer~\eqref{eq:trades-noise}.

\paragraph{Training details.}
We consider the alternative semisupervised objetive~\eqref{eq:alt-semisup} and compare with robust self-training. We use the same hyperparmeters as the rest of the main experiments, described in \Cref{sec:app-hyper}, and tune the additional hyperparameter $\lambda_\text{ent}$. Setting $\lambda_\text{ent} = \beta = 6$ would correspond to the entropy weight suggested in VAT~\citep{miyato2018virtual}, since they use a logarithmic loss and not KL-divergence.
 We experiment
with $\lambda_\text{ent}$ between $0$ and $6$. Due to computational  constraints, we use the smaller model $\wrn{40}{2}$ for tuning this hyperparameter in the adversarially trained models. We also perform certification via randomized smoothing on $1000$ random examples from the test set, rather than the entire test set. 

\begin{table}
  \centering
  \begin{tabular}{cccc}
  	\toprule
    Model architecture & Training algorithm & $\pgdours$ & No attack \\
    \midrule
    \wrn{40}{2} & $\baseline{adv}{50K}$ 
     &  52.1 &  81.3 \\
    \wrn{40}{2} & $\semisup{adv}{50K+500K}, \lambda_\texttt{ent}=0$ 
    &  53.6 &  81.5 \\
    \wrn{40}{2} & $\semisup{adv}{50K+500K}, \lambda_\texttt{ent}=0.1$ 
	 &  52.9 &  77.9 \\
    \wrn{40}{2} & $\semisup{adv}{50K+500K}, \lambda_\texttt{ent}=0.6$ 
	 &  43.4 &  60.0 \\
    \wrn{40}{2} & $\semisup{adv}{50K+500K}, \lambda_\texttt{ent}=3.0$ 
	 &  15.4 &  19.1 \\
    \midrule
    \wrn{28}{10} & TRADES~\citep{zhang2019theoretically} %
     &  55.4 &  84.9 \\
    \wrn{28}{10} & $\semisup{adv}{50K+500K}, \lambda_\texttt{ent}=0$ 
	&  56.5 &  83.2 \\
    \wrn{28}{10} & $\rstat{50K+500K}$ 
	 &  62.5 &  89.7 \\
    \bottomrule
  \end{tabular}
  ~
  \caption{Accuracy of adversarially trained models against our PG attack. We see that VAT-like consistency-based regularization produces only minor gains over a baseline without unlabeled data, significantly underperforming robust self-training.}
  \label{table:semisup-adv}
\end{table}

\paragraph{Results.} 
Figures~\ref{fig:semisup-l2-radius} and~\ref{fig:semisup-l2-ew} summarize the results for stability training. We see that the alternative approach also yields gains (albeit much smaller than robust self-training) over the baseline supervised setting, due to the additional unlabeled data.

Table~\ref{table:semisup-adv} presents the accuracies against $\pgdours$ for the smaller $\wrn{40}{2}$ models with different settings of $\lambda_\text{ent}$. We see a steady degradation in the performance with increasing $\lambda_\text{ent}$. For $\lambda_\text{ent}=0$, we also train a large $\wrn{28}{10}$ to compare to our state-of-the-art self-trained $\rstat{50K+500K}$ that has the same architecture. 

We see that in both adversarial training (heuristic defense) and stability training (certified defense), robust self-training significantly outperforms the alternative approach suggesting that ``locking in'' the pseudo-labels is important. This is possibly due to the fact that we are in a regime where the pseudo-labels are quite accurate and hence provide good direct signal to the model. Another possible explanation for the comparative weak performance of $\semisupno$ is that since the robustly-trained  never reaches good clean accuracy, it cannot effectively bootstrap the model's own prediction as training progresses.

\begin{figure}
	\centering
	\includegraphics[height=4.0cm]{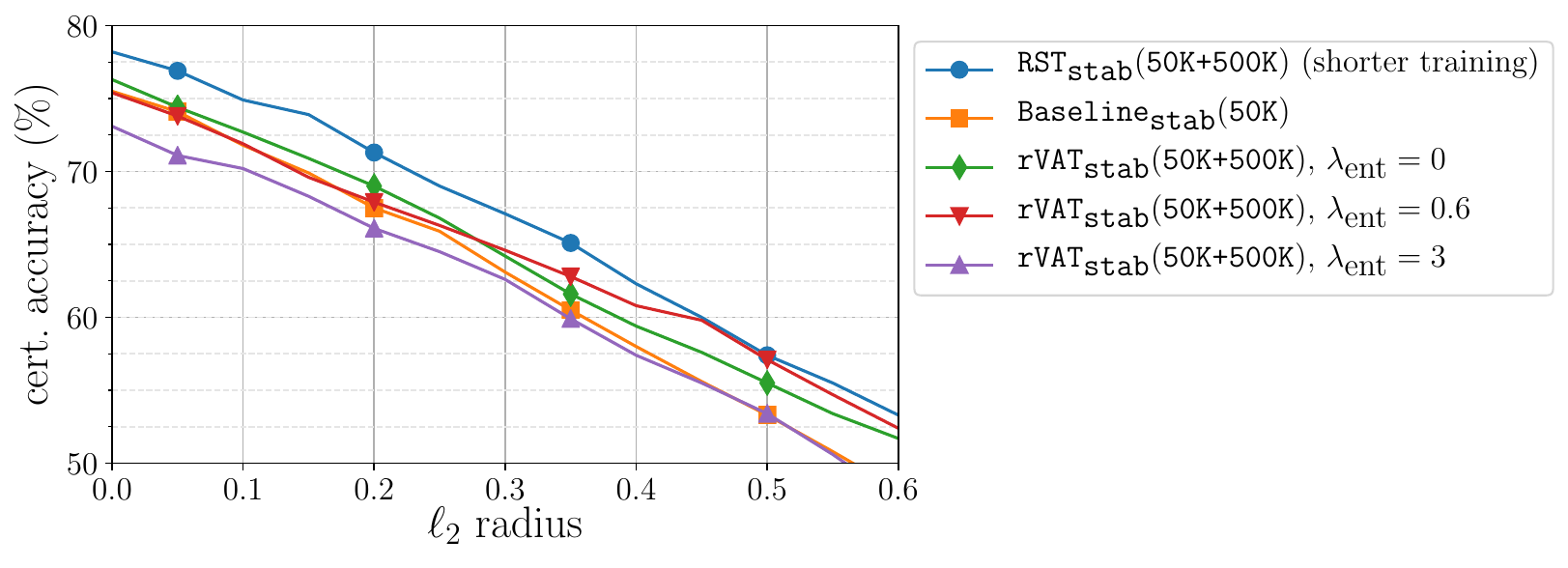}
        \caption{Certified $\ell_2$ accuracy as a function of the radius, for $\wrn{28}{10}$ trained using different semisupervised approaches on the augmented CIFAR10 dataset.}
        \label{fig:semisup-l2-radius}
\end{figure}

\begin{figure}
    \centering
    \includegraphics[height=4.0cm]{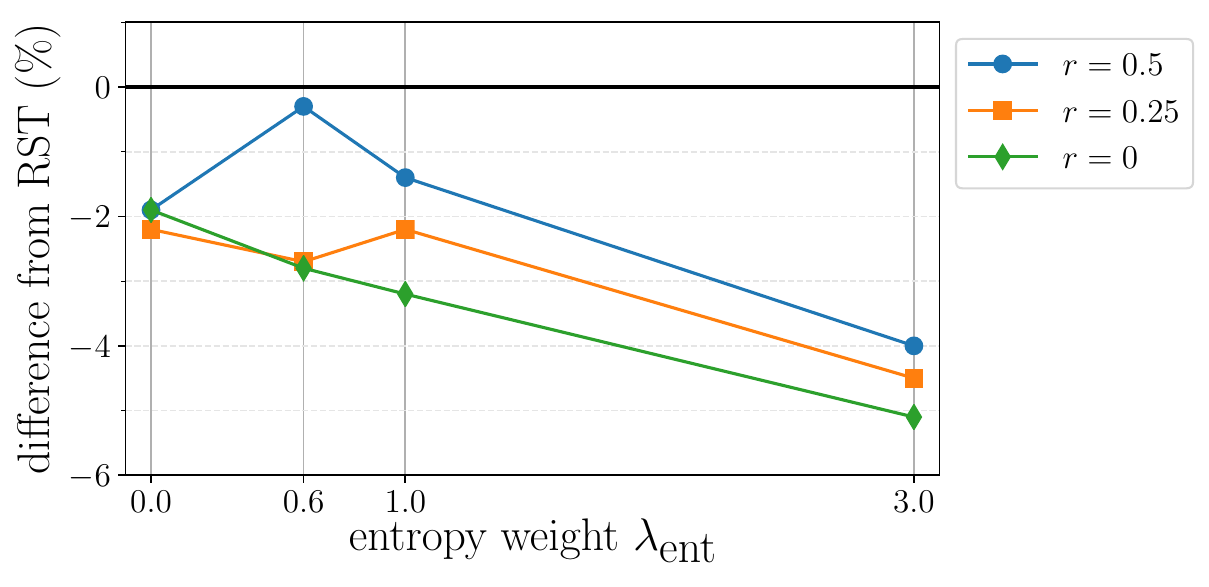}
    \caption{Effect of entropy weight on performance of stability training with alternate semisupervised approach. Larger entropy weight leads to gains in robustness at larger radii
      at a cost in robustness at smaller radii. However, robust self-training outperforms the alternative semisupervised approach over all radii, for all settings of the entropy weight hyperparameter.}
    \label{fig:semisup-l2-ew}
\end{figure}

\paragraph{Additional experiments.} VAT combined with entropy minimization is one of the most succesful semisupervised learning algorithm for standard accuracy~\citep{oliver2018realistic}. As we mentioned earlier, the objective in~\eqref{eq:alt-semisup} differs from VAT by not treating $p_\theta(\cdot \mid \xun)$ as a constant. From the open source implementation\footnote{\url{https://github.com/lyakaap/VAT-pytorch}}, we also note that batch normalization parameters are not updated on the ``adversarial'' perturbations during training.
We tried both variants on the smaller model $\wrn{40}{2}$ and observed that neither modification affected final performance in our setting. Further, we also experimented with different values of the unlabeled data weighting parameter $\wun$. We observed no noticeable improvement in final performance by changing this parameter.
\subsection{Comparison with data augmentation}\label{sec:app-augment}

Advanced data augmentation techniques such as cutout~\citep{devries2017improved} and AutoAugment policies~\citep{cubuk2019autoaugment} provide additional inductive bias about the task at hand. Can data augmentation provide gains in CIFAR-10 robust training, similar to those we observe by augmenting the CIFAR-10 dataset with extra unlabeled images?

\paragraph{Implementation details.}We use open source implementations for cutout\footnote{\url{https://github.com/uoguelph-mlrg/Cutout}} and AutoAugment\footnote{\url{https://github.com/DeepVoltaire/AutoAugment }}, and use the same training hyperparameters from the papers introducing these techniques. We first train $\wrn{28}{10}$ via standard training and reproduce the test accuracies reported in the papers~\citep{devries2017improved, cubuk2019autoaugment}. 

\paragraph{Training details.}
We perform robust supervised training, where the each batch contains \emph{augmented} (cutout/autoaugment in additional to random crops and flips) of the CIFAR10 training set.

We use the same training setup with which we performed robust self-training, as described in \Cref{sec:app-hyper}, except we applied augmentation instead of adding unlabeled data, and that we perform stability training for only 39K steps.
Note that this model and optimization configuration are identical to those used in the AutoAugment paper~\citep{cubuk2019autoaugment}, except increasing batch size to $256$.

\begin{table}
  \centering
  \begin{tabular}{ccc}
  	\toprule
    Model & $\pgdours$ & No attack \\
    \midrule
    TRADES~\citep{zhang2019theoretically}& 55.4 &  84.9\\
    $\baseline{adv}{50K}$       &  47.5 &  84.8 \\
    + Cutout~\citep{devries2017improved} &  51.2 &  85.8 \\
    + AutoAugment~\citep{cubuk2019autoaugment} &  47.4 &  84.5 \\
    $\rstat{50K+500K}$  &  62.5 &  89.7 \\
    \bottomrule
  \end{tabular}
  ~
  \caption{Accuracy of adversarially trained models against $\pgdours$.
    We see that cutout provides marginal gains and AutoAugment leads to slightly worse performance than our baseline that only uses the standard crops and flips.}
  \label{table:aug-adversarial}
\end{table}

\begin{figure}
  \centering
  \includegraphics[height=4cm]{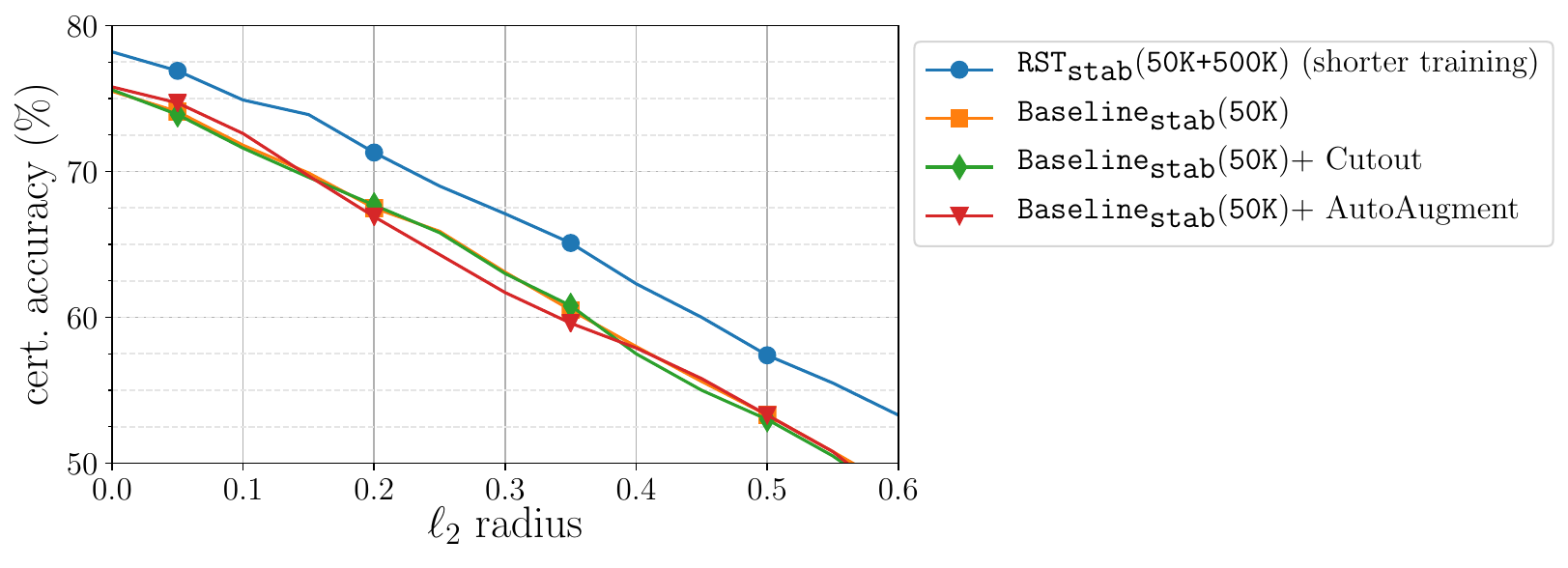}
  \caption{Comparing certified $\ell_2$ accuracy as a function of certified radius, computed via randomized smoothing for different data augmentation methods. }
  \label{fig:aug-smoothing}
\end{figure}

\paragraph{Results.}
Table~\ref{table:aug-adversarial} presents the results on accuracy of the heuristic adversarially trained models against our PG variant $\pgdours$ for $\epsilon=8/255$. We also tabulate the performance of $\baseline{adv}{50K}$, which doesn't use any unlabeled data or augmentation---as we show in \Cref{sec:app-hyper-comp}, this configuration produces poor results due to overfitting. We see that AutoAugment offers no improvement over $\baseline{adv}{50K}$, while cutout ofer a 4\% improvement that is still far from the performance~\citep{zhang2019theoretically} attains with just early stopping. In  \Cref{fig:aug-training} we plot training training traces, and show that AutoAugment fails to prevent overffiting, while cutout provides some improvement, but is till far away from the effect of robust self-training. 

We also perform certification using randomized smoothing on the stability trained models.
We use the same certification parameters as our main experiments $(N_0=100, N=10^4, \alpha=10^{-3}, \sigma=0.25)$ and compute the certificate over every 10th image in the CIFAR-10 test set (1000 images in all) for all the models. As we only train for 31K steps, we compare the results to an $\rstst{50K+500K}$ model trained for the same amount of steps (and evaluated on the same subset of images). We plot the results  to obtain Figure~\ref{fig:aug-smoothing}. As can be seen, data augmentation provides performance essentially identical to that of a baseline without augmentation.  Overall, it is interesting to note that the gains provided by data augmentation for standard training do not manifest in robust training (both adversarial and stability training).

\begin{figure}
  \centering
    \centering
    \includegraphics[height=6cm]{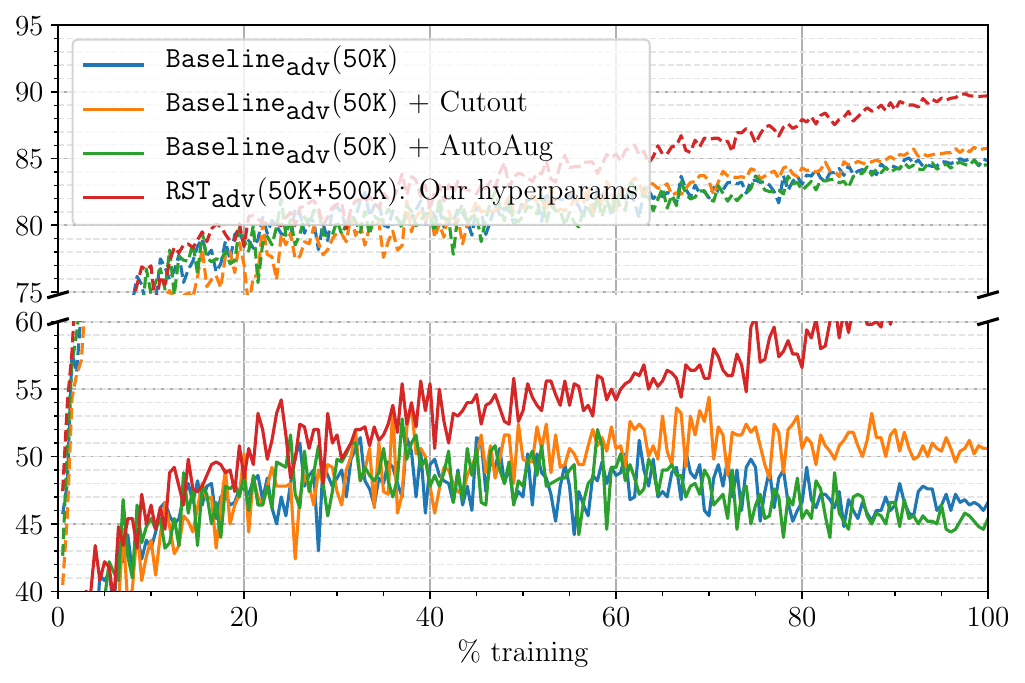}
    ~
  \caption{Training traces adversarial training with robust self-training and various forms of data augmentation. Dashed lines show standard accuracy on the entire CIFAR-10 test set, and whole lines show robust accuracy against $\pgdtrades$ evaluated on the first 500 images in the CIFAR-10 test set. AutoAugment shows the same overfitting as standard augmentation, while cutout mitigates overfitting, but does not provide the gains of self-training.}
  \label{fig:aug-training}
\end{figure}
\subsection{Effect of unlabeled data relevance}
\label{sec:app-relevance}

Semisupervised learning algorithms are often sensitive to the relevance of the unlabeled data. That is, if too much of the unlabeled data does not correspond to any label, the learning algorithm might fail to benefit from the unlabeled dataset, and may even produce worse results simply using labeled data only. This sensitivity was recently demonstrated in a number of semisupervised neural network training techniques~\cite{oliver2018realistic}. In the simple Gaussian model of \Cref{sec:theory}, our analysis in \Cref{sec:app-irrelevant} shows that any fixed fraction of relevant unlabeled data will allow self-training to attain robustness, but that the overall sample complexity grows inversely with the square of the relevant fraction.

To test the practical effect of data relevance on robust self-training, we conduct the following experiment. We draw 500K random images from the 80M-TI with CIFAR-10 test set near-duplicates removed (see \Cref{app:sourcing}), which rarely portray one of the CIFAR-10 classes (see \Cref{fig:80mti-display}a); this is our proxy for irrelevant data. We mix the irrelevant dataset and our main unlabeled image dataset with different proportions, creating a sequence of unlabeled datasets with a growing degree of image relevance, each with 500K images.

We then perform adversarial- and stability- robust self-training on each of those datasets as described in \Cref{sec:experiments} and \Cref{sec:app-hyper}, except here we use a smaller \wrn{40}{2} model to conserve computation resources. We evaluate each of those models as in~\Cref{sec:benefit}, and compare them to a fully supervised baseline. For stability training we train the baseline as in~\Cref{sec:benefit}, except with \wrn{40}{2}. For adversarial training, in view of our findings in~\Cref{sec:app-hyper-comp}, we train our baseline with \wrn{40}{2} and the training hyperparameters of~\citep{zhang2019theoretically} ($\beta$ is still 6). Here, we did not observe overfitting in the training trace, and therefore used the model from the final epoch of training. This baseline achieves 81.3\% standard accuracy and 52.1\% robust accuracy against $\pgdours$. 

In \Cref{fig:relevance} we plot the difference in $\pgdours$/certified accuracy as a function of the relevant data fraction. We see the performance grows essentially monotonically with data relevance, as expected. We also observe that smoothing seems to be more sensitive to data relevance, and we see performance degradation for completely irrelevant and 20\% relevant data. For adversarial training we see no performance degradation, but the gain with completely irrelevant data is negligible as can be expected. Finally, at around 80\% relevant data performance seems close to that of 100\% relevant data. This is perhaps not too surprising, considering that even our ``100\% relevant'' data is likely somewhere between 90-95\% relevant (see \Cref{app:sourcing}). This is also roughly in line with our theoretical model, where there is a small degradation in performance for relevant fraction $\alpha=0.8$.

\begin{figure}
	\centering
	\begin{minipage}[t]{0.495\textwidth}
		\centering
		\includegraphics[height=4.0cm]{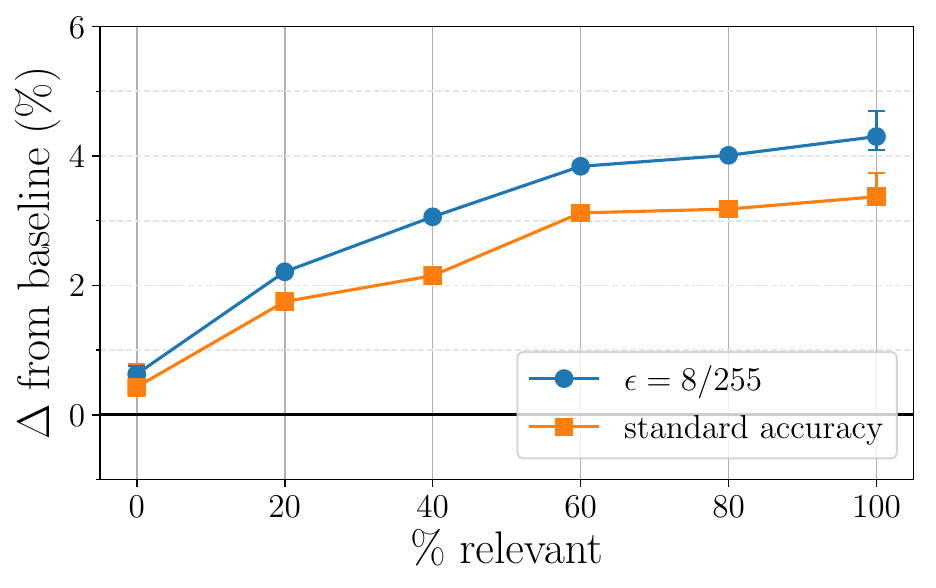}
		\\[0pt]
		\footnotesize\textbf{(a)}
	\end{minipage}
	\begin{minipage}[t]{0.495\textwidth}
		\centering
		\includegraphics[height=4.0cm]{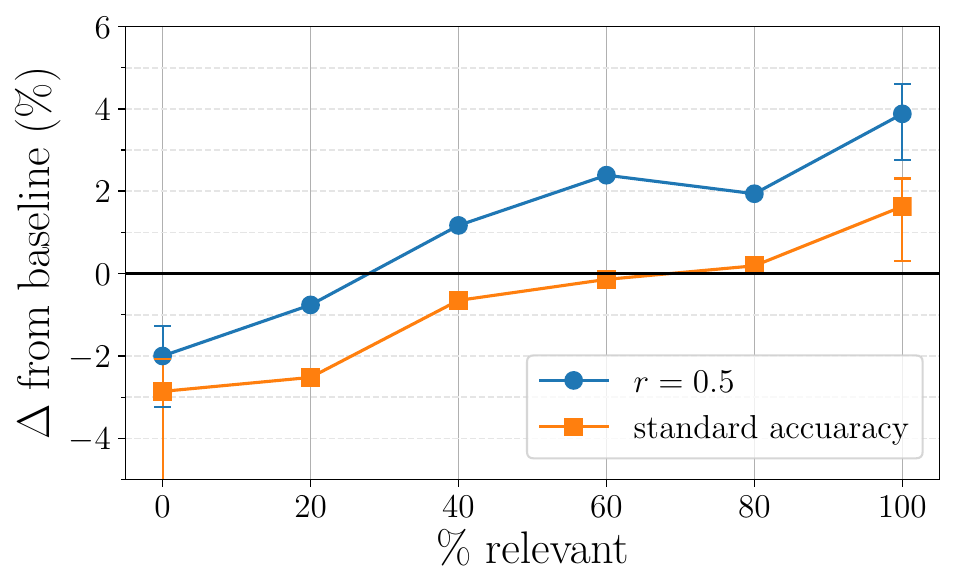}
		\\[0pt]
		\footnotesize\textbf{(b)}
	\end{minipage}
	\caption{Performance improvement as a function of unlabeled data relevance, for \wrn{40}{2}  models, relative to training without unlabeled data. When present, error bars indicate the  range of variation (minimum to maximum) over 5 independent runs. \textbf{(a)} Difference in standard accuracy and in accuracy under the $\ell_\infty$ attack $\pgdours$, for adversarially-trained models. \textbf{(b)} Difference in standard accuracy and in certified $\ell_2$ accuracy for smoothed stability-trained models.}
	\label{fig:relevance} 
\end{figure}

\subsection{Effect of unlabeled data amount}\label{sec:app-amount}

We have conducted all of our previous experiments on an unlabeled dataset 
of fixed size (500K images). Here we test the effect of the size of the 
unlabeled dataset on robust self-training performance. The main question 
we would like to answer here is how much further performance gain can we 
expect from further increasing the data set: should we expect another 7\% 
improvement over~\citep{zhang2019theoretically} by adding 500K more 
images (of similar relevance), or perhaps we would need to add 5M images, 
or is it the case that the benefit plateaus after a certain amount of unlabeled 
data?

There is a number of complicating factors in answering this question. First, 80M-TI does not provide us much more than 500K relevant unlabeled images, certainly not if we wish it to be approximately class-balanced; the more images we take from 80M-TI the lower their relevance. Second, as the amount of data changes, some training hyperparameters might have to change as well. In particular, with more unlabeled data, we expect to require more gradient steps before achieving convergence. Even for 500K images we haven't  completely exhausted the possible gains from longer training. Finally, it might be the case that higher capacity models are crucial for extracting benefit from larger amounts of unlabeled data. For these reasons, and considering our computational constraints, we decided that attempting to rerun our experiment with more unlabeled data will likely yield inconclusive results, and did not attempt it.

Instead, we sub-sampled our 500K images set randomly into nested subsets of varying size, and repeated our experiment of \Cref{sec:benefit} with these smaller datasets. With this experiment we hope to get some understanding of the trend with which performance improves. Since we expect model capacity to be very important for leveraging more data, we perform this experiment using the same high-capacity \wrn{28}{10} used in our main experiment. For adversarial training, we use exactly the same training configuration for all unlabeled dataset sizes. For stability training, we also use the same configuration except we attempted to tune the number of gradient steps. For each dataset size, we started by running 19.5K gradient steps, and doubled the number of gradient steps until we no longer saw improvement in performance. For 40K extra data, we saw the best results with 39K gradient steps, and for 100K and 240K extra data we saw the best result with 78K gradient steps.
Similarly to the data relevance experiment in \Cref{sec:app-relevance}, we compare each training result with a baseline. Here the baselines are the same as those reported in \Cref{sec:benefit}: for adversarial training we compare to the publicly available model of~\citep{zhang2019theoretically}, and for stability training we use $\baseline{stab}{50K}$. 

We plot the results in \Cref{fig:amount}. As the figure shows, accuracy always grows with the data size, except for one errant data point at 40K unlabeled data with adversarial training, which performs worse than the baseline. While we haven't seen overfitting in this setting as we have when attempting to reproduce~\citep{zhang2019theoretically}, we suspect that the reason for the apparent drop in performance is that our training configuration was not well suited to so few unlabeled data. We also see in  the plot that the higher the robustness, the larger the benefit from additional data. 

The experiment shows that 100K unlabeled data come about halfway to 
achieving the gain of 500K unlabeled data. Moreover, for the most part the 
plots appear to be concave, suggesting that increase in data amount 
provides diminishing returns---even on a logarithmic scale. Extrapolating 
this trend to higher amounts of data would suggest we are likely to require 
very large amount of data to see another 7\% improvement. However, the 
negative value at 40K unlabeled data hints at the danger of trying to 
extrapolate this figure---since we haven't carefully tuned the training at 
each data amount  (including the one at 500K), we cannot describe any 
trend with confidence. At most, we can say that under our computation 
budget, model architecture and training configuration, it seems likely the 
benefit of unlabeled data plateaus at around 500K examples. It also seems 
likely that as computation capabilities increase and robust training 
improves, the point of diminishing returns will move towards higher data 
amounts.

\begin{figure}
	\centering
	\begin{minipage}[t]{0.495\textwidth}
		\centering
		\includegraphics[height=4.0cm]{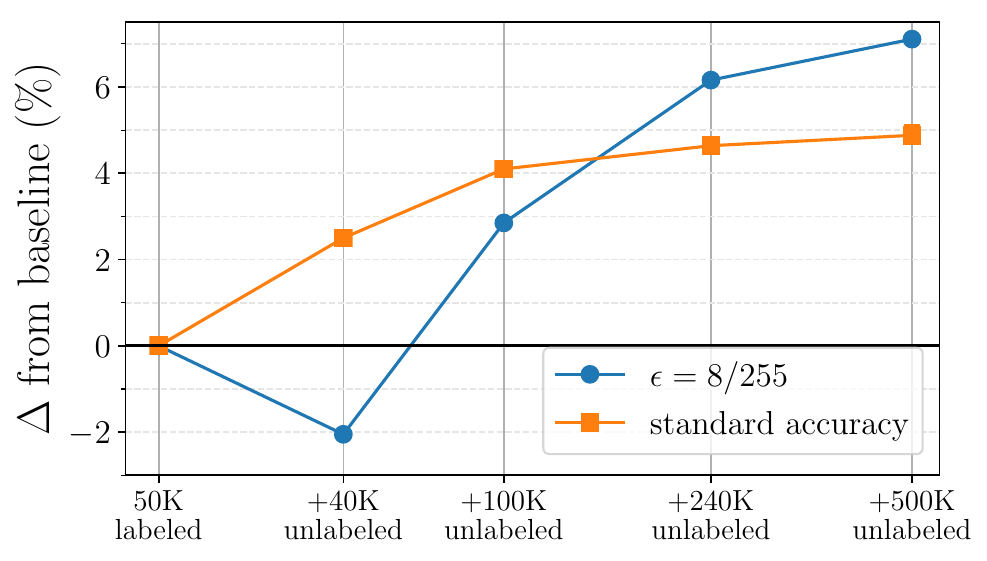}
		\\[0pt]
		\footnotesize\textbf{(a)}
	\end{minipage}
	\begin{minipage}[t]{0.495\textwidth}
		\centering
		\includegraphics[height=4.0cm]{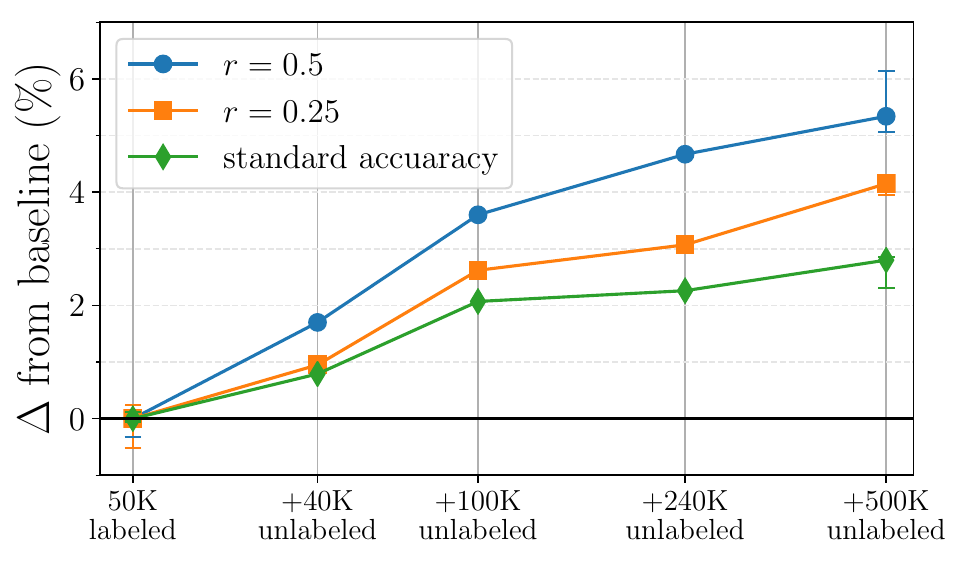}
		\\[0pt]
		\footnotesize\textbf{(b)}
	\end{minipage}
	\caption{Performance improvement as a function of unlabeled data 
	amount, relative to training without unlabeled data. When present, error 
	bars indicate the  range of variation (minimum to maximum) over 3 
	independent runs. \textbf{(a)} Difference in standard accuracy and in 
	accuracy under the $\ell_\infty$ attack $\pgdours$, for 
	adversarially-trained models. \textbf{(b)} Difference in standard accuracy 
	and in certified $\ell_2$ accuracy for smoothed stability-trained models, 
	at different radii.}
	\label{fig:amount} 
\end{figure}
\subsection{Effect of label amount}\label{sec:app-label-amount}

To complement the unlabeled data amount experiment 
of~\Cref{sec:app-amount}, we study the effect of the \emph{labeled} 
data amount. Here too, we only consider smaller amounts of labeled data 
than in our main experiment, since no additional CIFAR-10 labels are 
readily available. The main effect of removing some of the labels is that the 
accuracy of pseudo-labels decreases. However, since the main motivation 
behind robust self-training is that the pseudo-labels only need to be 
``good enough,'' we expect fewer labels to still allow significant gains from 
unlabeled data.

To test this prediction, we repeat the experiments of~\Cref{sec:benefit} with 
the following modification. For each desired label amount 
$\nn\in\{2, 4, 8, 16, 32\}\text{K}$, we pick the first $\nn$  images from a 
random permutation of 
CIFAR-10 to be our labeled data, and consider an unlabeled dataset of 
$50\text{K}-\nn+500{K}$ images comprised of the remaining CIFAR-10 
images and our 500K unlabeled images.
We  
train a classification model on only the labeled subset, using the same 
configuration as in the ``pseudo-label generation'' paragraph 
of~\Cref{sec:app-hyper}, and apply that model on the unlabeled data to 
form pseudo-labels. We then repeat the final step of robust self-training 
using 
these pseudo-labels, with parameters as in~\Cref{sec:app-hyper}. Adding 
the remainder of CIFAR-10 without labels to our unlabeled data keeps the 
total dataset fixed. This allows us to isolate the 
effect of the quality of the pseudo-labels and hence the label amount on robust self-training. 

\Cref{fig:label-amount} summarizes the results of this experiment, which 
are consistent with our expectation that robust self-training remains 
effective even with a much smaller number of labels. In particular, for both 
adversarial robust self-training (\Cref{fig:label-amount}a) and 
stability-based certified robust self-training (\Cref{fig:label-amount}b), 8K 
labels combined with the unlabeled data allow us to obtain comparable  
robust accuracy to that of the state-of-the-art fully supervised method. Moreover, for 
adversarial-training we obtain robust accuracy only 2\% lower than the 
supervised state-of-the-art with as few as 2K labels. In this small labeled 
data regime, we also see that the standard accuracy of the resulting robust 
model is slightly higher than the 
pseudo-label generator accuracy (the dotted black line in 
\Cref{fig:label-amount}a).

Two remarks are in order. First, in the low-label regime we can likely attain 
significantly better results by improving the accuracy of the pseudo-labels using standard semisupervised learning. The results in \Cref{fig:label-amount} 
therefore constitute only a crude lower bound on the benefit of unlabeled 
data when fewer labels are available. Second, we note that the creation of 
our 500K unlabeled dataset involved a ``data selection'' classifier trained 
on all of the CIFAR-10 labels, and we did not account for that in the 
experiment above. Nevertheless, as the data selection model essentially simulates a situation where unlabeled data is generally relevant, we believe that our experiment 
faithfully represents the effect of label amount (mediated through 
pseudo-label quality) on robust self-training. Further, our experiments on the effect of the relevance of unlabeled data (described in Appendix~\ref{sec:app-relevance}) suggest that using a slightly worse data selection model by training only on a subset of CIFAR-10 labels should not change results much. 

\begin{figure}
	\centering
	\begin{minipage}[t]{0.495\textwidth}
		\centering
		\includegraphics[height=5.0cm]{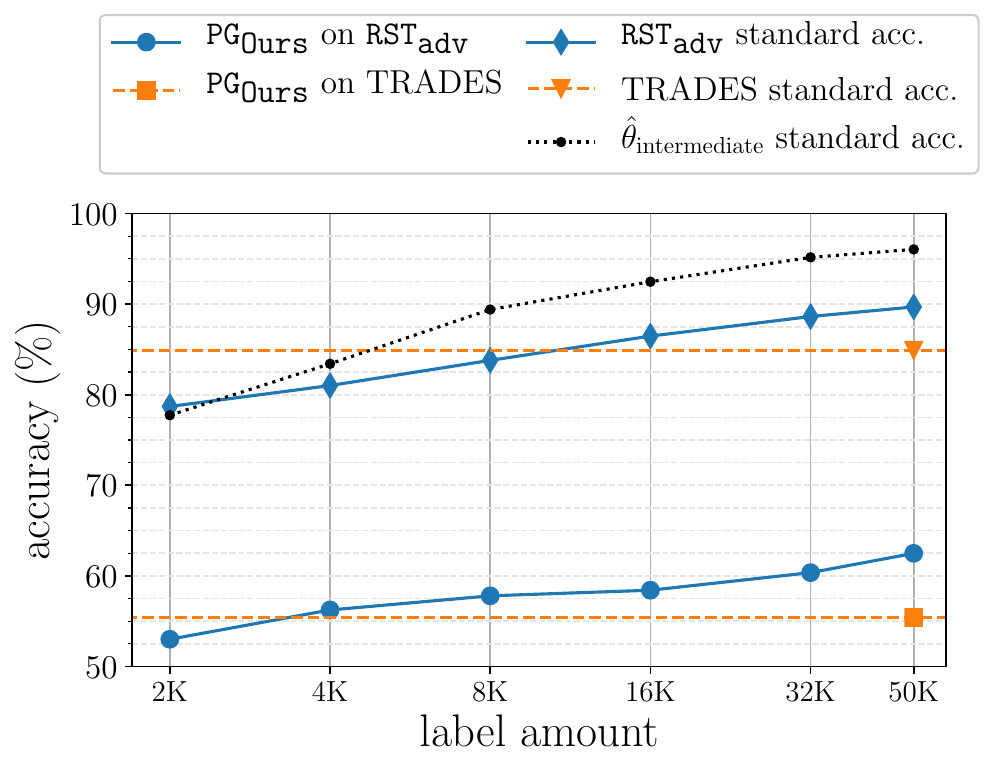}
		\\[0pt]
		\footnotesize\textbf{(a)}
	\end{minipage}
	\begin{minipage}[t]{0.495\textwidth}
		\centering
		\includegraphics[height=5.0cm]{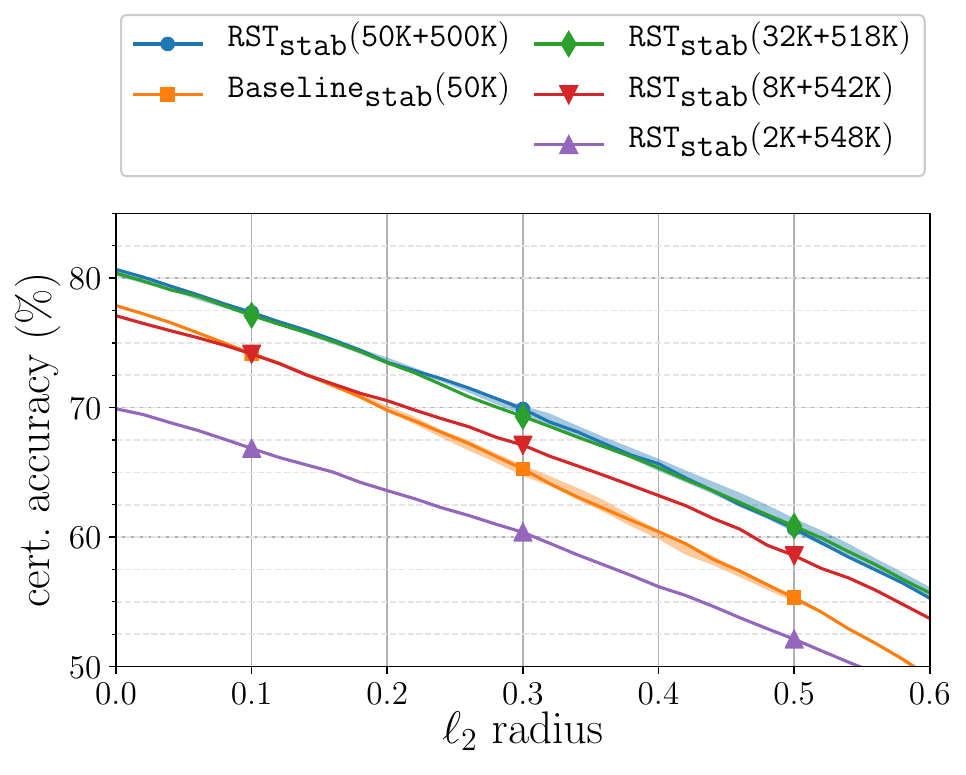}
		\\[0pt]
		\footnotesize\textbf{(b)}
	\end{minipage}
	\caption{Performance for varying labeled data 
		amount, compared to the fully supervised setting. \textbf{(a)}
		Standard accuracy and  
		accuracy under the $\ell_\infty$ attack $\pgdours$, for 
		adversarially-trained models, as a function of label amount. The 
		dotted line indicates the accuracy of the pseudo-label generation 
		model, which we train with the labeled data only. \textbf{(b)}
		Certified robust accuracy as a function of $\ell_2$ radius for different 
		label amounts, and our fully supervised baseline; 
		$\rstst{{\it{a}}+\it{b}}$ 
		denotes the stability trained model with $a$ labeled data and $b$ 
		unlabeled data, which consists of the $50\text{K}-a$ de-labeled 
		CIFAR-10 
		images 
		and our 500K unlabeled dataset.}
	\label{fig:label-amount} 
\end{figure}
\subsection{Standard self-training}\label{sec:app-sst}
We also test whether our unlabeled data can improve standard accuracy. We perform standard self-training to obtain the model $\sst{50K+500K}$; all training parameters are identical to $\rstst{50K+500K}$, except we do not add noise. $\sst{50K+500K}$ attains test accuracy 96.4\%, a 0.4\% improvement over standard supervised learning with identical training parameters on 50K labeled dataset. 
This difference is above the training variability of around 0.1$\%~$\citep{zagoruyko2016wide, cubuk2019autoaugment}, but approaches like aggressive data augmentation~\citep{cubuk2019autoaugment} provide much larger gains for the same model (to 97.4\%).

\subsection{Performance against different $\pgdno$ attack radii}\label{sec:app-radii}

In Figure~\ref{fig:cifar-radii}, we evaluate robustness of our state-of-the-art $\rstat{50K+500K}$ on a range of different values of $\epsilon$ and compare to TRADES~\cite{zhang2019theoretically}, where we fine-tune attacks for each value of $\epsilon$ separately for each model (see \Cref{sec:app-attack}). We see a consistent gain in robustness across the different values of $\epsilon$. 

\begin{figure}
  \centering
    \includegraphics[width=0.5\textwidth]{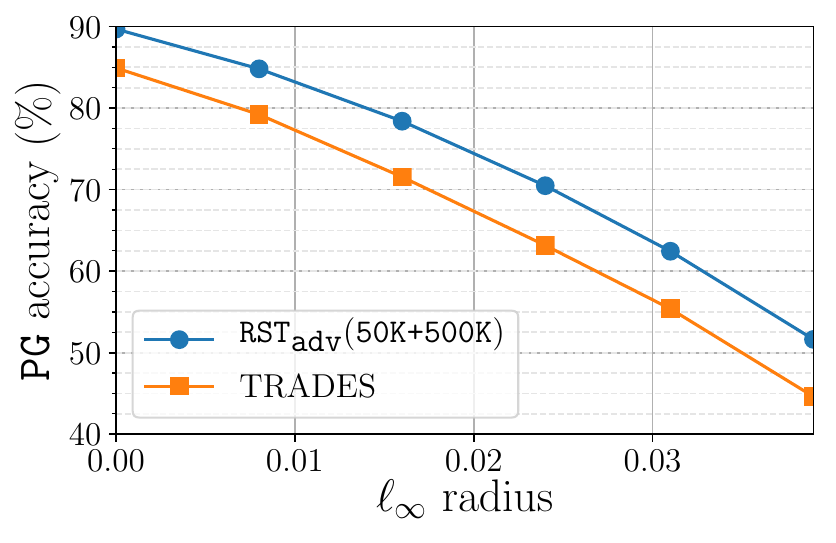}
      \caption{Comparing $\ell_\infty$ accuracy to tuned $\pgdno$ for various $\epsilon$. $\rstat{50K+500K}$ has higher accuracy than a state-of-the-art supervised model, across the range of $\epsilon$.}
      \label{fig:main-at} 
\label{fig:cifar-radii}
\end{figure}
%
\section{SVHN experiment details}\label{sec:app-svhn}

Here we give a detailed description on our SVHN experiments reported in \Cref{sec:svhn}.

\subsection{Experimental setup}
Most of our experimental setup for SVHN is the same as in CIFAR-10, except for the following differences. 

\begin{itemize}[leftmargin=12pt]
	\item \textbf{Architecture.} Throughout the SVHN experiments we use a slightly smaller \wrn{16}{8} model.
	
	\item \textbf{Robust self-training.} Since here the unlabeled data is 100\% relevant, we set $\wun=1$ by sampling every batch from a single pool containing the labeled and unlabeled data. We still use $\beta=6$ throughout; we performed brief tuning over $\beta$ to make sure that 6 is a reasonable value for SVHN.
	
	\item \textbf{Adversarial training.} We perform adversarial training with the same parameters as before and as in~\citep{zhang2019theoretically}, with one exception: we set the perturbation magnitude $\epsilon=4/255$ (we use step size 0.007 and 10 steps as in CIFAR-10). As we discuss below, adversarial training with the current configuration and $\epsilon=8/255$ produced weak and inconsistent results.
	
	\item \textbf{Stability training.} No change.
	
	\item \textbf{Input normalization.}  No change.
	
	\item \textbf{Data augmentation.}  Following~\cite{zagoruyko2016wide} we do not perform any data augmentation.
	
	\item \textbf{Optimizer configuration.} We use the same parameters as 
	before, except here we use batch size 128.
	
	\item \textbf{Number of gradient steps.}
	We run 98K gradient steps in every SVHN experiment. For stability training we attempted to double the number of gradient steps and did not observe improved results. 
\end{itemize}

\subsection{Pseudo-label generation and standard self-training.} 
To generate pseudo-labels we perform standard training of an \wrn{16}{8} model as described above on the core SVHN training set only. This model attains 96.6\% test accuracy, and we use it to generate all the pseudo-labels in our SVHN experiments. 

For comparison we repeat this procedure on the entire SVHN data (with all the labels). The resulting model has 98.2\% test accuracy. Finally, we apply standard self-training using the configuration described above, i.e. we replace the SVHN extra labels with the pseudo-labels---this corrupts 1.6\% of the extra labels (the extra data is easier to classify than the test set). Self-training produces a model with 97.1\% accuracy, similar to the 0.4\% improvement we observed on CIFAR-10, and 1.1\% short of using true labels.

\subsection{Evaluation and attack details}

We perform randomizes smoothing certification exactly as in the CIFAR-10 experiments in \Cref{sec:benefit}. 
For evaluating heuristic defenses, we fine-tune the PG attack to maximally break $\rstat{73K+531K}$ to obtain $\pgdours$ with the following parameters: step-size $\eta=0.005$, number of steps $\tau=100$ and number of restarts $\rho=10$. We evaluate models at $\epsilon=4/255$, which is the same as the value we used during training. 

Compared to CIFAR-10, we find that we require larger number of steps for 
SVHN attacks. Interestingly, for $\epsilon=8/255$ (which is what we 
evaluate our CIFAR-10 models on), we find that even after $1000$ steps, we 
see a steady decrease in accuracy (when evaluated over a small random 
subset of the test set). However, for a smaller value of $\eps=4/255$ 
(which is what we finally report on), we see that the accuracies seem to 
saturate after $100$ steps of the attack. 

\newcolumntype{C}[1]{>{\Centering\arraybackslash}p{#1\linewidth}}
\newcolumntype{L}[1]{>{\RaggedRight\arraybackslash}p{#1\linewidth}}
\newcolumntype{R}[1]{>{\RaggedLeft\arraybackslash}p{#1\linewidth}}
\newcommand\mR[2]{\multicolumn{1}{R{#1}}{#2}} %

\begin{table}
	\centering
	\begin{tabular}{L{0.22}L{0.08}L{0.08}L{0.08}L{0.08}}
		\toprule
		\mR{0.3}{$\ell_2$ radius:} &  0 &   0.22 &   0.435 &   0.555 \\
		\mR{0.3}{Enclosed $\ell_\infty$ radius:} &  0 &   1/255 &   2/255 &   
		0.01 \\
		Model &&&&  \\ 
		\midrule
		$\baseline{stab}{604K}$ &  93.6 &  84.9 &  70.0 &  59.8 \\
		$\rstst{73K+531K}$  &  93.2 &  84.5 &  69.7 &  59.5 \\
		$\baseline{stab}{73K}$ &  90.1 &  80.2 &  65.0 &  55.0 \\
		\bottomrule
	\end{tabular}

	\vspace{6pt}
	\caption{SVHN certified test accuracy (\%) for different $\ell_2$ 
	perturbations radii, and the $\ell_\infty$ certified robustness they imply.}
	\label{table:svhn-additional}
\end{table}

\subsection{Comparison with results in the literature}
In the context of adversarial robustness, SVHN was not studied extensively in the literature, and in most cases there are no clear benchmarks to compare to. Unlike CIFAR-10, there is no agreed-upon benchmark perturbation radius for heuristic $\ell_\infty$ defenses. Moreover, we are not aware of a heuristic SVHN defense that withstood significant scrutiny. A previous heuristic defense~\citep{kannan2018adversarial} against attacks with $\epsilon=12/255$ was subsequently broken~\citep{engstrom2018evaluating}. In~\citep{schmidt2018adversarially}, the authors study SVHN attacks with $\epsilon=4/255$ but do not tabulate their results. Visual inspection of their figures indicates that we get better robust accuracies than~\citep{schmidt2018adversarially} by over 7\%, likely  due to using a higher capacity model, better training objectives and better hyperparameters.

Two recent works constitute the state-of-the-art for certified robustness in 
SVHN. \citet{cohen2019certified} study randomized smoothing certification 
of $\ell_2$ robustness and report some results for SVHN, but do not 
tabulate them and did not release a model. Their figure shows a sharp 
cutoff at radius 0.3, suggesting a different input normalization than the one 
we used. In view of our comparison in \Cref{sec:app-compare-stab}, it 
seems likely that our model attains higher certified accuracy. 
\citet{gowal2018effectiveness} propose interval bound propagation for 
certified $\ell_\infty$ robustness. They report a model with 85.2\% 
standard accuracy and 62.4\% certified robust accuracy against 
$\ell_\infty$ attacks with $\epsilon=0.01$.

In \Cref{table:svhn-additional} we list selected point from \Cref{fig:svhn}, showing certified accuracy as a function of $\ell_2$ perturbation radius. For each $\ell_2$ radius we also list the radius of the largest $\ell_\infty$ ball contained within in it, allowing comparison between our results and~\citep{gowal2018effectiveness}. At the $\ell_2$ radius that contains an $\ell_\infty$ ball of radius $0.01$ we certify accuracy of 59.8\%, less than 3\% below than the result of~\citet{gowal2018effectiveness}. This number is likely easy to improve by tuning $\sigma$ and $\beta$ used in stability training, situating it as a viable alternative to interval bound propagation in SVHN as well as CIFAR-10.

\section{Comparison to \citet{uesato2019are}}
\label{sec:app-comparison}

Independently from our work, \citet{uesato2019are} also study semisupervised adversarial learning theoretically in the Gaussian model of~\citep{schmidt2018adversarially} and empirically via experiments on CIFAR-10, Tiny Images, and SVHN.
Overall, \citet{uesato2019are} reach conclusions similar to ours.
Here, we summarize the main differences between our works. 

We can understand the algorithms that~\citep{uesato2019are} propose as instances of \Cref{alg:robustself} with different choices of $\lossrob$.
In particular, their most successful algorithm (UAT++) corresponds to
\begin{equation*}
\lossrob^{\text{UAT++}}(\theta, x, y) = 
\max_{\xadv \in\ballp(x)} \loss(\theta, \xadv, y) + \lambda \lossreg(\theta, x),
\end{equation*}
where $\lossreg(\theta, x)=\max_{\xadv \in\ballp(x)} \Dkl{p_\theta(\cdot \mid x)}{p_\theta(\cdot \mid \xadv)}$ as in~\eqref{eq:trades}. In contrast, we do not maximize over $\loss$, i.e.\ we use 
\begin{equation*}
\lossrob(\theta, x, y) = 
\loss(\theta, x, y) + \lambda \lossreg(\theta, x).
\end{equation*}

\newcommand{\swrn}[2]{WRN-#1-#2}

Both this work and~\citep{uesato2019are} perform adversarial training on CIFAR-10 with additional unlabeled data from Tiny Images. Both works consider the benchmark of $\ell_\infty$ perturbations with radius $\epsilon=8/255$ and report results against a range of attacks, which in both papers includes $\pgdtrades$~\cite{zhang2019theoretically}. 
For this attack, our best-performing models have robust accuracies within 1\% of each other (their \swrn{106}{8} is 1.1\% higher than our \swrn{28}{10}, and their \swrn{34}{8} is 1\% lower), and we obtain about 3\% higher standard accuracy. 

Beyond the algorithmic and model size difference, \citet{uesato2019are} use different training hyperparameters, most notably larger batch sizes. Additionally, to source unlabeled data from 80 Million Tiny images they use a combination of keyword filtering and predictions from a standard CIFAR-10 model. In contrast, we do not use keywords at all and train a classifier to distinguish CIFAR-10 from Tiny Images. Both works remove the CIFAR-10 dataset prior to selecting images from TI; we also remove an $\ell_2$ ball around the CIFAR-10 test set. Due to these multiple differences and similar final accuracies, we cannot determine which robust loss provides better performance.

\citet{uesato2019are} perform a number of experiments that complement ours. First, they show strong improvements in the low labeled data regimes by removing most labels from CIFAR-10 and SVHN (similar findings appear also in~\citep{zhai2019adversarially,najafi2019robustness}). Second, they demonstrate that their method is tolerant to inaccurate pseudo-labels via a controlled study. Finally, they propose a new ``MultiTargeted'' attack that reduces the reported accuracies of the state-of-the-art robust models by 3-8\%. Contributions unique to our work include showing that unlabeled data improves certified robustness via randomized smoothing and studying the effect of irrelevant data theoretically and experimentally.

\end{document}